\newcommand{\Vect}{\mathrm{vec}}
\def \our{\textsc{\mbox{GTrans}}}
\title{Transfer Learning on Edge Connecting Probability Estimation Under Graphon Model}
\author{%
  David S.~Hippocampus\thanks{Use footnote for providing further information
    about author (webpage, alternative address)---\emph{not} for acknowledging
    funding agencies.} \\
  Department of Computer Science\\
  Cranberry-Lemon University\\
  Pittsburgh, PA 15213 \\
  \texttt{hippo@cs.cranberry-lemon.edu} \\
}
\begin{document}

\maketitle

\begin{abstract}

  Graphon models provide a flexible nonparametric framework for estimating latent connectivity probabilities in networks, enabling a range of downstream applications such as link prediction and data augmentation. However, accurate graphon estimation typically requires a large graph, whereas in practice, one often only observes a small-sized network. 
 One approach to addressing this issue is to adopt a transfer learning framework, which aims to improve estimation in a small target graph by leveraging structural information from a larger, related source graph.
 In this paper, we propose a novel method, namely  $\our$, a transfer learning framework that integrates neighborhood smoothing and Gromov-Wasserstein optimal transport to align and transfer structural patterns between graphs. To prevent negative transfer, $\our$ includes an adaptive debiasing mechanism that identifies and corrects for target-specific deviations via residual smoothing. We provide theoretical guarantees on the stability of the estimated alignment matrix and demonstrate the effectiveness of $\our$ in improving the accuracy of target graph estimation through extensive synthetic and real data experiments. 
 These improvements translate directly to enhanced performance in downstream applications, such as the graph classification task and the link prediction task. 
 
\end{abstract}

\section{Introduction}
Graph data are increasingly common in numerous applications, from social networks \citep{wasserman1994social, cheng2021communities, deng2024network} to biological systems \citep{barabasi2004network} and power electronics networks \citep{yu2022congo2}. 
A fundamental task in graph learning is estimating the probability of connections between nodes, facilitating various downstream analyses.
Traditional methods for estimating connection probabilities often rely on specific parametric random graph models, including Erd\H{o}s--R\'enyi (ER) model \citep{gilbert_random_1959, erdos_on_1959}, the stochastic block model (SBM) \citep{holland_stochastic_1983} and SBM variants \citep{karrer_stochastic_2011,latouche_overlapping_2011,cai_robust_2015}, the exponential random graph model  (ERGM) \citep{lusher_exponential_2012}, and latent position model \citep{hoff_latent_2002}. 
While useful, these can be restrictive and suffer from model misspecification.




\textbf{Graphon.} To address this limitation, the graphon model provides a powerful non-parametric framework \citep{lovasz2012large, orbanz2015bayesian}.
 A graphon is a symmetric, measurable function $f: [0,1]^2 \to [0,1]$, where nodes are assigned latent positions $u_i \sim \text{Unif}[0,1]$, and edge probabilities are given by $p_{ij} = f(u_i, u_j)$. This model includes classical models as special cases—e.g., constant functions yield ER graph, and step functions yield SBMs.
Graphon models have been widely used for various downstream applications, such as data augmentation \citep{han2022g}, link prediction \citep{zhou_ood_2022}, 
causal inference under network interference \citep{li2022random}, dataset distillation \citep{yang2023does}, 
and assessing vulnerabilities in the smart grid \citep{atat2023graphon}.
In the existing literature, various methods have been developed for  estimating the connection probability under graphon model 
\citep{channarond_classification_2012, airoldi_stochastic_2013, pmlr-v32-chan14, chatterjee2015matrix, zhang2017estimating, qin_iterative_2021}. 
The estimation accuracy of these methods typically improves as the network size (i.e., number of nodes) increases.


\textbf{Motivating example. }
Small graphs severely limit graphon estimation accuracy, posing significant challenges for graphon-based downstream analysis. 
For example, in the PROTEINS-Full dataset (\cite{han2022g}), each graph represents a protein structure and consists of only 25 nodes on average. Therefore, employing standard graphon estimation strategies may lead to poor accuracy. 
Fortunately, similar domains often offer larger graphs with related structures, e.g.,  the D\&D dataset averaging 284 nodes per protein graph. Both datasets encode similar biological relationships, creating an ideal transfer learning opportunity. 
Therefore, it would be highly beneficial to develop a statistically sound methodology for \emph{transferring knowledge} from a related large graph. Such an approach could substantially enhance inference on a smaller graph, while preserving the flexibility of the graphon framework.





\textbf{Transfer learning.}
Transfer learning has emerged as a powerful framework for leveraging knowledge from data-rich domains to improve learning in data-scarce domains \citep{pan2009survey}. 
In graph settings, it has been used for GNN meta-learning, pre-training/fine-tuning and adversarial adaptation, as well as for network regression \citep{yao2019graph,hu2020pretraining,shen2020network,bruna2017community,yang2021transfer,chen2024transfer}. 
To the best of our knowledge, there is only one work \citep{jalan2024transfer} proposing a transfer learning method for graphon estimation. However,  their methodology is constrained to scenarios where the target network exists as a subset of the source network, thereby leveraging known node correspondences between networks. We aim to tackle the more practical scenario where node correspondences are unknown.
To our knowledge, no previous research addresses this gap, which presents two key challenges:
(1) Alignment problem: 
Without known node correspondences, structural patterns may transfer to non-corresponding regions, potentially degrading estimation quality.
(2) {Unsupervised learning setting}: Traditional transfer learning relies on labeled data, but graphon estimation provides no such signals for formulating clear transfer loss functions.

\textbf{Transfer learning for graphon estimation.}
To address these challenges, we propose a novel transfer learning method for \underline{g}raphon estimation via optimal \underline{trans}port and neighborhood smoothing, abbreviated as  $\our$. 
Our method consists of three key steps. \textbf{Initial estimation step: }
Given source and target adjacency matrices $\bA_s \in \{0,1\}^{n_s \times n_s}$ and $\bA_t \in \{0,1\}^{n_t \times n_t}$, $n_s > n_t$,  we first apply neighborhood smoothing to obtain initial graphon estimators $\hat\bP_s^{ini}$ and $\hat \bP_t^{ini}$ for both source and target. 
\textbf{Transferring step:} We then employ Gromov-Wasserstein (GW) optimal transport to compute the alignment matrix $\hat{\pi} \in [0,1]^{n_s \times n_t}$ using the aforementioned initial graphon estimators rather than the original adjacency matrices which contain Bernoulli noise. 
The resulting alignment matrix is then used to map the initial source graphon estimator into the target domain's latent space through a structure-preserving projection. 
\textbf{Debiasing step: }  
When the source and target graphons differ significantly (indicated by a large GW distance in the previous step), we implement an adaptive debiasing mechanism to mitigate potential negative transfer.


Our contributions can be summarized as follows: 
(1) To the best of our knowledge,  
$\our$ is the first method for graphon estimation that transfers knowledge across graphs without any known node correspondence.
(2) We establish consistency results for the proposed method, showing that the alignment matrix  
  computed using smoothed graphon estimates converges (under mild conditions) to the true optimal transport map. 
(3) 
Extensive experiments on synthetic datasets show our method consistently achieves lower estimation error than state-of-the-art alternatives. On real-world networks, it outperforms existing approaches in both graph classification via data augmentation and link prediction tasks. Our implementation is publicly available at \url{https://github.com/olivia3395/GTRANS}.

\vspace{-1em}
\def \bP{\mathbf{P}}
\def \bA{\mathbf{A}}
\section{Preliminary}
\vspace{-1em}
In this paper, we let $\bA \in \{0,1\}^{n \times n}$ denote an adjacency matrix, and  $\bP$ denote the corresponding probability matrix that generates $\bA$, i.e., $\bA_{ij} \sim \Ber(\bP_{ij})$. 
A graphon model typically assumes that each node $i$ has a latent position $u_i \in [0, 1]$ and $\bP_{ij} = f(u_i, u_j)$, where $f$ is a symmetric, measurable function on $[0, 1]^2$.  Graphon estimation refers to the estimation of the probability matrix $\bP$ under the graphon model. 
The accuracy  of the graphon estimation is typically measured using the mean squared error:
$\text{MSE}(\hat{\bP}, \bP) = \|\bP - \hat{\bP}\|_F^2/n^2$.
MSE is widely used as a standard evaluation metric in existing literature \citep{zhang2017estimating, qin_iterative_2021, pmlr-v32-chan14}.

\def \neib{\mathcal{N}}
\def \ii{i'}
\def \jj{j'}
\vspace{-1em}
\subsection{Graphon Estimation}

Broadly speaking, graphon estimation methods fall into three categories:
(1) Global low-rank techniques, including Universal Singular Value Thresholding \citep{chatterjee2015matrix,xu2018rates};
(2) Combinatorial optimization, which directly searches for node assignments \citep{gao2015rate,klopp2017oracle}
(3) Smoothing-based methods, which pool nodes with similar degree  \citep{pmlr-v32-chan14} or similar latent neighborhoods \citep{zhang2017estimating, mukherjee2019graphon}. Among these methods, neighborhood smoothing (NS) \citep{zhang2017estimating} stands out because it can achieve nearly optimal MSE among all estimators computable in polynomial time. The key idea of NS method is to estimate pairwise connection probabilities by measuring the proportion of edges between the respective neighborhoods of node pairs, effectively leveraging local structure to infer global patterns.
Further algorithmic details of NS are provided in Appendix \ref{sec:ns_details}.




\subsection{Gromov-Wasserstein Distance}

The Gromov-Wasserstein (GW) distance \cite{memoli2011gromov, memoli2014gromov} provides a principled framework for comparing metric-measure spaces, making it particularly suitable for comparing graphs of different sizes without requiring node correspondences.
GW distance and couplings (and their variations, e.g., fused GW distance \cite{vayer2020fused, nguyen2023linear}, sliced GW distance \cite{vayer2019sliced}) have proven to be very useful for comparing/aligning graphs, shapes, or distributions supported on different domains \cite{chowdhury2019gromov, arya2024gromov, xu2019gromov, gong2022gromov, maretic2022wasserstein, xu2019scalable}. 
In its most general formulation, given two cost/distance matrices $C \in \reals^{m \times m}$ and $D \in \reals^{n \times n}$, the GW distance between these two distance matrices is defined as: 
$
\min_{\pi \in \Pi(\mu, \nu)}   \sum_{i,j,k,\ell} L\big(C(i,k), D(j,l)\big) \, \pi_{ij} \pi_{kl}
$
where $\Pi(\mu, \nu)$ is the set of all couplings $\pi \in \reals^{m \times n}$ such that $\pi_{ij} \ge 0$, $\sum_{ij} \pi_{ij} = 1$ and $\sum_j \pi_{ij} = \mu_i$ and $\sum_i \pi_{ij} = \nu_j$, and $L$ is the loss function, for example $L(x, y) = (x - y)^2$.Here, $\mu$ and $\nu$ denote the uniform measures on the source and target nodes, respectively; that is,
$\mu = (1/n_S, \ldots, 1/n_S)$ and $\nu = (1/n_T, \ldots, 1/n_T)$. The minimizer $\pi^*$ of the above equation 
is called an optimal GW coupling. 
While the GW distance offers a powerful framework, the exact computation of the GW distance is a quadratic assignment problem, which is
known to be computationally intensive \citep{memoli2011gromov,vogelstein2015fast}. The GW distance has been widely used for graph alignment and transfer tasks, demonstrating its effectiveness in applications such as node embedding, cross-domain alignment, and subgraph matching \citep{ xu2019gromov, xu2019scalable,
chen2020graph,kong2023outlier, xu2020gromov}.  


\textbf{Entropic gromov-wasserstein distance for scaling up to large datasets.}
To alleviate the computational difficulty, \cite{solomon2016entropic} proposed adding an entropic regularization to the GW objective function as follows: 
$\min_{\pi \in \Pi(\mu, \nu)}  \sum_{i,j,k,\ell} L\big(C(i,k), D(j,l)\big) \, \pi_{ij} \pi_{kl} + \eps \KL\left(\pi \mid \mu \otimes \nu\right) \,.$
The product $\mu \otimes \nu$ is the product measure assigning a uniform weight of $(n_S n_T)^{-1}$.  Although the addition of an entropic regularizer does not make the optimization problem convex (unless $\eps$ exceeds a certain threshold), it significantly enhances 
computational efficiency. As elaborated in \cite{solomon2016entropic}, this regularization enables a 
Sinkhorn-like algorithm, adapted from \cite{cuturi2013sinkhorn} for the standard optimal transport problem, which is simple to implement and typically exhibits fast convergence \cite{rioux2024entropic}.



\section{Transfer Learning on Connecting Probability Estimation for Graph}

In this section, we first formalize the problem setup for transfer learning on connecting probability estimation, then present our method $\our$ in detail.

\subsection{Problem Setup}
We consider the problem of estimating the graphon for a target graph of relatively small size, leveraging information from a larger source graph that shares structural similarities. Formally, we have a 
source graph  with adjacency matrix $\mathbf{A}_s \in \{0,1\}^{n_s \times n_s}$, where $n_s = |V_s|$ is the number of nodes, and 
a target graph  with adjacency matrix $\mathbf{A}_t \in \{0,1\}^{n_t \times n_t}$, where $n_t = |V_t|$ is the number of nodes.
We assume $n_s > n_t$, i.e., the source graph is larger than the target graph.
We assume these networks are generated by the following graphon models. 
\[
\bA_{s,ij} \sim \mbox{Ber}(\bP_{s,ij}), \mbox{where } 
\bP_{s,ij} = f_s(u_{s,i}, u_{s,j}), u_{s,i} \in [0,1], u_{s,j} \in [0,1], 
\quad 1 \,\le\, i < j \,\le\, n_s. 
\]
\[
\bA_{t,ij} \sim \mbox{Ber}(\bP_{t,ij}), \mbox{where } 
\bP_{t,ij} = f_t(u_{t,i}, u_{t,j}), u_{t,i} \in [0,1], u_{t,j} \in [0,1], 
\quad 1 \,\le\, i < j \,\le\, n_t. 
\]
Here $f_s$ and $f_t$ represent the latent graphon function in the source network and target network. 
Each node $i \in \{1,\dots,n_s\}$ of the source and target  network has a latent position $u_{s,i}$ and $u_{t,i}$, respectively.

Our goal is to estimate the target probability matrix $\mathbf{P}_t$ by leveraging both the observed target graph $\mathbf{A}_t$ and knowledge transferred from the larger source graph $\mathbf{A}_s$, adaptively, when the source is indeed similar to the target.


\subsection{Proposed Method}

\textbf{Overview.} Figure \ref{fig:workflow} shows the workflow of our method $\our$, which consists of three main steps: (1) Initial graphon estimation:  $\our$ begins with observed source and target networks and computes their respective initial estimates $\hat{\bP}^{ini}_s \in [0,1]^{n_s \times n_s}$, $\hat{\bP}^{ini}_t \in [0,1]^{n_t  \times n_t}$. We obtain these initial edge probability matrices using the NS method~\cite{zhang2017estimating}. These initial estimators capture the basic structure of the respective graphs but may have limited accuracy for the target graph due to its smaller size.
(2) Transferring step: We employ optimal transport to calculate  the
alignment matrix between  $\pi$ between the source and target domains. 
Using this alignment, we transfer the source estimate to the target domain to obtain $\hat{\bP}_t^{\text{trans}}$, which is further smoothed to obtain $\hat{\bP}_t^{\text{trans2}}$. If the source-target domain shift (measured by the transport distance $d$) is below a threshold $\delta$, our final estimate is $\hat{\bP}^t=\hat{\bP}_t^{\text{trans2}}$.
  (3)  Debiasing step:  When $d>\delta$, indicating a significant domain shift, direct transfer may miss target-specific structures. To correct this, we perform a debiasing step. 
  We compute the residual matrix $\mathbf{R}_t=\hat{\bP}^{ini}_t - \hat{\bP}_t^{\text{trans2}}$, which captures structural patterns present in the target graph but not explained by transfer from the source. However, $\bR_t$ is noisy due to the small target sample. To extract a meaningful signal, we apply neighborhood smoothing on $\bR_t$ to obtain a denoised residual estimate  $\hat{\bP}^{\text{res}}$. This smoothed correction is then added back to the transferred estimate, yielding the final output $\hat{\bP}^t = \hat{\bP}_t^{\text{trans2}} + \hat{\bP}^{\text{res}}$.


\begin{figure}[!htbp]
    \centering
    \includegraphics[width=1\textwidth]{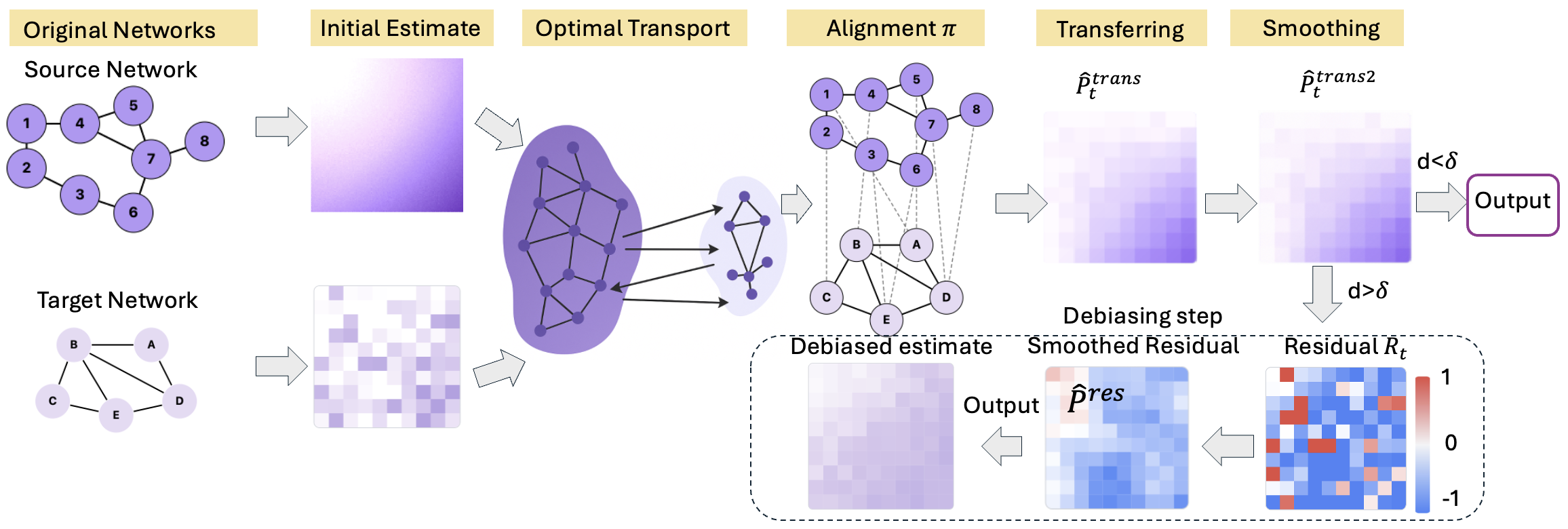} 
    \vspace{-16pt}
    \caption{Workflow of $\our$.
 Initial estimates from source and target networks are aligned via optimal transport. If source-target distance $d < \delta$, the smoothed transferred estimate is returned. Otherwise, debiasing step is applied to produce the final output.
}
    \label{fig:workflow}
\end{figure}

\vspace{-10pt}
\subsubsection{{Transferring Step}}

In this step, we align nodes between the source and target graphs using optimal transport and transfer the source graphon estimator to the target domain.  The key idea is to align nodes that exhibit similar connection patterns. We define the oracle alignment as the theoretical correspondence that would be obtained if we had access to the true underlying graph probability matrices. In practice, we don't know true underlying graph probability, researchers have been using observed adjacency matrices for computing alignment estimate.  However, each adjacency matrix represents just a single realization of a random graph, which is subject to Bernoulli noise.  
To address this limitation, we propose using a neighborhood smoothing technique to obtain the initial graphon estimates $\hat{\bP}^{ini}_t$ and $\hat{\bP}^{ini}_s$, which are shown to be consistent in \cite{zhang2017estimating}.  

In this work, we allow flexibility in choosing between the standard GW distance and its entropic regularized variant EGW for computing the optimal transport plan between two graphs. The choice between GW and EGW depends  on computational considerations: For larger source networks with limited computational resources, we recommend using EGW; Otherwise, the standard GW formulation may be preferred for its sharper alignment results.
Mathematically, in the GW setting, we compute the optimal transport plan $\hat{\pi} \in [0,1]^{n_s \times n_t}$ by minimizing the following  objective function:
$\min_{\pi \in \Pi(\mu, \nu)}   \sum_{i,j,k,\ell} L\big(\hat \bP_s^{ini}(i,k), \hat \bP_t^{ini}(j,l)\big) \, \pi_{ij} \pi_{kl}$.
For the EGW variant, we calculate $\hat{\pi}$ similarly but add an entropic regularization term.


\textbf{Advantage of using initial graphon estimates for optimal transport.} The advantage of using $\hat{\bP}^{ini}_t$ and $\hat{\bP}^{ini}_s$  is clearly demonstrated in Figure \ref{fig:alignment}. We illustrate this with a simple example: a 100-node target graph and a 500-node source graph, both generated from the same base graphon function.
\begin{wrapfigure}{r}{0.45\textwidth}
  \centering
  \vspace{-5pt}
  \includegraphics[width=0.45\textwidth]{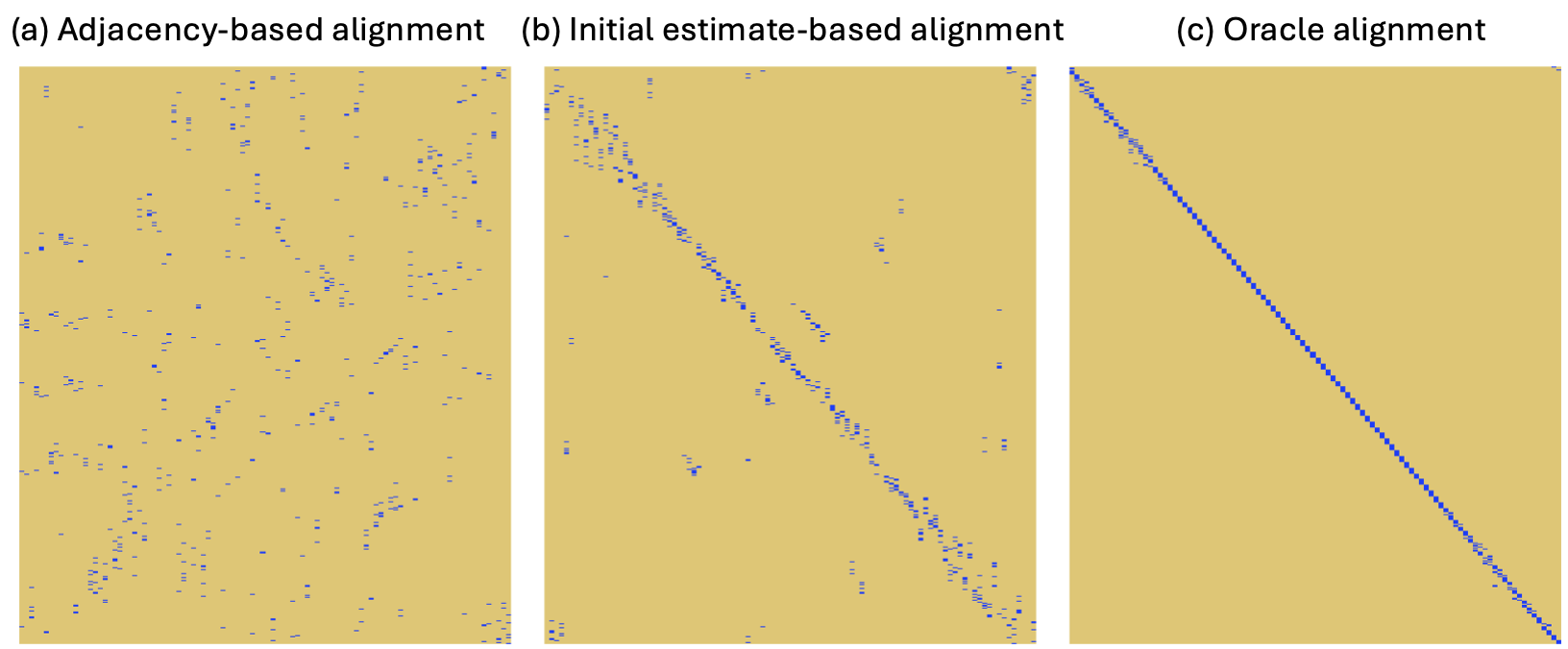}
  \caption{ Visualization of alignment matrix. 
  }
  \vspace{-20pt}
  \label{fig:alignment}
\end{wrapfigure}
We compare the alignment matrices calculated using the original noisy adjacency matrices $\bA_t$ and $\bA_s$, the initial estimated probability matrices $\hat{\bP}^{ini}_t$ and $\hat{\bP}^{ini}_s$, and the true underlying probability matrices $\bP_t$ and $\bP_s$. As shown in Figure \ref{fig:alignment}, the alignment based on raw adjacency matrices is noisy and poorly structured, deviating substantially from the ground-truth alignment. In contrast, the alignment obtained using the initial estimated probability matrices closely approximates the true alignment.

\textbf{Normalization of optimal transport plan.}
Each entry $\hat{\pi}_{ij}$ quantifies the strength of the learned correspondence between source node $i$ and target node $j$. 
Nevertheless, as a transport plan that distributes a total mass of 1 across $n_s \times n_t$ entries, the individual values of $\hat{\pi}_{ij}$ can often be quite small. Thus, we apply column normalization to the transport plan. The resulting matrix, denoted by $\tilde{\pi} \in [0,1]^{n_s \times n_t}$, is scaled such that each column sums to one. 

\textbf{Projection.} Using the estimated alignment matrix, we transfer the source graphon estimator to the target domain:
$\hat{\mathbf{P}}_t^{trans} = \tilde{\pi}^{T} \hat{\mathbf{P}}_s^{ini} \tilde{\pi}$.
This matrix transformation acts as a projection operator, effectively mapping the connectivity patterns encoded in the source probability matrix $\hat{\mathbf{P}}_s^{ini}$ onto the target domain's structure.
It produces a weighted aggregation of source graphon values, where the weights reflect the learned node correspondences. 
To further refine the transfer, we apply neighborhood smoothing to the transferred graphon estimator, obtaining a refined version $\hat{\mathbf{P}}_t^{trans2}$.
 This additional smoothing step preserves the smoothness properties of the graphon and ensures that the transferred knowledge is not only structurally aligned but also smooth.


\subsubsection{{Debiasing Step}}

When source and target graphons differ significantly, negative transfer can occur. We quantify this domain difference using the GW between $\hat{\bP}_t^{ini}$ and $\hat{\bP}_s^{ini}$.
When the distance is smaller than the threshold ($d< \delta$), we simply use the smoothed transferred estimator $\hat{\mathbf{P}}_t^{trans2}$ as our final estimator.
When this distance exceeds a predetermined threshold ($d>\delta$), we implement the following debiasing procedure to mitigate potential negative transfer effects.

Specifically, we first compute a residual matrix:
$\mathbf{R}_t = \hat{\bP}_t^{ini} - \hat{\mathbf{P}}_t^{trans2}$.
This residual matrix captures (a) target-specific structural patterns not explained by the transferred estimator and (b) random noise due to small sample size of the target graph. 
To keep the meaningful structure from part (a) but remove the noise from part (b), we then use the neighborhood smoothing method \citep{zhang2017estimating}. This method averages information from nearby nodes with similar connection patterns, which smooths out random fluctuations and keeps the stable, informative patterns. The result is a cleaner, smoothed residual estimator, $\hat{\mathbf{P}}_t^{res}$.
Finally, we combine the transferred estimator with the smoothed residual to get our final graphon estimator for the target graph:
$\hat{\mathbf{P}}_t = \hat{\mathbf{P}}_t^{trans2} + \hat{\mathbf{P}}_t^{res}$.
Adding the smoothed residual back allows the model to recover target-specific structural information that was missing from the transferred estimator, without reintroducing the random noise filtered out during smoothing.


Algorithm \ref{alg} summarizes our complete procedure, where the optimal transport can be calculated using either GW or its entropic variant EGW, and the choice depends primarily on the computational considerations.

\section{Theoretical Properties}

In this section, we present a stability analysis of the estimated alignment matrix obtained via minimizing the entropic Gromov-Wasserstein  distance. 
Although EGW has been successfully applied in a variety of domains \cite{wang2023neural, klein2024genot}, the question of the stability of the optimizer $\pi^*$ with respect to perturbations in the cost or 
distance matrices $(C, D)$ has received limited attention. 
Very recently, \cite{zhang2024gromov} first established duality theory for EGW, deriving optimal $n^{-1/2}$ empirical convergence rates and proving stability with respect to the regularization parameter and \cite{rioux2024entropic} analyzed the a certain aspect of stability of the estimator, when both $C$ and $D$ are euclidean distance matrices based on $m$ and $n$ observations respectively. 
These analyses do not address the case when (i) $C$ and $D$ are arbitrary cost matrices without the inner-product structure of Euclidean distances, and (ii) the stability of the EGW estimator under perturbations of the cost functions, both of which are particularly relevant to our transfer learning framework.
Recall that in the \textsc{GTrans} algorithm, the estimated alignment matrix $\hat{\pi}$ is obtained using EGW with inputs $C = \hat{\bP}_s^{\rm ini}$ and $D = \hat{\bP}_s^{\rm ini}$. As a result, these matrices cannot be interpreted as Euclidean distance matrices between point clouds, since they do not arise from pairwise distances in an inner-product space.
It is therefore natural to ask how far $\hat{\pi}$ is from the \emph{oracle} alignment 
matrix $\pi^*$, which minimizes the same objective using the true connectivity matrices 
$C = \bP_s$ and $D = \bP_t$.
While the stability of convex optimization problems has been extensively studied 
(see, e.g., \cite{bonnans2013perturbation} for an overview of techniques and results), these tools do not 
directly apply to our setting due to the nonconvex nature of the EGW objective. 
Our subsequent theorem showcases that the distance between $\hat \pi$ and $\pi^*$ can be upper bounded by a distance between $\hat \bP^{\rm ini}_s$ and $\bP_s$, and $\hat \bP^{\rm ini}_t$ and $\bP_t$: 
\begin{theorem}
\label{thm:main_thm}
   Let $\hat \pi$ and $\pi^*$ is the solution of EGW optimization problem  with $(C, D) = (\hat \bP_s^{\rm ini}, \hat \bP_t^{\rm ini})$ and $(C, D) = (\bP_s, \bP_t)$ respectively. If the penalty parameter $\eps$ satisfies: 
   $
   \|\pi^*\|_\infty \le \frac{\eps}{C_1\|\bP_s \otimes \bP_t\|_{\rm op}} \,,
    $
   for some $C_1 > 2$, then, 
   $$
   \left\|\hat \pi - \pi^*\right\|_F \le C_2 \frac{\left\|(\bP_s \otimes \bP_t) - (\hat \bP_s^{ini} \otimes \hat \bP_t^{ini})\right\|_{\rm op}}{\|\bP_s \otimes \bP_t\|_{\rm op}} \,.
   $$
   as soon as $\|\bP_s - \hat \bP_s^{ini}\|_\infty + \|\bP_t - \hat \bP_t^{ini}\|_\infty$ is less than a certain cutoff (mentioned explicitly in the proof), 
   for some universal constant $C_2$ that depends on $C_1$.  
\end{theorem}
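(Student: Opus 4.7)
The plan is to leverage local strong convexity of the entropic Gromov--Wasserstein objective around $\pi^*$, combined with a two-point (``primal--primal'') inequality that exploits optimality of $\hat\pi$ for the perturbed problem. First I would strip the EGW functional of all marginal-dependent constants. For $L(x,y)=(x-y)^2$, expanding the squared cost and using that $\sum_j \pi_{ij}=\mu_i$ and $\sum_i \pi_{ij}=\nu_j$ are fixed on $\Pi(\mu,\nu)$, the two summands $\sum C_{ik}^2 \pi_{ij}\pi_{kl}$ and $\sum D_{jl}^2 \pi_{ij}\pi_{kl}$ become constants on the polytope. Hence, up to an additive constant, the effective objective reduces to
\[
\Phi(\pi;C,D)\;=\;-2\,\langle C\pi D,\pi\rangle+\eps\,\KL(\pi\mid\mu\otimes\nu),
\]
whose Hessian equals $-4\,(C\otimes D)+\eps\,\mathrm{diag}(1/\pi_{ij})$. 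Since the first summand has operator norm $4\,\|C\otimes D\|_{\rm op}$ and the diagonal of the second is bounded below by $\eps/\|\pi\|_\infty$, the hypothesis $\|\pi^*\|_\infty\le \eps/(C_1\|\bP_s\otimes\bP_t\|_{\rm op})$ with $C_1$ exceeding the relevant absolute threshold makes the entropy Hessian dominate at $\pi^*$, yielding local $\lambda$-strong convexity of $\Phi(\cdot;\bP_s,\bP_t)$ with $\lambda\gtrsim \|\bP_s\otimes\bP_t\|_{\rm op}$ on a neighborhood (restricted to the tangent directions of the transport polytope).

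Next I would execute the two-point perturbation argument. Strong convexity of $\Phi(\cdot;\bP_s,\bP_t)$ along the segment from $\pi^*$ to $\hat\pi$ provides
\[
\Phi(\hat\pi;\bP_s,\bP_t)\;\ge\;\Phi(\pi^*;\bP_s,\bP_t)+\tfrac{\lambda}{2}\|\hat\pi-\pi^*\|_F^2,
\]
while optimality of $\hat\pi$ for the perturbed problem gives the reverse inequality $\Phi(\pi^*;\hat\bP_s^{\rm ini},\hat\bP_t^{\rm ini})\ge \Phi(\hat\pi;\hat\bP_s^{\rm ini},\hat\bP_t^{\rm ini})$. Summing these, the entropy and marginal-constant terms cancel exactly, leaving only the quadratic cost parts. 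These combine via the self-adjoint identity $\langle Mx,x\rangle-\langle My,y\rangle=\langle M(x-y),x+y\rangle$ with $M=(\bP_s\otimes\bP_t)-(\hat\bP_s^{\rm ini}\otimes\hat\bP_t^{\rm ini})$, which is self-adjoint on matrix space because all four matrices are symmetric. Cauchy--Schwarz together with the elementary bound $\|\pi\|_F\le 1$ available on the transport polytope then yields
\[
\tfrac{\lambda}{2}\|\hat\pi-\pi^*\|_F^2\;\le\;2\,\|M\|_{\rm op}\,\|\hat\pi-\pi^*\|_F\,\|\hat\pi+\pi^*\|_F,
\]
and rearranging, together with $\lambda\gtrsim \|\bP_s\otimes\bP_t\|_{\rm op}$, produces the claimed bound with $C_2$ absorbing all absolute constants.

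The main obstacle is that the Hessian argument for strong convexity only goes through on a region where $\|\pi\|_\infty$ stays small, and a priori I only control this at the endpoint $\pi^*$, not along the segment to $\hat\pi$. I would close this gap by a continuity/bootstrap step: fix an enlarged neighborhood of $\pi^*$ on which a slightly weaker strong-convexity constant still holds, derive the preliminary Frobenius bound above under the a priori assumption that $\hat\pi$ lies in it, and then choose the cutoff on $\|\bP_s-\hat\bP_s^{\rm ini}\|_\infty+\|\bP_t-\hat\bP_t^{\rm ini}\|_\infty$ small enough that the resulting bound forces both $\|\hat\pi-\pi^*\|_F$ and $\|\hat\pi\|_\infty$ to stay inside the region of validity. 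Continuity of the EGW minimizer in the cost matrices (supplied by the strict convexity of the entropic regularizer near $\pi^*$) lets me launch the bootstrap at a trivial perturbation level. Everything after that is elementary algebra.
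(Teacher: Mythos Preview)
Your proposal is correct and shares the paper's skeleton---reduce the EGW objective to a quadratic-plus-entropy form, establish local strong convexity of the oracle problem near $\pi^*$ via the Hessian $-2(\bP_s\otimes\bP_t+\bP_s^\top\otimes\bP_t^\top)+\eps\,\mathrm{diag}(1/\pi_{ij})$, and then run a perturbation argument---but the two executions of the perturbation step differ.

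The paper packages the argument into two abstract lemmas about perturbing a locally strongly convex function $f$ by $g$: one (its Lemma~\ref{lem:loc_min}) shows that a sup-norm bound $\|f-g\|_\infty\le\delta\le\mu\tau^2/4$ forces the minimizer of $g$ into $B(\pi^*;\tau)$, and the other (its Lemma~\ref{lem:minimize_local}) uses the Lipschitz constant $\kappa$ of $f-g$ to conclude $\|\hat\pi-\pi^*\|_F\le 2\kappa/\mu$. The paper then computes both $\|f-g\|_\infty\le\|\bP_s-\hat\bP_s\|_\infty+\|\bP_t-\hat\bP_t\|_\infty$ (which yields the explicit cutoff) and $\kappa=2\|(\bP_s\otimes\bP_t)-(\hat\bP_s\otimes\hat\bP_t)\|_{\rm op}$ (which yields the final bound). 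Your two-point argument instead adds the two optimality inequalities directly so the entropy terms cancel, and then applies the polarization identity to the residual quadratic form; this exploits the specific quadratic-plus-entropy structure and bypasses the generic Lipschitz step, arriving at the same bound with slightly less bookkeeping. For the localization step, the paper's $\|f-g\|_\infty$ route gives the cutoff $\Delta_{\rm pert}\le \eps^2/(2C_2^2 C_3\|\bP_s\otimes\bP_t\|_{\rm op})$ in closed form, whereas your continuity/bootstrap sketch is less explicit and would need a bit more care to turn into a quantitative cutoff (the paper's contradiction argument in Lemma~\ref{lem:loc_min}, using the boundary point $x(t)$, is the cleanest way to close that loop).
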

We would like to highlight that our analysis is quite general as it does not rely on any structure of $(\bP_s, \bP_t)$, which makes our analysis different from \cite{zhang2024gromov, rioux2024entropic}. The lower bound on the $\eps$ is needed to bring some local convexity structure of the problem near the oracle optimizer $\pi^*$. Such a lower bound condition for
$\eps$  is typically adopted in literature, e.g., see \citep{solomon2016entropic}. 

The following result shows that the empirical GW distance between initial estimators $\hat{\bP}_S^{\mathrm{init}}$ and $\hat{\bP}_T^{\mathrm{init}}$ closely approximates the population GW distance between $\bP_S$ and $\bP_T$, up to small estimation error.

\begin{theorem}
\label{thm:gw_similarity}
Let $\delta_{n_S}$ and $\delta_{n, T}$ denote the estimation error of $\bbP_S$ and $\bbP_T$ respectively with respect to average squared Frobenious norm, i.e. 
$$
\bbP\left(\frac{1}{n_S^2}\|\hat \bP^{\rm init}_S - \bP_S\|_F^2 \le \delta_{n_S}\right) \ge 1 - \eps_n, \quad \bbP\left(\frac{1}{n_T^2}\|\hat \bP^{\rm init}_T - \bP_T\|_F^2 \le \delta_{n_T}\right) \ge 1 - \eps_n \,.
$$
Define $\delta_n = \delta_{n_S} + \delta_{n_T}$. Then, on the intersection of the above events, we have: 
\begin{align*}
\mathrm{GW}^2_2(\bP_s, \bP_t) & \le 4 \left(\mathrm{GW}^2_2(\hat \bP^{\rm init}_S, \hat \bP^{\rm init}_T) + \delta_{n}\right) \\
\mathrm{GW}^2_2(\hat \bP^{\rm init}_S, \hat \bP^{\rm init}_T) & \le 4\left(\mathrm{GW}^2_2(\bP_s, \bP_t) + \delta_{n}\right) \,.
\end{align*}
\end{theorem}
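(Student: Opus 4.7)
The plan is to prove both inequalities as mirror images of each other via a quasi-triangle-inequality argument. On the event in the hypothesis, nothing probabilistic remains: everything is a deterministic consequence of the GW definition plus two applications of $(a+b)^2 \le 2a^2+2b^2$. I would first establish the upper bound on $\mathrm{GW}_2^2(\bP_S,\bP_T)$. Let $\hat\pi$ denote an optimal coupling attaining $\mathrm{GW}_2^2(\hat\bP_S^{\rm init},\hat\bP_T^{\rm init})$. Since $\hat\pi \in \Pi(\mu,\nu)$, it is feasible (though not necessarily optimal) for the GW problem with cost matrices $(\bP_S,\bP_T)$, and therefore
\begin{equation*}
\mathrm{GW}_2^2(\bP_S,\bP_T) \;\le\; \sum_{i,j,k,l} \big(\bP_{S,ik}-\bP_{T,jl}\big)^2 \hat\pi_{ij}\hat\pi_{kl}.
\end{equation*}

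Next I would split $\bP_{S,ik}-\bP_{T,jl} = (\bP_{S,ik}-\hat\bP_{S,ik}^{\rm init}) + (\hat\bP_{S,ik}^{\rm init}-\bP_{T,jl})$ and then further split the second summand as $(\hat\bP_{S,ik}^{\rm init}-\hat\bP_{T,jl}^{\rm init}) + (\hat\bP_{T,jl}^{\rm init}-\bP_{T,jl})$, applying $(a+b)^2\le 2a^2+2b^2$ once in each step. This yields the pointwise inequality
\begin{equation*}
\big(\bP_{S,ik}-\bP_{T,jl}\big)^2 \;\le\; 2\big(\bP_{S,ik}-\hat\bP_{S,ik}^{\rm init}\big)^2 + 4\big(\hat\bP_{S,ik}^{\rm init}-\hat\bP_{T,jl}^{\rm init}\big)^2 + 4\big(\hat\bP_{T,jl}^{\rm init}-\bP_{T,jl}\big)^2.
\end{equation*}
Summing against $\hat\pi_{ij}\hat\pi_{kl}$ and using that the marginals of $\hat\pi$ are $\mu=(1/n_S,\dots)$ and $\nu=(1/n_T,\dots)$, the first and third terms collapse to the average squared Frobenius errors, because $\sum_{j,l}\hat\pi_{ij}\hat\pi_{kl}=\mu_i\mu_k=1/n_S^2$ and similarly $\sum_{i,k}\hat\pi_{ij}\hat\pi_{kl}=\nu_j\nu_l=1/n_T^2$. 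On the good event this gives $2\delta_{n_S}$ and $4\delta_{n_T}$ respectively, while the middle term equals $4\,\mathrm{GW}_2^2(\hat\bP_S^{\rm init},\hat\bP_T^{\rm init})$ by optimality of $\hat\pi$. Bounding $2\delta_{n_S}\le 4\delta_{n_S}$ and collecting terms delivers $\mathrm{GW}_2^2(\bP_S,\bP_T)\le 4(\mathrm{GW}_2^2(\hat\bP_S^{\rm init},\hat\bP_T^{\rm init})+\delta_n)$.

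The reverse inequality is completely symmetric: let $\pi^*$ be an optimizer for $\mathrm{GW}_2^2(\bP_S,\bP_T)$, use $\pi^*$ as a suboptimal coupling in the GW problem with inputs $(\hat\bP_S^{\rm init},\hat\bP_T^{\rm init})$, and repeat the same two-step split with the roles of $\bP$ and $\hat\bP$ swapped. There is essentially no obstacle here — the argument reduces to carefully bookkeeping the constants coming out of $(a+b)^2\le 2a^2+2b^2$ so that the factor $4$ is matched, and to verifying that the marginal-collapse identity $\sum_{j,l}\pi_{ij}\pi_{kl}=\mu_i\mu_k$ correctly converts the weighted sum of $(\bP_{S,ik}-\hat\bP_{S,ik}^{\rm init})^2$ into $n_S^{-2}\|\bP_S-\hat\bP_S^{\rm init}\|_F^2$, which is exactly the quantity that $\delta_{n_S}$ controls. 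The only mild subtlety worth flagging is that no convexity, uniqueness, or metric property of GW is invoked — the argument works directly from the variational definition — so the result holds for both the standard and entropic variants, provided in the latter case the entropic term is handled consistently on both sides (it cancels when the same coupling is plugged in).
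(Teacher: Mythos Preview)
Your proposal is correct and follows essentially the same approach as the paper's proof: plug the optimal coupling for one pair of cost matrices into the other GW problem as a feasible (suboptimal) coupling, telescope the difference into three pieces, and use the marginal constraints of $\pi$ to collapse the two error terms to the normalized Frobenius errors $\delta_{n_S},\delta_{n_T}$. The only cosmetic difference is that the paper applies a single three-term bound $(a+b+c)^2 \le 4(a^2+b^2+c^2)$ in one shot, whereas you iterate $(a+b)^2\le 2a^2+2b^2$ twice and then relax $2\delta_{n_S}\le 4\delta_{n_S}$; both routes land on the same constant $4$.
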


We next justify the theoretical validity of the alignment step. The following result shows that applying the optimal GW coupling preserves $L_2$ proximity between the source and target graphons.

\begin{theorem}
\label{thm:gw_alignment_bound}
    Let $\pi^*$ be the optimal coupling of $(f, g)$ for GW-distance, i.e.
    $$
    \pi^* = \argmin_{\pi \in \Gamma} \int \int (f(x, x') - g(y, y'))^2 \ d\pi(x, y) \ d\pi(x', y') \,.
    $$
    Define $\tilde g$ (population analogue of $\hat \pi^\top \hat P_s \hat \pi$) as: 
    $$
    \tilde g(y, y') = \int_{[0, 1]^2} \pi^*(y, x)f(x, x')\pi^*(x', y) \ dx \ dx' \,.
    $$
    Then we have, 
    $$  \|\tilde g - g\|_2^2 \le GW_2^2(f, g) \,.$$
    As a consequence, when $\mathrm{GW}_2(f, g)$ is small, then $\tilde g$ is close to $g$, which validates the alignment strategy. 
\end{theorem}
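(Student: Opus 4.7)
}
The plan is to recognize $\tilde g(y,y')$ as a conditional expectation of $f(X,X')$ under the optimal coupling $\pi^*$, and then invoke the standard $L^2$ ``Pythagorean'' decomposition (i.e., that the conditional expectation is the $L^2$-projection onto the sigma-algebra generated by $(Y,Y')$). Because graphons have uniform latent marginals, the coupling $\pi^*$ is a probability measure on $[0,1]^2$ whose marginals are both uniform on $[0,1]$, so the two coordinates of $\pi^*$ have a joint law and well-defined conditionals. Specifically, if $(X,Y)\sim\pi^*$ and independently $(X',Y')\sim\pi^*$, then the defining integral of $\tilde g$ may be rewritten as
\begin{equation*}
\tilde g(y,y') \;=\; \mathbb{E}\bigl[f(X,X')\,\big|\,Y=y,\,Y'=y'\bigr].
\end{equation*}

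From there I would carry out the classical decomposition: write $f(X,X')-g(Y,Y')=\bigl(f(X,X')-\tilde g(Y,Y')\bigr)+\bigl(\tilde g(Y,Y')-g(Y,Y')\bigr)$, square both sides, and take expectation under the product measure $\pi^*\otimes\pi^*$. The cross term vanishes by conditioning on $(Y,Y')$: the inner factor $\tilde g(Y,Y')-g(Y,Y')$ is measurable with respect to $(Y,Y')$, while $\mathbb{E}[f(X,X')-\tilde g(Y,Y')\mid Y,Y']=0$ by the very definition of $\tilde g$ as a conditional expectation. This yields the identity
\begin{equation*}
\mathrm{GW}_2^2(f,g) \;=\; \mathbb{E}\bigl[(f(X,X')-\tilde g(Y,Y'))^2\bigr] \;+\; \mathbb{E}\bigl[(\tilde g(Y,Y')-g(Y,Y'))^2\bigr].
\end{equation*}

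Since $Y$ and $Y'$ are each marginally uniform on $[0,1]$ and independent across primes, the second expectation equals $\int\!\int(\tilde g(y,y')-g(y,y'))^2\,dy\,dy'=\|\tilde g-g\|_2^2$. Dropping the (nonnegative) first expectation gives the claimed bound $\|\tilde g-g\|_2^2\le \mathrm{GW}_2^2(f,g)$. The qualitative conclusion that $\tilde g\approx g$ when $\mathrm{GW}_2(f,g)$ is small is then immediate.

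The only genuinely delicate step is the measure-theoretic handling of $\pi^*$: the statement writes $\pi^*$ as if it admitted a density, but in general the optimal GW coupling may be singular (e.g., supported on a graph of a map). The cleanest way around this is to treat $\pi^*$ as a measure throughout, define $\tilde g$ via the disintegration of $\pi^*$ along its second marginal, and appeal to the existence of regular conditional distributions on Polish spaces; no density assumption is needed for the conditional-expectation argument above. Aside from this measure-theoretic bookkeeping, the proof is a one-line application of the orthogonality principle, and I do not foresee any other obstacles.
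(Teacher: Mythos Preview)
Your proposal is correct and arguably cleaner than the paper's argument, though the two are closely related. The paper proceeds by direct expansion: it writes $\mathrm{GW}_2^2(f,g)=\int f^2+\int g^2-2\int\!\int fg\,d\pi^*d\pi^*$, rewrites the cross term as $2\int \tilde g\,g$, and then separately bounds $\int \tilde g^2\le\int f^2$ via Jensen's inequality (the $L^2$ contraction property of conditional expectation). Combining gives $\|\tilde g-g\|_2^2=\int\tilde g^2+\int g^2-2\int\tilde g\,g\le \int f^2+\int g^2-2\int\tilde g\,g=\mathrm{GW}_2^2(f,g)$.

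Your route recognizes $\tilde g(Y,Y')=\mathbb{E}[f(X,X')\mid Y,Y']$ from the outset and applies the orthogonality principle to get the exact identity $\mathrm{GW}_2^2(f,g)=\mathbb{E}[(f-\tilde g)^2]+\|\tilde g-g\|_2^2$, from which the bound is immediate. This is the same underlying fact (the paper's Jensen step is precisely $\mathbb{E}[\tilde g^2]\le\mathbb{E}[f^2]$, which is equivalent to $\mathbb{E}[(f-\tilde g)^2]\ge 0$), but your formulation yields a strictly stronger statement---an exact Pythagorean decomposition rather than an inequality---and makes transparent \emph{why} the bound holds. Your remark about disintegration when $\pi^*$ has no density is also a nice bit of care that the paper's density-style notation glosses over.
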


\begin{remark}
The estimation error $\delta_{n_S}, \delta_{n_T}$ depends on the estimation strategy and the underlying data-generating process. For example, of $\bP_s$ and $\bP_T$ are generated from a $\alpha$-H\"{o}lder graphon, then it is possible to construct $\hat \bP_s$ and $\hat \bP_T$ such that $\delta_{n} \asymp n^{-2\alpha/(\alpha+1)}$ for $0<\alpha<1$ and $\delta_{n} \asymp(\log n)/n$ for $\alpha\ge1$ (e.g., see \cite{gao2015rate}) for $n \in \{n_S, n_T\}$, albeit they are NP-hard to compute. Later, \cite{zhang2017estimating} proposed a neighborhood smoothing based approach to obtain $(\hat \bP_S, \hat \bP_t)$ with $\delta_{n} \asymp \sqrt{\log{n}/n}$ for $n \in \{n_S, n_T\}$. 

\end{remark}

\section{Experiments}


\subsection{Simulation Experiments}

\textbf{Simulation setup.} To evaluate the effectiveness of our proposed method, 
$\our$, we conduct simulation studies using ten representative graphon functions.  Figure \ref{fig:graphon-heatmap} illustrates the probability matrices $\bP$ of 500 nodes generated for Graphons 1 to 10, where the rows and the columns are ordered by latent position $u$. 
Graphons 1-5 are graphon functions used in SAS \cite{pmlr-v32-chan14}, and graphons 6-10 are used in graphon functions used in ICE \cite{qin_iterative_2021}. The details about the structure of these graphon functions can be found in Section \ref{sec:ap_graphon_fun}.

\begin{figure}[!htbp]
    \centering
    \includegraphics[width=1\textwidth]{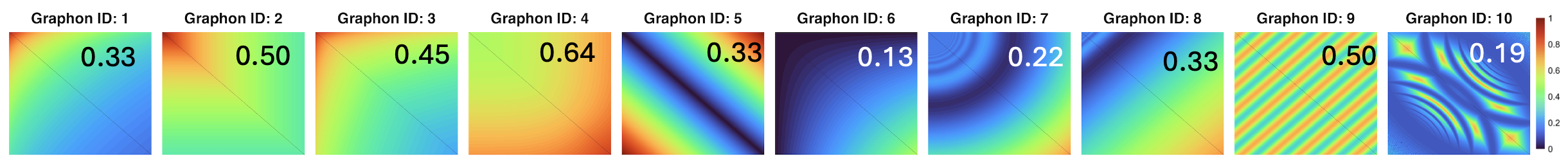} 
    \vspace{-15pt}
    \caption{Heatmap of true probability matrix for graphon 1-10 (from left to right), where high values of probability are colored in red, and low values of probability are colored in blue. The number in each heatmap means the average connecting probability. }
    \label{fig:graphon-heatmap}
\end{figure}

\textbf{Baseline.} To evaluate the effectiveness of our proposed method $\our$, we compare its performance with four other existing graphon estimation methods which use target data only:  neighborhood smoothing (NS) \citep{zhang2017estimating}, sorting and smoothing (SAS) \citep{pmlr-v32-chan14}, universal singular value thresholding algorithm (USVT)  \citep{chatterjee2015matrix}, and iterative connecting probability estimation (ICE) \citep{qin_iterative_2021}. We report results for our method implemented with both standard and entropic Gromov-Wasserstein alignment, denoted as  $\mbox{\our-GW}$ and $\mbox{\our-EGW}$, respectively.
For all experiments, we evaluate performance using the MSE.
We performed 50 independent simulation runs to calculate the averaged MSE.

In this simulation,  we aim to answer three key questions: 
(1) Impact of source sample size: How does the performance of $\our$ improve as the sample size in the source domain increases?
(2) Impact of domain shift:  How robust is $\our$ against negative transferring when transferring between two different graphons?
(3) Impact of density shift: How effective is $\our$ when transferring from a dense network to a sparse network or vice versa?

\textbf{Asymptotic performance.}
To investigate the impact of source sample sizes, 
 we  vary the sample size of source data $n_s \in \{100,200,\ldots,1000\}$ while fixing other parameters:  size of the target data { $n_t=50$}.
 We introduce a small perturbation between source and target graphons:
$f_t(u, v) = f_s(u, v) + \xi$
where $\xi$ generated uniformly from $U(-0.01,0.01)$.
Figure \ref{fig:result1} displays MSE results across ten different graphon types. Several key observations: (1) Both $\mbox{\our-GW}$  and $\mbox{\our-EGW}$ 
consistently outperforms all baseline methods across all graphon types, demonstrating the effectiveness of our transfer learning approach. 
(2) We observe a clear decreasing trend in MSE as source sample size increases for both of $\mbox{\our-GW}$ and $\mbox{\our-EGW}$, while other target only methods (including NS, USVT, ICE, SAS) show flat pattern because they can not borrow strength from source sample.
(3) $\our$-GW and $\our$-EGW achieve comparable performance overall, though in some cases, the entropic regularization in $\our$-EGW yields lower MSE, benefiting from entropic regularization to enhance alignment stability.
Figure~\ref{fig:result2} further illustrates our method's effectiveness (using GW for demonstration) when transferring from a 500-node source to a 50-node target network (both from graphon 8). 
 Panel (a) displays the noisy target-only estimation. Panel (b) shows the accurate source estimation (left), significantly improved transferred result (middle), and final estimation (right). Since the source and target domains are identical, the optimal transport distance is small, and the final estimate equals the transferred result without requiring debiasing.

\begin{figure}[htb]
 \vspace{-10pt}
    \centering
    \includegraphics[width=\textwidth]{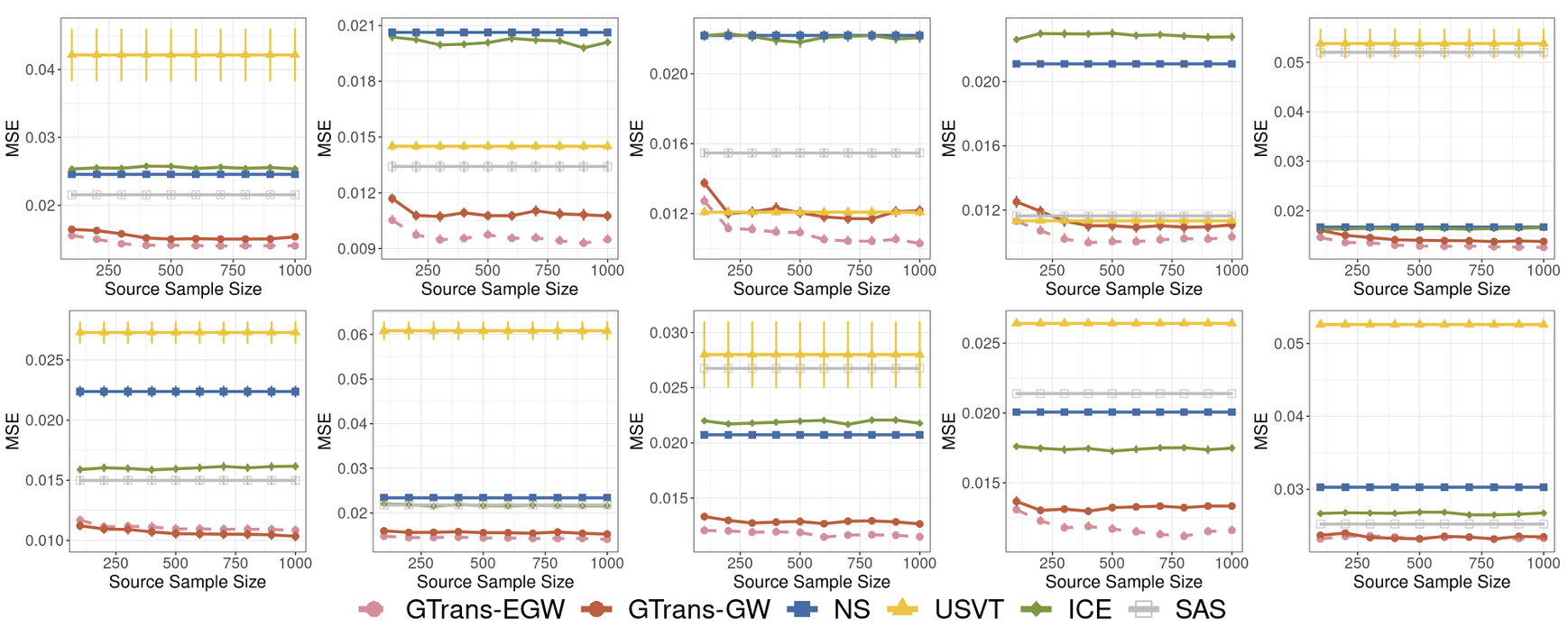} 
    \vspace{-15pt}
    \caption{MSE performance of five methods as source network size increases from 100-1000 nodes, with error bars representing $\pm 0.1$ standard deviations. $\mbox{\our-GW}$ (red circles with solid line), $\mbox{\our-EGW}$ (pink circles with dashed line), NS  (blue squares with solid line), USVT (yellow triangles with solid line),  ICE (green diamonds with solid line), SAS (gray hollow squares with solid line). Top row: graphons 1-5; bottom row: graphons 6-10. }
    \label{fig:result1}
\end{figure}

\begin{figure}[htb]
    \centering
    \includegraphics[width=1\textwidth]{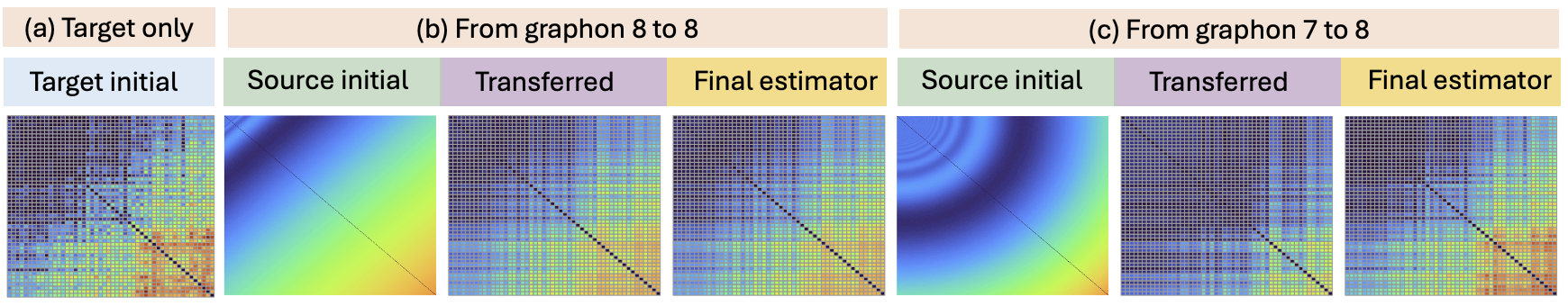}
    \caption{Visual comparison of $\our$ performance with different source graphons when estimating target graphon 8. (a) target-only initial estimator using only limited target data; (b-c) three panels showing source initial estimator, transferred estimator, and final estimator  from different source graphons ($8 \rightarrow 8, 7 \rightarrow 8$ respectively). }
    \label{fig:result2}
\end{figure}

\textbf{Transfer between different graphons.}
To assess the robustness of GTrans to domain differences, we fix $n_s = 500$ and $n_t = 50$, and consider transfer learning between different graphon pairs. We consider transfer between similar graphon pairs (such as 1 and 3, or 6 and 7, 7 and 8), as well as dissimilar graphon pairs (such as 8 and 9, or 5 and 10). 
Table \ref{tab:cross_graphon} presents the MSE results for these transfer scenarios compared to target-only methods.  
We observe several key patterns in the results: 
(1) For similar graphon pairs, both $\our$-GW and $\our$-EGW  achieves substantial improvement over target-only methods.  
This demonstrates that when graphons share structural characteristics, knowledge transfer remains highly effective even between different graphon functions. 
Figure~\ref{fig:result2}(c) provides visual evidence of effective knowledge transfer in the (from graphon 7 to 8) scenario. Both graphons share smooth gradient-based transition patterns.
The middle panel in (c) shows that the transferred estimator successfully preserves the source's smooth structural regularity while adapting to the target's diagonal gradient, evident from the emergence of a grid-like texture where intensity gradually decreases diagonally. The final estimator in the right panel of (c) exhibits both the structural smoothness inherited from the source and a more precise alignment with the target graphon's latent geometry, highlighting the advantage of debiasing step.
(2) For transfers between graphon pairs with larger difference, $\our$ maintains similar performance compared to its target-only counterpart NS method. This robustness is due to our adaptive debiasing mechanism, which effectively mitigates negative transfer while preserving beneficial knowledge.

\textbf{Impact of density shift.}  
To investigate the performance of $\our$ to  density differences \begin{wrapfigure}{r}{0.68\textwidth}
 \vspace{-1em}
\centering\includegraphics[width=0.68\textwidth]{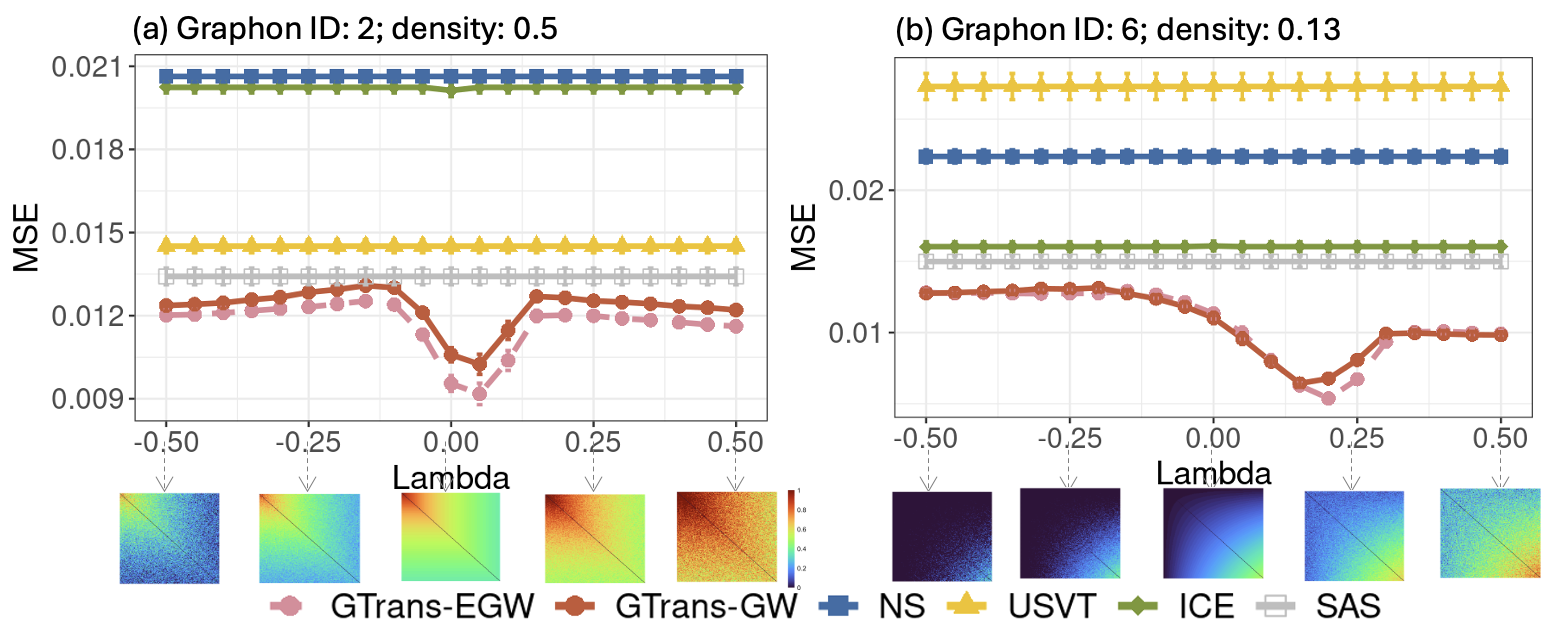}
   \vspace{-14pt}
  \caption{MSE comparison of $\our$-GW, $\our$-EGW, NS, ICE, USVT, and SAS under density shifts $\lambda$ for Graphon 2 (0.5) and Graphon 6 (0.13), with error bars representing $\pm 0.1$ standard deviations. Bottom panels show adjacency matrix transformations at $\lambda = -0.50, -0.25, 0.00, 0.25, 0.50$.}
   \vspace{-14pt}
  \label{fig:density}
\end{wrapfigure}
between source and target domains,  we examine scenarios with varying levels of perturbation between source and target graphons:
$f_s(u, v) = f_t(u, v) + \xi$
where $\xi \sim U(0,\lambda)$  if $\lambda >0$,  $\xi \sim 
 U(\lambda,0)$ if $\lambda <0$.  
We vary $\lambda$ from 
 $\{-0.5, -0.45, \ldots,  0.5\}$, and truncate the probability if it is larger than 1 or smaller than 0.
This  allows us to simulate scenarios where the target network is either sparser (\(\lambda > 0\)) or denser (\(\lambda < 0\)) than the source.
Figure \ref{fig:density} presents MSE results  for two representative graphons (ID 2 and ID 6).  Additional results can be found in Figure \ref{fig:result_lambda_v1}. 
For the dense Graphon 2 (density = 0.5),  $\our$-GW and $\our$-EGW both  show a symmetric U-shaped pattern centered almost around zero, indicating that performance degrades equally whether transferring from a denser or sparser source. 
In contrast, Graphon 6 is a sparse network (density = 0.13). Here, the performance curve of $\our$-GW and $\our$-EGW  achieve its lowest error at a positive value of \(\lambda\).
This indicates that when the target graph is dense, the impact of density shift on transfer learning is symmetric around zero.  However, when the target graph is sparse, transfer performance becomes more sensitive to the direction of the density shift, and transferring from a slightly denser source network yields better results. 
The underlying reason might be: (1) a denser source network typically exhibits a higher signal-to-noise ratio, yielding a higher-quality initial graphon estimator.

(2) When the source network is sparse, \(\hat{\mathbf{P}}_s^{ini}\) will also be sparse—potentially to an extreme degree where it might even be (approximately) a zero matrix. This severely limits the information available for transfer, regardless of how well the transport plan aligns the domains.

\setlength{\tabcolsep}{4.3pt} 
\begin{table}[htb]
\caption{MSE comparison for transfer learning between different graphons ($n_s = 500$, $n_t = 50$)}
\label{tab:cross_graphon}
\centering
\small 
\begin{tabular}{llcccccc}
\hline
Similarity & Scenario & $\our$-GW & $\our$-EGW & NS & USVT & ICE & SAS \\
\hline
\multirow{5}{*}{Similar}  
 & $7 \rightarrow 6$ & \textbf{0.9 $\pm$ 0.2} & \textbf{1.0 $\pm$ 0.3} & 2.1 $\pm$ 0.3 & 3.0 $\pm$ 0.9 & 1.7 $\pm$ 0.3 & 1.6 $\pm$ 0.4 \\
 & $6 \rightarrow 7$ & \textbf{1.8 $\pm$ 0.3} & \textbf{1.8 $\pm$ 0.3} & 2.3 $\pm$ 0.3 & 5.9 $\pm$ 2.7 & 2.2 $\pm$ 0.3 & 2.4 $\pm$ 0.6 \\
 & $8 \rightarrow 7$ & \textbf{1.3 $\pm$ 0.3} & \textbf{1.4 $\pm$ 0.3} & 2.3 $\pm$ 0.3 & 5.9 $\pm$ 2.7 & 2.2 $\pm$ 0.3 & 2.4 $\pm$ 0.6 \\
 & $7 \rightarrow 8$ & 1.6 $\pm$ 0.2 & 1.6 $\pm$ 0.2 & 2.0 $\pm$ 0.2 & \textbf{1.5 $\pm$ 0.2} & 2.1 $\pm$ 0.2 & 2.8 $\pm$ 0.7 \\
 & $3 \rightarrow 1$ & \textbf{1.6 $\pm$ 0.5} & \textbf{ 1.7 $\pm$ 0.2 } & 2.5 $\pm$ 0.3 & 5.1 $\pm$ 4.6 & 2.7 $\pm$ 0.3 & 2.1 $\pm$ 0.3 \\
 & $1 \rightarrow 3$ & 1.6 $\pm$ 0.3 & 1.5 $\pm$ 0.3 & 2.3 $\pm$ 0.3 & \textbf{1.2 $\pm$ 0.2} & 2.2 $\pm$ 0.3 & 1.6 $\pm$ 0.3 \\
\hline
\multirow{3}{*}{Different} 
 & $8 \rightarrow 9$ & 1.9 $\pm$ 0.4 & 1.9 $\pm$ 0.3 & 2.0 $\pm$ 0.2 & 2.6 $\pm$ 0.1 & \textbf{1.7 $\pm$ 0.2} & 2.1 $\pm$ 0.1 \\
 & $9 \rightarrow 8$ & 2.0 $\pm$ 0.2 & 1.9 $\pm$ 0.2 & 2.0 $\pm$ 0.2 & \textbf{1.5 $\pm$ 0.2} & 2.1 $\pm$ 0.2 & 2.8 $\pm$ 0.7 \\
 & $10 \rightarrow 5$ & \textbf{1.6 $\pm$ 0.2} & \textbf{1.5 $\pm$ 0.2} & 1.7 $\pm$ 0.1 & 3.5 $\pm$ 1.5 & 1.7 $\pm$ 0.2 & 5.5 $\pm$ 1.0 \\
 & $5 \rightarrow 10$ & \textbf{2.6 $\pm$ 0.3} & \textbf{2.5 $\pm$ 0.3 }& 3.1 $\pm$ 0.6 & 5.3 $\pm$ 0.6 & 2.8 $\pm$ 0.3 & 2.6 $\pm$ 0.3 \\
\hline
\end{tabular}
\end{table}

\textbf{Selection of Hyperparameters.}
$\our$-GW  involves one hyperparameter: threshold $\delta$ while 
$\our$-GW  involves two hyperparameters: threshold $\delta$ and regularization parameters $\epsilon$. In this paper, we employed a network cross-validation procedure developed by \cite{li_network_2020} for hyperparameter tuning. 
(1) Selection of $\delta$: 
Specifically, when implementing $\our$ with either GW or EGW, we choose $\delta$ from a candidate set $\{0.1, 0.11, 0.12,....., 0.5\}$, and use the cross-validation procedure to select the value that minimizes held-out prediction error.
Figures \ref{fig:best_delta_graphons_GW} and \ref{fig:best_delta_graphons_EGW} illustrate the optimal thresholds identified across different simulation scenarios for GW and EGW, respectively. As we can see: most GW scenarios work best when $\delta = 0.15$, while most EGW scenarios work best  when $\delta = 0.18$. 
While cross-validation is preferred, we recommend default values of 
$\delta = 0.15$ for  GW and $\delta = 0.18$ for  EGW when tuning is computationally infeasible in practice.
(2) Selection of $\epsilon$: 
When implementing EGW, we select the optimal regularization parameter from candidates $\{0.001, 0.005, 0.01, \ldots, 0.1\}$. Figures \ref{fig:best_epsilon_boxplot} illustrates the MSE comparison across different regularization parameters $\epsilon$, highlighting that $\epsilon = 0.01$ is consistently effective for most graphons.

\subsection{Application to Real Data}

Evaluating probability matrix estimation methods on real networks directly is difficult, since
the true probability matrix is unknown. We assess the practical utility of our method by applying
it to two downstream applications: graph classification and link prediction. Due to page limitations, we only show results for graph classification. Link prediction results are provided in Appendix \ref{sec:linkpred}.

Graph classification is fundamental to network analysis, but often suffers from limited labeled data. 
\citep{han2022g}  proposed G-Mixup, which augments datasets by interpolating between graphons of different classes to generate synthetic training graphs.
The quality of this augmentation critically depends on the accuracy of the underlying graphon estimation. 
When the network size is small, conventional graphon estimators often produce poor results.
 For example, IMDB-B and IMDB-M datasets used in  \citep{han2022g}  have only 19.77 and 13.00 average nodes per graph, respectively.

\textbf{Datasets. }
 To address this challenge,  we implemented $\our$ to enhance G-Mixup for graph classification by transferring knowledge from larger networks. In our experiments, we consider two co-actor graph datasets as targets: two-class  \textsc{IMDB-Binary} and three-class  \textsc{IMDB-MULTI}, both characterized by small graph sizes. 
As candidate sources, we consider: (1) three-class COLLAB (average 74.49 nodes per graph), a collaboration network derived from scientific authorship data~\citep{Morris2020TUDataset}, and
(2) two-class Reddit-Binary (average 429 nodes per graph), comprising Reddit user interaction threads~\citep{Morris2020TUDataset}.
Additionally, we examine a bioinformatics setting by transferring from two-class D\&D (average 284.32 nodes) to two-class PROTEINS-Full (average 25.22 nodes)~\citep{Morris2020TUDataset}, both consisting of protein structure graphs.

\textbf{Results. }
We adopt the same Graph Convolutional Network (GCN) architecture as used in \cite{han2022g}, using the same hyperparameters and training procedures for all benchmark comparisons. Full implementation details are provided in Appendix~\ref{sec:model_construct}. For each target dataset, we split the dataset into train/validation/test data by $70\%/10\%/20\%$. We report the test accuracy on ten runs.  Tables \ref{tab:transfer_comparison_wide} report test accuracies for our proposed $\our$ method against baselines, including NS ~\cite{zhang2017estimating}, USVT ~\cite{chatterjee2015matrix}, SAS~\cite{pmlr-v32-chan14}, ICE ~\cite{qin_iterative_2021}, GWB~\cite{xu2021learning}, IGNR~\cite{xia2023implicit}, SIGL~\cite{azizpour2024scalable}, and graphlet \cite{prvzulj2007biological, shervashidze2009efficient}. For GWB/IGNR/SIGL we follow the original implementations and keep hyperparameters unchanged. The graphlet baseline is implemented via GraKeL’s Graphlet Kernel with an SVM on the precomputed kernel matrix; we use graphlet sampling with 700 subgraphs per graph for scalability. On IMDB-Binary $\our$-GW achieves 76.30\% (transfer from Reddit-Binary) and 76.25\% (transfer from COLLAB), while $\our$-EGW further improves to 76.80\% and 77.50\%, respectively, outperforming all baselines by \textbf{2–5\%}. On IMDB-Multi, $\our$-GW with COLLAB reaches 50.47\%, and $\our$-EGW achieves 51.27\% with Reddit-Binary as the source, demonstrating clear superiority over all other methods. 
For PROTEINS-Full, transferring from D\&D yields 69.33\% with $\our$-GW and 68.52\% with $\our$-EGW, consistently outperforming baseline methods by 3–6 points. These results confirm that both $\our$-GW and $\our$-EGW effectively boost classification accuracy by leveraging structural knowledge from larger networks. 
\vspace{-15pt}

\begin{table}[H]
\centering
\caption{Graph classification accuracy (\%) across three target datasets (mean $\pm$ std) compared against graphon estimation baselines.}
\label{tab:transfer_comparison_wide}
\setlength{\tabcolsep}{5pt}
\resizebox{\textwidth}{!}{
\scriptsize
\begin{tabular}{l|l|c|c|c|c|c}
\toprule
\textbf{Source} & \textbf{Target} & \textbf{$\our$-GW} & \textbf{$\our$-EGW} & \textbf{NS} & \textbf{USVT} & \textbf{SAS} \\
\midrule
Reddit-B & \textsc{IMDB-B} & 76.30 $\pm$ 2.35 & \textbf{76.80 $\pm$ 1.52} & 72.90 $\pm$ 2.10 & 73.85 $\pm$ 2.40 & 74.25 $\pm$ 1.93 \\
COLLAB   & \textsc{IMDB-B} & 76.25 $\pm$ 2.06 & \textbf{77.50 $\pm$ 2.13} & 72.90 $\pm$ 2.10 & 73.85 $\pm$ 2.40 & 74.25 $\pm$ 1.93 \\
Reddit-B & \textsc{IMDB-M} & 49.10 $\pm$ 1.33 & \textbf{51.27 $\pm$ 1.98} & 43.80 $\pm$ 2.59 & 48.00 $\pm$ 2.93 & 44.10 $\pm$ 2.05 \\
COLLAB   & \textsc{IMDB-M} & \textbf{50.47 $\pm$ 1.42} & 50.23 $\pm$ 0.92 & 43.80 $\pm$ 2.59 & 48.00 $\pm$ 2.93 & 44.10 $\pm$ 2.05 \\
D\&D     & \textsc{Proteins} & 69.33 $\pm$ 2.55 & 68.52 $\pm$ 1.59 & 63.18 $\pm$ 1.94 & 65.11 $\pm$ 2.21 & 65.25 $\pm$ 1.85 \\
\bottomrule
\end{tabular}
}

\vspace{3pt} 

\resizebox{\textwidth}{!}{
\scriptsize
\begin{tabular}{l|l|c|c|c|c|c}
\toprule
\textbf{Source} & \textbf{Target} & \textbf{ICE} & \textbf{GWB} & \textbf{IGNR} & \textbf{SIGL} & \textbf{Graphlet} \\
\midrule
Reddit-B & \textsc{IMDB-B} & 74.30 $\pm$ 2.16 & 75.30 $\pm$ 2.19 & 74.35 $\pm$ 3.51 & 73.50 $\pm$ 2.62 & 61.10 $\pm$ 2.23 \\
COLLAB   & \textsc{IMDB-B} & 74.30 $\pm$ 2.16 & 75.30 $\pm$ 2.19 & 74.35 $\pm$ 3.51 & 73.50 $\pm$ 2.62 & 61.10 $\pm$ 2.23 \\
Reddit-B & \textsc{IMDB-M} & 43.90 $\pm$ 1.27 & 47.70 $\pm$ 2.53 & 47.50 $\pm$ 3.32 & 49.13 $\pm$ 3.04 & 39.37 $\pm$ 1.62 \\
COLLAB   & \textsc{IMDB-M} & 43.90 $\pm$ 1.27 & 47.70 $\pm$ 2.53 & 47.50 $\pm$ 3.32 & 49.13 $\pm$ 3.04 & 39.37 $\pm$ 1.62 \\
D\&D     & \textsc{Proteins} & 65.38 $\pm$ 1.84 & 63.45 $\pm$ 2.96 & 65.87 $\pm$ 2.56 & 67.13 $\pm$ 2.34 & \textbf{70.11 $\pm$ 2.12} \\
\bottomrule
\end{tabular}
}
\end{table}

\vspace{-20pt} 
\section{Conclusion and Future Work}
In this paper, we proposed  $\our$, a novel transfer learning framework to estimate the edge connecting probability, addressing the challenge of limited data availability in the target network. By leveraging Gromov-Wasserstein optimal transport, our method aligns latent node structures across source and target domains and effectively transfers graphon information to improve estimation accuracy. Extensive experiments on both synthetic and real-world networks demonstrate that our approach consistently outperforms existing baselines, particularly in scenarios involving small, sparse graphs. 
By improving estimation accuracy, our framework facilitates various downstream tasks, including data augmentation and link prediction. Despite these promising results, several limitations remain. First, our current framework assumes a single source graph. Extending it to incorporate multiple source networks could further enhance robustness by leveraging diverse structural priors. Second, $\our$ is designed for static graphs; future extensions to dynamic or multi-layer networks would enable modeling of time-evolving or multi-modal dependencies. Third, our current formulation does not incorporate node-level covariates. Integrating such covariates could provide valuable auxiliary information for alignment, particularly in domains where topological structure alone may be insufficient for effective transfer.
These directions can further enhance the flexibility and generalizability of transfer-based graphon estimation.

\bibliographystyle{plain}
\bibliography{ref}

\newpage
\section*{NeurIPS Paper Checklist}

\begin{enumerate}

\item {\bf Claims}
    \item[] Question: Do the main claims made in the abstract and introduction accurately reflect the paper's contributions and scope?
    \item[] Answer: \answerYes{}
    \item[] Justification: The scope and contributions of this work were included in the abstract and introduction, particularly highlighted at the end of the Introduction section.
    \item[] Guidelines:
    \begin{itemize}
        \item The answer NA means that the abstract and introduction do not include the claims made in the paper.
        \item The abstract and/or introduction should clearly state the claims made, including the contributions made in the paper and important assumptions and limitations. A No or NA answer to this question will not be perceived well by the reviewers. 
        \item The claims made should match theoretical and experimental results, and reflect how much the results can be expected to generalize to other settings. 
        \item It is fine to include aspirational goals as motivation as long as it is clear that these goals are not attained by the paper. 
    \end{itemize}

\item {\bf Limitations}
    \item[] Question: Does the paper discuss the limitations of the work performed by the authors?
    \item[] Answer: \answerYes{} 
    \item[] Justification: Limitations and future directions of the work were included in the Discussion section.
    \item[] Guidelines:
    \begin{itemize}
        \item The answer NA means that the paper has no limitation while the answer No means that the paper has limitations, but those are not discussed in the paper. 
        \item The authors are encouraged to create a separate "Limitations" section in their paper.
        \item The paper should point out any strong assumptions and how robust the results are to violations of these assumptions (e.g., independence assumptions, noiseless settings, model well-specification, asymptotic approximations only holding locally). The authors should reflect on how these assumptions might be violated in practice and what the implications would be.
        \item The authors should reflect on the scope of the claims made, e.g., if the approach was only tested on a few datasets or with a few runs. In general, empirical results often depend on implicit assumptions, which should be articulated.
        \item The authors should reflect on the factors that influence the performance of the approach. For example, a facial recognition algorithm may perform poorly when image resolution is low or images are taken in low lighting. Or a speech-to-text system might not be used reliably to provide closed captions for online lectures because it fails to handle technical jargon.
        \item The authors should discuss the computational efficiency of the proposed algorithms and how they scale with dataset size.
        \item If applicable, the authors should discuss possible limitations of their approach to address problems of privacy and fairness.
        \item While the authors might fear that complete honesty about limitations might be used by reviewers as grounds for rejection, a worse outcome might be that reviewers discover limitations that aren't acknowledged in the paper. The authors should use their best judgment and recognize that individual actions in favor of transparency play an important role in developing norms that preserve the integrity of the community. Reviewers will be specifically instructed to not penalize honesty concerning limitations.
    \end{itemize}

\item {\bf Theory assumptions and proofs}
    \item[] Question: For each theoretical result, does the paper provide the full set of assumptions and a complete (and correct) proof?
    \item[] Answer: \answerYes{}
    \item[] Justification: Full assumptions and proof were included in the Theoretical Properties section and the Appendix.
    \item[] Guidelines:
    \begin{itemize}
        \item The answer NA means that the paper does not include theoretical results. 
        \item All the theorems, formulas, and proofs in the paper should be numbered and cross-referenced.
        \item All assumptions should be clearly stated or referenced in the statement of any theorems.
        \item The proofs can either appear in the main paper or the supplemental material, but if they appear in the supplemental material, the authors are encouraged to provide a short proof sketch to provide intuition. 
        \item Inversely, any informal proof provided in the core of the paper should be complemented by formal proofs provided in appendix or supplemental material.
        \item Theorems and Lemmas that the proof relies upon should be properly referenced. 
    \end{itemize}

    \item {\bf Experimental result reproducibility}
    \item[] Question: Does the paper fully disclose all the information needed to reproduce the main experimental results of the paper to the extent that it affects the main claims and/or conclusions of the paper (regardless of whether the code and data are provided or not)?
    \item[] Answer: \answerYes{} 
    \item[] Justification: The proposed algorithms and implementation details of experiments were described in the Experiments Section and Appendix.

    \item[] Guidelines:
    \begin{itemize}
        \item The answer NA means that the paper does not include experiments.
        \item If the paper includes experiments, a No answer to this question will not be perceived well by the reviewers: Making the paper reproducible is important, regardless of whether the code and data are provided or not.
        \item If the contribution is a dataset and/or model, the authors should describe the steps taken to make their results reproducible or verifiable. 
        \item Depending on the contribution, reproducibility can be accomplished in various ways. For example, if the contribution is a novel architecture, describing the architecture fully might suffice, or if the contribution is a specific model and empirical evaluation, it may be necessary to either make it possible for others to replicate the model with the same dataset, or provide access to the model. In general. releasing code and data is often one good way to accomplish this, but reproducibility can also be provided via detailed instructions for how to replicate the results, access to a hosted model (e.g., in the case of a large language model), releasing of a model checkpoint, or other means that are appropriate to the research performed.
        \item While NeurIPS does not require releasing code, the conference does require all submissions to provide some reasonable avenue for reproducibility, which may depend on the nature of the contribution. For example
        \begin{enumerate}
            \item If the contribution is primarily a new algorithm, the paper should make it clear how to reproduce that algorithm.
            \item If the contribution is primarily a new model architecture, the paper should describe the architecture clearly and fully.
            \item If the contribution is a new model (e.g., a large language model), then there should either be a way to access this model for reproducing the results or a way to reproduce the model (e.g., with an open-source dataset or instructions for how to construct the dataset).
            \item We recognize that reproducibility may be tricky in some cases, in which case authors are welcome to describe the particular way they provide for reproducibility. In the case of closed-source models, it may be that access to the model is limited in some way (e.g., to registered users), but it should be possible for other researchers to have some path to reproducing or verifying the results.
        \end{enumerate}
    \end{itemize}

\item {\bf Open access to data and code}
    \item[] Question: Does the paper provide open access to the data and code, with sufficient instructions to faithfully reproduce the main experimental results, as described in supplemental material?
    \item[] Answer: \answerYes{} 
    \item[] Justification: The real data used in this work are publicly accessible and properly cited. We also include the code and descriptions in the supplementary materials.
    \item[] Guidelines:
    \begin{itemize}
        \item The answer NA means that paper does not include experiments requiring code.
        \item Please see the NeurIPS code and data submission guidelines (\url{https://nips.cc/public/guides/CodeSubmissionPolicy}) for more details.
        \item While we encourage the release of code and data, we understand that this might not be possible, so “No” is an acceptable answer. Papers cannot be rejected simply for not including code, unless this is central to the contribution (e.g., for a new open-source benchmark).
        \item The instructions should contain the exact command and environment needed to run to reproduce the results. See the NeurIPS code and data submission guidelines (\url{https://nips.cc/public/guides/CodeSubmissionPolicy}) for more details.
        \item The authors should provide instructions on data access and preparation, including how to access the raw data, preprocessed data, intermediate data, and generated data, etc.
        \item The authors should provide scripts to reproduce all experimental results for the new proposed method and baselines. If only a subset of experiments are reproducible, they should state which ones are omitted from the script and why.
        \item At submission time, to preserve anonymity, the authors should release anonymized versions (if applicable).
        \item Providing as much information as possible in supplemental material (appended to the paper) is recommended, but including URLs to data and code is permitted.
    \end{itemize}

\item {\bf Experimental setting/details}
    \item[] Question: Does the paper specify all the training and test details (e.g., data splits, hyperparameters, how they were chosen, type of optimizer, etc.) necessary to understand the results?
    \item[] Answer: \answerYes{} 
    \item[] Justification: The details were discussed in the Experiments section and Appendix.
    \item[] Guidelines:
    \begin{itemize}
        \item The answer NA means that the paper does not include experiments.
        \item The experimental setting should be presented in the core of the paper to a level of detail that is necessary to appreciate the results and make sense of them.
        \item The full details can be provided either with the code, in appendix, or as supplemental material.
    \end{itemize}

\item {\bf Experiment statistical significance}
    \item[] Question: Does the paper report error bars suitably and correctly defined or other appropriate information about the statistical significance of the experiments?
    \item[] Answer: \answerYes{} 
    \item[] Justification: In the Experiments section and Appendix, the error bars, standard deviation, and number of replicates, were reported and clearly stated.
    \item[] Guidelines:
    \begin{itemize}
        \item The answer NA means that the paper does not include experiments.
        \item The authors should answer "Yes" if the results are accompanied by error bars, confidence intervals, or statistical significance tests, at least for the experiments that support the main claims of the paper.
        \item The factors of variability that the error bars are capturing should be clearly stated (for example, train/test split, initialization, random drawing of some parameter, or overall run with given experimental conditions).
        \item The method for calculating the error bars should be explained (closed form formula, call to a library function, bootstrap, etc.)
        \item The assumptions made should be given (e.g., Normally distributed errors).
        \item It should be clear whether the error bar is the standard deviation or the standard error of the mean.
        \item It is OK to report 1-sigma error bars, but one should state it. The authors should preferably report a 2-sigma error bar than state that they have a 96\% CI, if the hypothesis of Normality of errors is not verified.
        \item For asymmetric distributions, the authors should be careful not to show in tables or figures symmetric error bars that would yield results that are out of range (e.g. negative error rates).
        \item If error bars are reported in tables or plots, The authors should explain in the text how they were calculated and reference the corresponding figures or tables in the text.
    \end{itemize}

\item {\bf Experiments compute resources}
    \item[] Question: For each experiment, does the paper provide sufficient information on the computer resources (type of compute workers, memory, time of execution) needed to reproduce the experiments?
    \item[] Answer: \answerYes{} 
    \item[] Justification:  We provide detailed descriptions of the computational resources used for each experiment in the Appendix. Furthermore, our open-sourced code includes configuration files and parameter settings to replicate the experiments efficiently.

    \item[] Guidelines:
    \begin{itemize}
        \item The answer NA means that the paper does not include experiments.
        \item The paper should indicate the type of compute workers CPU or GPU, internal cluster, or cloud provider, including relevant memory and storage.
        \item The paper should provide the amount of compute required for each of the individual experimental runs as well as estimate the total compute. 
        \item The paper should disclose whether the full research project required more compute than the experiments reported in the paper (e.g., preliminary or failed experiments that didn't make it into the paper). 
    \end{itemize}
    
\item {\bf Code of ethics}
    \item[] Question: Does the research conducted in the paper conform, in every respect, with the NeurIPS Code of Ethics \url{https://neurips.cc/public/EthicsGuidelines}?
    \item[] Answer: \answerYes{} 
    \item[] Justification: The work of this paper is conducted with NeurIPS Code of Ethics.
    \item[] Guidelines:
    \begin{itemize}
        \item The answer NA means that the authors have not reviewed the NeurIPS Code of Ethics.
        \item If the authors answer No, they should explain the special circumstances that require a deviation from the Code of Ethics.
        \item The authors should make sure to preserve anonymity (e.g., if there is a special consideration due to laws or regulations in their jurisdiction).
    \end{itemize}

\item {\bf Broader impacts}
    \item[] Question: Does the paper discuss both potential positive societal impacts and negative societal impacts of the work performed?
    \item[] Answer: \answerYes{} 
    \item[] Justification: Societal impacts of the work were included in the Introduction section.
    \item[] Guidelines:
    \begin{itemize}
        \item The answer NA means that there is no societal impact of the work performed.
        \item If the authors answer NA or No, they should explain why their work has no societal impact or why the paper does not address societal impact.
        \item Examples of negative societal impacts include potential malicious or unintended uses (e.g., disinformation, generating fake profiles, surveillance), fairness considerations (e.g., deployment of technologies that could make decisions that unfairly impact specific groups), privacy considerations, and security considerations.
        \item The conference expects that many papers will be foundational research and not tied to particular applications, let alone deployments. However, if there is a direct path to any negative applications, the authors should point it out. For example, it is legitimate to point out that an improvement in the quality of generative models could be used to generate deepfakes for disinformation. On the other hand, it is not needed to point out that a generic algorithm for optimizing neural networks could enable people to train models that generate Deepfakes faster.
        \item The authors should consider possible harms that could arise when the technology is being used as intended and functioning correctly, harms that could arise when the technology is being used as intended but gives incorrect results, and harms following from (intentional or unintentional) misuse of the technology.
        \item If there are negative societal impacts, the authors could also discuss possible mitigation strategies (e.g., gated release of models, providing defenses in addition to attacks, mechanisms for monitoring misuse, mechanisms to monitor how a system learns from feedback over time, improving the efficiency and accessibility of ML).
    \end{itemize}
    
\item {\bf Safeguards}
    \item[] Question: Does the paper describe safeguards that have been put in place for responsible release of data or models that have a high risk for misuse (e.g., pretrained language models, image generators, or scraped datasets)?
    \item[] Answer: \answerNA{} 
    \item[] Justification: The paper poses no such risks.
    \item[] Guidelines:
    \begin{itemize}
        \item The answer NA means that the paper poses no such risks.
        \item Released models that have a high risk for misuse or dual-use should be released with necessary safeguards to allow for controlled use of the model, for example by requiring that users adhere to usage guidelines or restrictions to access the model or implementing safety filters. 
        \item Datasets that have been scraped from the Internet could pose safety risks. The authors should describe how they avoided releasing unsafe images.
        \item We recognize that providing effective safeguards is challenging, and many papers do not require this, but we encourage authors to take this into account and make a best faith effort.
    \end{itemize}

\item {\bf Licenses for existing assets}
    \item[] Question: Are the creators or original owners of assets (e.g., code, data, models), used in the paper, properly credited and are the license and terms of use explicitly mentioned and properly respected?
    \item[] Answer: \answerYes{} 
    \item[] Justification: Existing assets were cited and credited throughout this paper.
    \item[] Guidelines:
    \begin{itemize}
        \item The answer NA means that the paper does not use existing assets.
        \item The authors should cite the original paper that produced the code package or dataset.
        \item The authors should state which version of the asset is used and, if possible, include a URL.
        \item The name of the license (e.g., CC-BY 4.0) should be included for each asset.
        \item For scraped data from a particular source (e.g., website), the copyright and terms of service of that source should be provided.
        \item If assets are released, the license, copyright information, and terms of use in the package should be provided. For popular datasets, \url{paperswithcode.com/datasets} has curated licenses for some datasets. Their licensing guide can help determine the license of a dataset.
        \item For existing datasets that are re-packaged, both the original license and the license of the derived asset (if it has changed) should be provided.
        \item If this information is not available online, the authors are encouraged to reach out to the asset's creators.
    \end{itemize}

\item {\bf New assets}
    \item[] Question: Are new assets introduced in the paper well documented and is the documentation provided alongside the assets?
    \item[] Answer: \answerYes{} 
    \item[] Justification: Documentation was provided with the supplementary code.
    \item[] Guidelines:
    \begin{itemize}
        \item The answer NA means that the paper does not release new assets.
        \item Researchers should communicate the details of the dataset/code/model as part of their submissions via structured templates. This includes details about training, license, limitations, etc. 
        \item The paper should discuss whether and how consent was obtained from people whose asset is used.
        \item At submission time, remember to anonymize your assets (if applicable). You can either create an anonymized URL or include an anonymized zip file.
    \end{itemize}

\item {\bf Crowdsourcing and research with human subjects}
    \item[] Question: For crowdsourcing experiments and research with human subjects, does the paper include the full text of instructions given to participants and screenshots, if applicable, as well as details about compensation (if any)? 
    \item[] Answer: \answerNA{} 
    \item[] Justification: This paper does not conduct research with human subjects.
    \item[] Guidelines:
    \begin{itemize}
        \item The answer NA means that the paper does not involve crowdsourcing nor research with human subjects.
        \item Including this information in the supplemental material is fine, but if the main contribution of the paper involves human subjects, then as much detail as possible should be included in the main paper. 
        \item According to the NeurIPS Code of Ethics, workers involved in data collection, curation, or other labor should be paid at least the minimum wage in the country of the data collector. 
    \end{itemize}

\item {\bf Institutional review board (IRB) approvals or equivalent for research with human subjects}
    \item[] Question: Does the paper describe potential risks incurred by study participants, whether such risks were disclosed to the subjects, and whether Institutional Review Board (IRB) approvals (or an equivalent approval/review based on the requirements of your country or institution) were obtained?
    \item[] Answer: \answerNA{} 
    \item[] Justification: This paper does not conduct research with human subjects.
    \item[] Guidelines:
    \begin{itemize}
        \item The answer NA means that the paper does not involve crowdsourcing nor research with human subjects.
        \item Depending on the country in which research is conducted, IRB approval (or equivalent) may be required for any human subjects research. If you obtained IRB approval, you should clearly state this in the paper. 
        \item We recognize that the procedures for this may vary significantly between institutions and locations, and we expect authors to adhere to the NeurIPS Code of Ethics and the guidelines for their institution. 
        \item For initial submissions, do not include any information that would break anonymity (if applicable), such as the institution conducting the review.
    \end{itemize}

\item {\bf Declaration of LLM usage}
    \item[] Question: Does the paper describe the usage of LLMs if it is an important, original, or non-standard component of the core methods in this research? Note that if the LLM is used only for writing, editing, or formatting purposes and does not impact the core methodology, scientific rigorousness, or originality of the research, declaration is not required.
    \item[] Answer: \answerNA{} 
    \item[] Justification: The core methods presented in this research do not utilize Large LLMs as an essential, original, or non-standard component. LLMs were not used in any part of the model design, experimental evaluation, or optimization processes. Additionally, LLMs were not employed for data augmentation or pre-processing in our experiments.
    \item[] Guidelines:
    \begin{itemize}
        \item The answer NA means that the core method development in this research does not involve LLMs as any important, original, or non-standard components.
        \item Please refer to our LLM policy (\url{https://neurips.cc/Conferences/2025/LLM}) for what should or should not be described.
    \end{itemize}

\end{enumerate}

\newpage

\appendix

\renewcommand{\thetable}{\Alph{section}\arabic{table}}
\renewcommand{\thefigure}{\Alph{section}\arabic{figure}}
\renewcommand{\theequation}{\Alph{section}\arabic{equation}}

\setcounter{table}{0}
\setcounter{figure}{0}
\setcounter{equation}{0}

\label{sec:appendix}

\title{Appendix for ``Transfer Learning on Edge Connecting Probability
Estimation Under Graphon Model''}
\maketitle


 

%

\section{Notation Table}

\begin{table}[H]
\caption{Notations used throughout the paper}
\label{tab:notations}
\vskip 0.15in
\begin{center}
\resizebox{\textwidth}{!}{
\begin{tabular}{ll}
\toprule
\textbf{Notation} & \textbf{Description} \\
\midrule
$n_s$, $n_t$ & Number of nodes in the source and target graphs \\
$\mathbf{A}_s$, $\mathbf{A}_t$ & Adjacency matrices of source and target graphs, $\mathbf{A} \in \{0,1\}^{n \times n}$ \\
$\mathbf{P}_s$, $\mathbf{P}_t$ & True connection probability matrices for source and target graphs \\
$\mathbf{\hat P}^{\text{ini}}_s$, $\mathbf{\hat P}^{\text{ini}}_t$ & Initial estimators via neighborhood smoothing \\
$\mathbf{\hat P} ^{\text{trans}}_t$, $\mathbf{\hat P}^{\text{trans2}}_t$ & Transferred estimator before and after smoothing \\
$\mathbf{\hat P}^{\text{res}}_t$ & Smoothed residual graphon estimator \\
$\hat{\mathbf{P}}_t$ & Final graphon estimator for the target domain \\
$\hat{\mathbf{P}}_{ij}$ & Estimated connecting probability between node $i$ and $j$ \\
$\mathrm{Ber}(p)$ & Bernoulli distribution with parameter $p$ \\ 
$f_s$, $f_t$ & Latent graphon functions for source and target domains \\
$u_{s,i}$, $u_{t,i}$ & Latent position of node $i$ in $[0,1]$ \\
$\Pi(\mu, \nu)$ & Set of couplings (transport plans) between $\mu$ and $\nu$ \\
$\hat{\pi} \in [0,1]^{n_s \times n_t}$ & The optimal transport plan estimated by Gromov-Wasserstein \\ 
$\tilde{\pi} \in [0,1]^{n_s \times n_t}$ & Normalized transport plan \\
$\delta$ & Threshold for triggering the debiasing step \\
$\epsilon$ & Entropic regularization parameter in Gromov-Wasserstein optimization \\ 
$\lambda$ & Density shift parameter for source-target perturbation.\\
$\mathrm{KL}(\pi \mid \mu \otimes \nu)$ & Kullback-Leibler divergence: $\mathrm{KL}(\pi \mid \mu \otimes \nu) = \sum_{i, j} \pi_{ij} \log \frac{\pi_{ij}}{\mu_i \nu_j}$ \\
$\mathbf{P}_s \otimes \mathbf{P}_t$ & Kronecker product of $\mathbf{P}_s$ and $\mathbf{P}_t$ \\ 
$\| \cdot \|_F$ & Frobenius norm: $\| \mathbf{X} \|_F = \sqrt{\sum_{i,j} X_{ij}^2}$ \\ 
$\| \cdot \|_2$ & $L_2$ norm: $\| f - g \|_2 = \left(\int_0^1 \int_0^1 (f(u,v) - g(u,v))^2 \, du \, dv \right)^{1/2}$ \\ 
$\| \cdot \|_\infty$ & Infinity norm: $\| \mathbf{X} \|_\infty = \max_{i,j} |X_{ij}|$ \\ 
$\| \cdot \|_{\operatorname{op}}$ & Operator norm: $\| \mathbf{X} \|_{\operatorname{op}} = \sigma_{\max}(\mathbf{X})$ \\ 

\bottomrule
\end{tabular}
}
\end{center}
\vskip -0.1in
\end{table}

\section{Proof of Theorem \ref{thm:main_thm}}
In this proof, we will use the notation $B(x; \tau)$ to denote a ball of radius $\tau$ around $x$, with respect to some appropriate distance, which will be clear from the context. 
Before delving into the proof, let us first recall the definition of strong convexity: 
\begin{definition}
    A function $f$ is said to be strongly convex in a neighborhood around $x_*$ (namely $B(x_*; \tau)$, if $f$ satisfies: 
    $$
    f(y) \ge f(x) + \langle y -x , \nabla f(x) \rangle + \frac{\mu}{2}\|y - x\|_2^2 
    $$
\end{definition}

Our proof relies on an application of the following lemma: 
\begin{lemma}
    \label{lem:combined}
    Suppose $f$ is a convex function which is minimized at $x_*$ and furthermore it is $\mu$-strongly convex on $B(x_*; \tau) = \{x: \|x-x_*\|_2 \le \tau\}$. Consider a perturbation $g$ of $f$ such that i) $\|f - g\|_\infty \le \delta$ and ii) $g-f$ is Lipschitz with Lipschitz-constant $\kappa$. If $\delta \le \mu \tau^2/4$, we have: 
    $$
    \|x_* - y_*\|_2 \le \frac{2\kappa}{\mu} \equiv \frac{2\|f - g\|_{\rm Lip}}{\mu}. 
    $$
\end{lemma}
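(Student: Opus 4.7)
The plan is a two-step argument: first localize the perturbed minimizer $y_*$ inside the region where $f$ is strongly convex, then use strong convexity together with the Lipschitz perturbation bound to control $\|x_* - y_*\|_2$.

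For the localization step, I would first show that $f$ grows by at least $\mu\tau^2/2$ outside the ball $B(x_*;\tau)$. Specifically, for any $x$ with $\|x-x_*\|_2 = R > \tau$, set $x' = x_* + (\tau/R)(x-x_*)$, which lies on the boundary of the ball. Global convexity of $f$ gives $f(x') \le (\tau/R)f(x) + (1 - \tau/R)f(x_*)$, while strong convexity on the ball (together with $\nabla f(x_*) = 0$) gives $f(x') \ge f(x_*) + \mu\tau^2/2$. Rearranging yields $f(x) - f(x_*) \ge (R/\tau)\cdot\mu\tau^2/2 \ge \mu\tau^2/2$. The uniform bound $\|f-g\|_\infty \le \delta$ then gives, for such $x$,
\[
g(x) - g(x_*) \;\ge\; \bigl(f(x) - f(x_*)\bigr) - 2\delta \;\ge\; \tfrac{\mu\tau^2}{2} - 2\delta \;\ge\; 0,
\]
using the hypothesis $\delta \le \mu\tau^2/4$. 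Hence any minimizer $y_*$ of $g$ must lie in the closed ball $B(x_*;\tau)$.

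For the quantitative step, set $h := g - f$, which is $\kappa$-Lipschitz. Since $x_*$ is the (unconstrained) minimizer of the convex $f$ and $x_*$ is interior, $\nabla f(x_*) = 0$, and strong convexity on the ball gives
\[
f(y_*) - f(x_*) \;\ge\; \tfrac{\mu}{2}\|y_* - x_*\|_2^2.
\]
On the other hand, since $y_*$ minimizes $g = f + h$, one has $g(y_*) \le g(x_*)$, i.e.\
\[
f(y_*) - f(x_*) \;\le\; h(x_*) - h(y_*) \;\le\; \kappa\,\|y_* - x_*\|_2.
\]
Chaining these two inequalities and cancelling one factor of $\|y_*-x_*\|_2$ (trivial if it is zero) yields $\|y_* - x_*\|_2 \le 2\kappa/\mu$, which is the claim.

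The only genuinely delicate part is the localization step: if $y_*$ could lie outside $B(x_*;\tau)$, then strong convexity would not apply at $y_*$ and the argument would fail. The condition $\delta \le \mu\tau^2/4$ is precisely what ensures that the $\|\cdot\|_\infty$ perturbation cannot push the global minimum of $g$ out of the strongly-convex region, so the two conditions in the hypothesis play complementary roles: $\|f-g\|_\infty \le \delta$ controls escape from the ball, while the Lipschitz bound $\kappa$ controls how far the minimizer can drift inside it. Everything else is a short deterministic calculation; no additional regularity of $g$ (beyond existence of a minimizer) is needed.
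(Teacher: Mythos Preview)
Your argument is correct and structurally identical to the paper's: a localization step (the paper's Lemma~\ref{lem:loc_min}) showing $y_*\in B(x_*;\tau)$, followed by the strong-convexity-plus-Lipschitz bound (the paper's Lemma~\ref{lem:minimize_local}), which you reproduce verbatim. One minor difference worth noting in the localization: the paper bounds $g$ along the segment $[x_*,y_*]$ by writing $g(x(t))\le t\,g(x_*)+(1-t)\,g(y_*)$, which tacitly uses convexity of $g$ (not among the hypotheses), whereas you use the assumed global convexity of $f$ to first show $f(x)-f(x_*)\ge\mu\tau^2/2$ outside the ball and then pass to $g$ via $\|f-g\|_\infty\le\delta$. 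Your route therefore relies only on the stated assumptions, which is a small improvement.
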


\begin{proof}
We divide the proof of Lemma \ref{lem:combined} into two smaller lemmas. The first lemma is as follows:  
\begin{lemma}
    \label{lem:minimize_local}
    Consider two functions $f$ and $g$, such that $f- g$ is $\kappa$-Lipschitz. Suppose $x_*$ and $y_*$ are minimizer of $f$ and $g$ respectively. If $f$ is strongly convex on $B(x_* ; \tau)$, and $y_* \in B(x_*; \tau)$, then 
    $$
    \|x_* - y_*\|_2 \le \frac{2\kappa}{\mu}. 
    $$
\end{lemma}

\begin{proof}
    The proof essentially follows from Proposition 4.32 of \cite{bonnans2013perturbation}. Here, we sketch the proof for the ease of the readers. As $y_*$ is the minimizer of $g$, we have: 
    \begin{align*}
        f(y_*) - f(x_*) & = (f - g)(y_*) - (f - g)(x_*) + \underbrace{g(y_*) - g(x_*)}_{\le 0} \\
        & \le (f - g)(y_*) - (f - g)(x_*) \le \kappa \|x_* - y_*\|_2  \,.
    \end{align*}
    On the other, from the strong convexity of $f$ on $B(x_*; \tau)$ and the assumption that $y_* \in B(x_*; \tau)$, we have: 
    $$
    f(y_*) \ge f(x_*) + \frac{\mu}{2}\|x_* - y_*\|_2^2 \,,
    $$
    where we use the fact that $\nabla f(x_*) = 0$. 
    Combining these two equations, we conclude: 
    $$
    \frac{\mu}{2}\|x_* - y_*\|_2^2  \le f(y_*) - f(x_*) \le \kappa \|x_* - y_*\|_2 \implies \|x_* - y_*\|_2 \le \frac{2\kappa}{\mu} \,.
    $$
    This completes the proof. 
\end{proof}
One of the requirements of Lemma \ref{lem:minimize_local} is that $y_* \in B(x_*, \tau)$. The following lemma gives a sufficient condition for this condition to be satisfied: 
\begin{lemma}
    \label{lem:loc_min}
    Assume $f$ is $\mu$-strongly convex on $B(x_*; \tau)$. 
    If $\|f - g\|_\infty \le \delta$, $\delta \le \tau \mu^2/4$, then the minimizer of $g$ also lies in $B(x_*; \tau)$.  
\end{lemma}
\begin{proof}
    Suppose $y_* \notin B(x_*; \tau)$. Then $\|x_* - y_*\| > \tau$. Therefore, $\exists t \in (0, 1)$ such that $x(t) = tx_* + (1-t)y_*$ lies on the boundary, i.e., $\|x_* - x(t)\|_2 = \tau$. By the strong convexity of $f$ on $B(x_*; \tau)$, we have: 
    $$
    f(x(t)) \ge f(x_*) + \frac{\mu \tau^2}{2} \,.
    $$
    On the other hand: 
    \begin{align*}
        f(x(t)) \le g(x(t)) + \delta \le t g(x_*) + (1-t)g(y_*) + \delta \le g(x_*) + \delta \,.
    \end{align*}
Hence, we can conclude that: 
$$
g(x_*) \ge f(x_*) + \frac{\mu \tau^2}{2} - \delta 
$$
This immediate contradicts the fact that $|g(x_*) - f(x_*)| \le \delta$ as $\tau^2 > 4\delta/\mu$. This completes the proof. 
\end{proof}
Finally, the claim in Lemma \ref{lem:combined} is established by combining the arguments of 
\ref{lem:minimize_local} and Lemma \ref{lem:loc_min}. 
\end{proof}
We use Lemma \ref{lem:combined} to complete the proof of Theorem \ref{thm:main_thm}. 
With $\mu = (1/n_s, \cdots, 1/n_s)$ and $\nu = (1/n_t, \cdots, 1/n_t)$
The oracla EGW optimization problem (with respect to $\bP_s, \bP_t$) can be written as: 
\begin{align*}
& \cL(\pi) \\
& = \frac12 \sum_{ijkl} (\bP_{s, ik} - \bP_{t, jl})^2 \pi_{ij}\pi_{kl} + \eps \sum_{ij}\pi_{ij}\log{(n_sn_t \ \pi_{ij})}  \\
& = \frac12 \sum_{ijkl} \bP^2_{s, ik}\pi_{ij}\pi_{kl} + \frac12 \sum_{ijkl}\bP_{t, jl}^2\pi_{ij}\pi_{kl} - \sum_{ijkl}\bP_{s, ik}\bP_{t, jl})\pi_{ij}\pi_{kl} +  \eps \sum_{ij}\pi_{ij}\log{\pi_{ij}} +  \log{(n_sn_t)}\eps \underbrace{\sum_{ij}\pi_{ij}}_{=1} \\
& = \frac12 \|\bP_s\|_F^2 + \frac12 \|\bP_t\|_F^2 - \sum_{ijkl}\bP_{s, ik}\bP_{t, jl}\pi_{ij}\pi_{kl} +  \eps \sum_{ij}\pi_{ij}\log{\pi_{ij}} + \eps \log{(n_sn_t)} \,.
\end{align*}
It is immediate that the first, second, and fourth terms do not contribute to the estimation of $\pi$. Ignoring them, we redefine the oracle loss function as: 
\begin{align*}
f(\pi) & = -\sum_{ijkl}\bP_{s, ik}\bP_{t, jl}\pi_{ij}\pi_{kl} +  \eps \sum_{ij}\pi_{ij}\log{\pi_{ij}}  = -2 \tr(\pi^\top \bP_s \pi \bP_t) + \eps \sum_{ij}\pi_{ij}\log{\pi_{ij}}
\end{align*}
Similary, we define $g(\cdot)$ as the sample version (with respect to $(\hat \bP_s^{\rm init}, \hat \bP_t^{\rm init})$: 
$$
g(\pi) =  -\sum_{ijkl}\hat \bP_{s, ik}\hat \bP_{t, jl}\pi_{ij}\pi_{kl} +  \eps \sum_{ij}\pi_{ij}\log{\pi_{ij}}  = -2 \tr(\pi^\top \hat \bP_s \pi \hat \bP_t) + \eps \sum_{ij}\pi_{ij}\log{\pi_{ij}}\,.
$$
For notation simplicity define $\delta_1 = \|\bP_s - \hat \bP_s\|_{\infty, \infty}$ and $\delta_2 = \|\bP_t - \hat \bP_t\|_{\infty, \infty}$. We have: 
\begin{align*}
    \frac12\left|f(\pi) - g(\pi)\right| & = \left| \tr(\pi^\top \bP_s \pi \bP_t^\top) -  \tr(\pi^\top \hat \bP_s \pi \hat \bP_t^\top)\right| \\
    & = \left|\sum_{i, j} \pi_{ij} \left(\bP_s\pi \bP_t^\top - \hat \bP_s \pi \hat \bP_t^\top\right)_{ij}\right| \\
    & \le \max_{i, j}\left|\bP_s\pi \bP_t^\top - \hat \bP_s \pi \hat \bP_t^\top\right|_{ij} \underbrace{\sum_{i, j} \pi_{ij}}_{=1} \\
    & =  \max_{i, j}\left|\bP_s\pi \bP_t^\top - \hat \bP_s \pi \hat \bP_t^\top\right|_{ij} \\
    & \le \max_{i, j} \left|\bP_s\pi \bP_t^\top - \bP_s \pi \hat \bP_t^\top\right|_{ij} + \max_{i, j} \left|\bP_s \pi \hat \bP_t^\top - \hat \bP_s \pi \hat \bP_t^\top\right|_{ij} \\
    & = \max_{i, j} \left|\sum_{k}(\bP_s \pi)_{ik} (\bP_t - \hat \bP_t)_{jk}\right|+ \max_{i, j} \left|\sum_{k}(\bP_s - \hat \bP_s)_{ik}(\pi \hat \bP_t^\top)_{kj}\right| \\
    & \le \|\bP_t - \hat \bP_t\|_{\infty, 
    \infty} \max_{i} \sum_{k}|(\bP_s \pi)|_{ik}  + \|\hat \bP_s - \bP_s\|_{\infty, \infty} \max_{j} \sum_k |(\pi \hat \bP_t^\top)_{kj}| \\
    & \le \delta_2 \ \max_{i} \sum_{k}|(\bP_s \pi)|_{ik}  + \delta_1 \ \max_{j} \sum_k |(\pi \hat \bP_t^\top)_{kj}| \\
    & \le \delta_2 \ \max_{i} \sum_{k}|\sum_l \bP_{s,il}\pi_{lk}| +\delta_1 \  \max_{j} \sum_k |\sum_l \pi_{kl} \bP_{t, jl}| \\
    & \le \delta_2 \|\bP_s\|_\infty \ \sum_{kl}\pi_{kl} + \delta_1 \ \|\hat \bP_t\|_\infty \sum_{kl}\pi_{kl} :=  \delta_2 \|\bP_s\|_\infty + \delta_1 \ \|\tilde \bP_t\|_\infty = \delta_1 + \delta_2 \,.
\end{align*}
Therefore, we conclude that 
$$
\|f - g\|_\infty \le \|\bP_s - \hat \bP_s\|_{\infty, \infty} + \|\bP_t - \hat \bP_t\|_{\infty, \infty} := \Delta_{\rm pert}\,.
$$
Now, from the definition of $f$ and $g$, we also have: 
\begin{align*}
\nabla_{\pi}f(\pi) & = -\left(\bP_s \otimes \bP_t + \bP_s^\top \otimes \bP_t^\top\right)\Vect(\pi)  + \eps ((1 + \log{\pi_{ij}}))_{i, j}\\
\nabla_{\pi}g(\pi) & =  -\left(\hat \bP_s \otimes \hat \bP_t + \hat \bP_s^\top \otimes \hat \bP_t^\top\right)\Vect(\pi) + \eps ((1 + \log{\pi_{ij}}))_{i, j}\\
\nabla^2_{\pi}f(\pi) &= -\left(\bP_s \otimes \bP_t + \bP_s^\top \otimes \bP_t^\top\right) + \eps \diag\left(\left\{\frac{1}{\pi_{ij}}\right\}\right)\\
\nabla^2_{\pi}g(\pi) &= -\left(\hat \bP_s \otimes \hat \bP_t + \hat \bP_s^\top \otimes \hat \bP_t^\top\right) + \eps \diag\left(\left\{\frac{1}{\pi_{ij}}\right\}\right)
\end{align*}
As a consequence, we have: 
$$
\left\|\nabla_\pi(g - f)(\pi)\right\|_F \le 2\left\|(\bP_s \otimes \bP_t) - (\hat \bP_s \otimes \hat \bP_t)\right\|_{\rm op}\|\pi\|_F \,.
$$
Hence, we have: 
$$
\kappa = 2\left\|(\bP_s \otimes \bP_t) - (\hat \bP_s \otimes \hat \bP_t)\right\|_{\rm op} \,.
$$
Now, from the Hessian, we know $\nabla_\pi^2 f(\pi^*) \succ 0$ if $\max_{ij} \pi^*_{ij} < \eps/(2 \|\bP_s \otimes \bP_t\|_{\rm op})$. This condition is met as per our assumption that 
 $$
   \|\pi^*\|_\infty \le \frac{\eps}{C_1\|\bP_s \otimes \bP_t\|_{\rm op}} \,.
$$
with $C_1 > 2$. Fix $C_2$ such that $(1/C_2) + (1/C_1) < (1/2)$, and set $\tau = \eps/(C_2\|\bP_s \otimes \bP_t\|_{\rm op})$. Then, for any $\|\pi - \pi^*\|_F \le \tau$, 
$$
    \max_{ij}\pi_{ij} \le \tau + \max_{ij}\pi^*_{ij} \le \frac{\eps}{C_2\|\bP_s \otimes \bP_t\|_{\rm op}} + \frac{\eps}{C_1\|\bP_s \otimes \bP_t\|_{\rm op}} < \frac{\eps}{2\|\bP_s \otimes \bP_t\|_{\rm op}} \,.
$$
In this neighborhood, we have: 
$$
\inf_{\pi \in B(\pi^*; \tau)} \inf_{\|v\|_2 = 1} v^\top \nabla_\pi^2 f(\pi)v \ge \left\{\left(\frac{1}{C_1} + \frac{1}{C_2}\right)^{-1} - 2\right\}\|\bP_s \otimes \bP_t\|_{\rm op} := \frac{1}{C_3}\|\bP_s \otimes \bP_t\|_{\rm op} := \frac{\mu}{2}\,.
$$
Therefore, using Lemma \ref{lem:combined} we can conclude: 
$$
\left\|\hat \pi - \pi^*\right\|_F \le \frac{20\left\|(\bP_s \otimes \bP_t) - (\hat \bP_s \otimes \hat \bP_t)\right\|_{\rm op}}{\|\bP_s \otimes \bP_t\|_{\rm op}} \qquad \text{as soon as   } \qquad \Delta_{\rm pert} \le \frac{\tau^2 \mu}{4}\,.
$$

By our definition of $\delta$ and $\mu$, we have: 
\[
\frac{\tau^2 \mu}{4} = \frac{\left(\frac{\epsilon}{C_2 \|\bP_s \otimes \bP_t\|_{\operatorname{op}}}\right)^2}{4} \cdot \frac{2 \|\bP_s \otimes \bP_t\|_{\operatorname{op}}}{C_3}.
\]
Recall that our assumption is:
\[
\Delta_{\text{pert}} = \|\bP_s - \hat{\bP}_s\|_{\infty, \infty} + \|\bP_t - \hat{\bP}_t\|_{\infty, \infty} \le \frac{\tau^2 \mu}{4}.
\]
Substituting the expression for $\tau^2 \mu$:
\[
\Delta_{\text{pert}} \le \frac{\epsilon^2}{2C_2^2 C_3 (\|\bP_s \otimes \bP_t\|_{\operatorname{op}})}.
\]
This guarantees that the perturbed solution $\hat{\pi}$ remains in the neighborhood $B(\pi^*, \tau)$ of the true solution $\pi^*$. Thus, the stability of the optimal transport plan is ensured under perturbations of the graphon estimators, completing the proof.

\section{Proof of Theorem \ref{thm:gw_similarity}}
\begin{proof}
    The proof is simple, which depends on the estimator error of $\bP_s$ and $\bP_t$. Recall that: 
    \begin{align*}
        \mathrm{GW}^2_2(\hat \bP_s^{\rm init}, \hat \bP_t^{\rm init}) &= \inf_{\pi \in \Gamma_n} \sum_{i, i'= 1}^{n_S} \sum_{j , j'  = 1}^{n_T} |\hat \bP^{\rm init}_{S, ii'} - \hat P_{T, jj'}|^2\pi_{ij} \pi_{i'j'} \\
        \mathrm{GW}^2_2(\bP_s, \bP_t) & = \inf_{\pi \in \Gamma_n} \sum_{i, i'= 1}^{n_S} \sum_{j , j'  = 1}^{n_T} |P_{S, ii'} - P_{T, jj'}|^2\pi_{ij} \pi_{i'j'} 
    \end{align*}
where $\Gamma_n$ is the set of couplings, i.e.
$$
\Gamma_n = \left\{\pi: \sum_i \pi_{ij} = \frac{1}{n_T}, \quad \sum_j \pi_{ij} = \frac{1}{n_S}\right\} \,.
$$
Now let $\hat \pi_n$ and $\pi_n$ denotes the minimizer/optimal coupling of $ \mathrm{GW}^2_2(\hat \bP_s^{\rm init}, \hat \bP_t^{\rm init})$ and $\mathrm{GW}^2_2(\bP_s, \bP_t)$ respectively. Then we have: 
\begin{align*}
     & \mathrm{GW}^2_2(\bP_s, \bP_t) \\
     & = \inf_{\pi \in \Gamma_n} \sum_{i, i'= 1}^{n_S} \sum_{j , j'  = 1}^{n_T} |\bP_{S, ii'} - \bP_{T, jj'}|^2\pi_{ij} \pi_{i'j'} \\
    & \le \sum_{i, i'= 1}^{n_S} \sum_{j , j'  = 1}^{n_T} |\bP_{S, ii'} - \bP_{T, jj'}|^2 \hat \pi_{n, ij} \hat \pi_{n, i'j'} \\
    & = \sum_{i, i'= 1}^{n_S} \sum_{j , j'  = 1}^{n_T} |\bP_{S, ii'} - \hat \bP^{\rm init}_{S, ii'} + \hat \bP^{\rm init}_{S, ii'} - \hat \bP^{\rm init}_{T, jj'} + \hat \bP^{\rm init}_{T, jj'} - \bP_{T, jj'}|^2\hat \pi_{n, ij} \hat \pi_{n, i'j'}  \\
    & \le 4 \left[\sum_{i, i'= 1}^{n_S} \sum_{j , j'  = 1}^{n_T} |\hat \bP^{\rm init}_{S, ii'} - \bP_{S, ii'}|^2\hat \pi_{n, ij} \hat \pi_{n, i'j'}  \right. \\
    & \qquad \left. + \sum_{i, i'= 1}^{n_S} \sum_{j , j'  = 1}^{n_T} | \hat \bP^{\rm init}_{S, ii'} - \hat \bP^{\rm init}_{T, jj'} |^2\hat \pi_{n, ij} \hat \pi_{n, i'j'} + \sum_{i, i'= 1}^{n_S} \sum_{j , j'  = 1}^{n_T} | \bP_{T, jj'} - \hat \bP^{\rm init}_{T, jj'}|^2\hat \pi_{n, ij} \hat \pi_{n, i'j'} \right] \\
     & \le 4 \left[\frac{1}{n_S^2}\sum_{i, i'= 1}^{n_S} |\hat \bP^{\rm init}_{S, ii'} - \bP_{S, ii'}|^2   + \sum_{i, i'= 1}^{n_S} \sum_{j , j'  = 1}^{n_T} | \hat \bP^{\rm init}_{S, ii'} - \hat \bP^{\rm init}_{T, jj'} |^2\hat \pi_{n, ij} \hat \pi_{n, i'j'} + \frac{1}{n_T^2}\sum_{j , j'  = 1}^{n_T} | \bP_{T, jj'} - \hat \bP^{\rm init}_{T, jj'}|^2\right] \\
    & \le 4\left(\delta_{n_S} + \mathrm{GW}^2_2(\hat \bP^{\rm init}_S, \hat \bP^{\rm init}_T) + \delta_{n_T}\right) = 4\left(\delta_n + \mathrm{GW}^2_2(\hat \bP^{\rm init}_S, \hat \bP^{\rm init}_T)\right)\,.
\end{align*}
Similarly, for the other side: 
\begin{align*}
   & \mathrm{GW}^2_2(\hat \bP_s^{\rm init}, \hat \bP_t^{\rm init}) \\
   & = \inf_{\pi \in \Gamma_n} \sum_{i, i'= 1}^{n_S} \sum_{j , j'  = 1}^{n_T} |\hat \bP^{\rm init}_{S, ii'} - \hat \bP^{\rm init}_{T, jj'}|^2\pi_{ij} \pi_{i'j'} \\
    & \le \sum_{i, i'= 1}^{n_S} \sum_{j , j'  = 1}^{n_T} |\hat \bP^{\rm init}_{S, ii'} - \hat \bP^{\rm init}_{T, jj'}|^2\pi_{n, ij} \pi_{n, i'j'} \\
    & = \sum_{i, i'= 1}^{n_S} \sum_{j , j'  = 1}^{n_T} |\hat \bP^{\rm init}_{S, ii'} -\bP_{S, ii'} + \bP_{S, ii'} - \bP_{T, jj'} + \bP_{T, jj'} - \hat \bP^{\rm init}_{T, jj'}|^2\pi_{n, ij} \pi_{n, i'j'} \\
    & \le 4 \left[\sum_{i, i'= 1}^{n_S} \sum_{j , j'  = 1}^{n_T} |\hat \bP^{\rm init}_{S, ii'} - \bP_{S, ii'}|^2\pi_{n, ij} \pi_{n, i'j'}  \right. \\
    & \qquad \left. + \sum_{i, i'= 1}^{n_S} \sum_{j , j'  = 1}^{n_T} | \bP_{S, ii'} - \bP_{T, jj'}|^2\pi_{n, ij} \pi_{n, i'j'} + \sum_{i, i'= 1}^{n_S} \sum_{j , j'  = 1}^{n_T} | 
    \bP_{T, jj'} - \hat \bP^{\rm init}_{T, jj'}|^2\pi_{n, ij} \pi_{n, i'j'}\right] \\
    & \le 4 \left[\frac{1}{n_S^2}\sum_{i, i'= 1}^{n_S} |\hat \bP^{\rm init}_{S, ii'} - \bP_{S, ii'}|^2  +\sum_{i, i'= 1}^{n_S} \sum_{j , j'  = 1}^{n_T} | \bP_{S, ii'} - \bP_{T, jj'}|^2\pi_{n, ij} \pi_{n, i'j'} + \frac{1}{n_T^2}\sum_{j , j'  = 1}^{n_T} | 
    \bP_{T, jj'} - \hat \bP^{\rm init}_{T, jj'}|^2\right] \\
    & \le 4\left(\delta_{n_S} + \mathrm{GW}^2_2(\bP_s, \bP_t) + \delta_{n_T}\right) = 4\left(\delta_n + \mathrm{GW}^2_2(\bP_s, \bP_t) \right) \,.
\end{align*}
\end{proof}

\section{Proof of Theorem \ref{thm:gw_alignment_bound}}

\begin{proof}
We start with the definition of the GW distance: 
\begin{align*}
    \mathrm{GW}_2^2(f, g) & = \int \int (f(x, x') - g(y, y'))^2 \ d\pi^*(x, y) \ d\pi^*(x', y') \\
    & = \int f^2(x,x') \ dx \ dx' + \int g^2(y, y') \ dy \ dy' - 2 \int \int f(x, x') g(y, y')d\pi^*(x, y) \ d\pi^*(x', y') \\
    & = \int f^2(x,x') \ dx \ dx' + \int g^2(y, y') \ dy \ dy' - 2 \int \int \tilde g(y, y') g(y, y') \ dy \ dy' 
\end{align*}
Furthermore, we have: 
\begin{align*}
\int \tilde g^2(y, y') \ dy \ dy' & = \int \left(\int_{[0, 1]^2} \pi^*(y, x)f(x, x')\pi^*(x', y) \ dx \ dx'\right)^2 \ dydy' \\
& \le \int \int f^2(x, x')  \pi^*(y, x)\pi^*(x', y) \ dxdx'dydy' \\
& =  \int f^2(x,x') \ dx \ dx' \,.
\end{align*}
Therefore, we have: 
\begin{align*}
    \|\tilde g - g\|_2^2 & = \int_{[0, 1]^2} \ (\tilde g(y, y') - g(y, y'))^2 \ dydy' \\
    & = \int \tilde g^2(y, y') \ dydy' + \int g^2(y, y') \ dy dy' - 2 \int \tilde g(y, y') g(y, y') \ dy dy' \\
    & \le \int f^2(x, x') \ dx dx' + \int g^2(y, y') \ dy dy' - 2 \int \int f(x, x') g(y, y') \ d\pi^*(x, y) d\pi^*(x', y') \\
    & = GW_2^2(f, g) \,.
\end{align*}
This completes the proof. 
\end{proof}

\section{Additional Simulation Results}

\subsection{Additional results for Impact of Density Shift} 

To evaluate the robustness of $\our$ under density discrepancies between the source and target domains, we introduce a density shift parameter $\lambda \in [-0.5, 0.5]$ to simulate structured perturbations, defined as $f_s(u, v) = f_t(u, v) + \xi$, where $\xi \sim U(0, \lambda)$ if $\lambda > 0$ and $\xi \sim U(\lambda, 0)$ if $\lambda < 0$. We investigate the performance of $\our$-GW and $\our$-EGW across 10 different graphon functions (IDs: 1 to 10) with a fixed target sample size of 50 and a source sample size of 200. Each experiment is repeated 50 times to evaluate the adaptability of the models under varying levels of density shift. The results are summarized in Figure \ref{fig:result_lambda_v1}.

\begin{figure}[H]
    \centering
    \includegraphics[width=\textwidth]{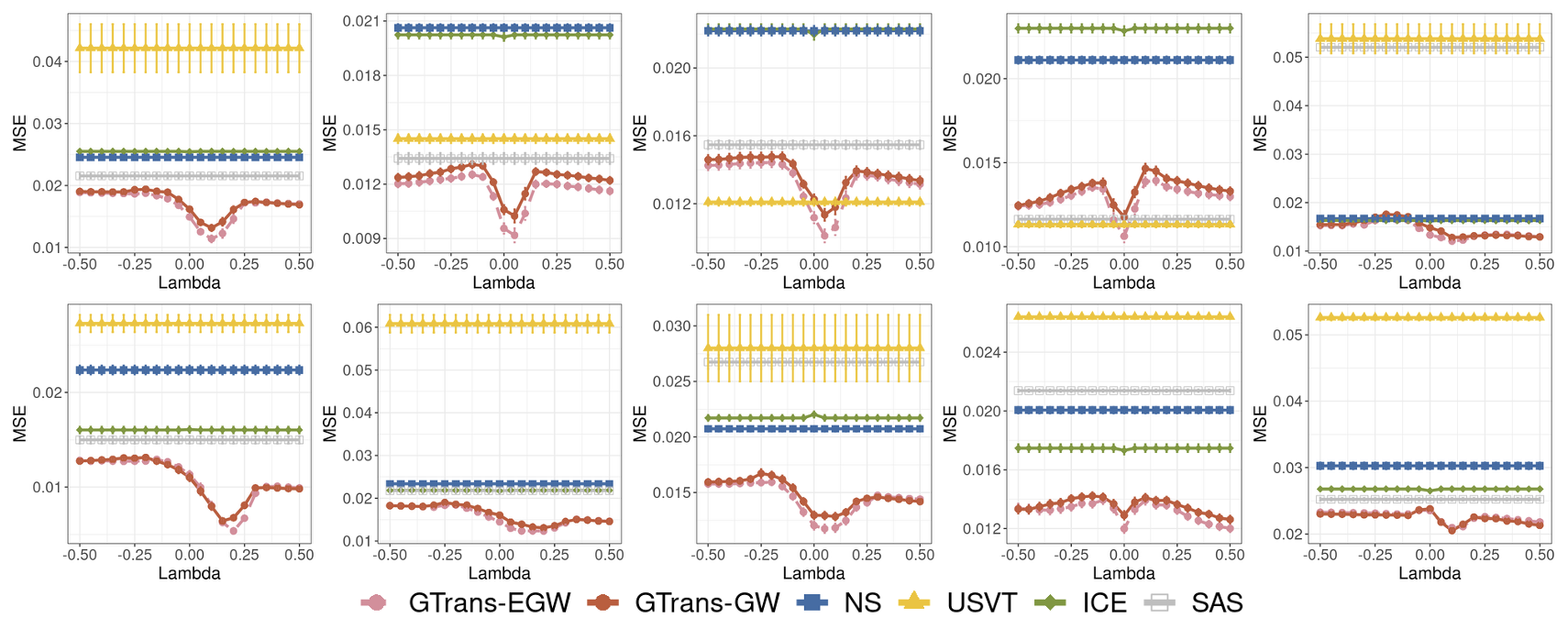}
    \caption{MSE performance of five methods as the density shift parameter $\lambda$ varies from $-0.5$ to $0.5$, with error bars representing $\pm 0.1$ standard deviations.$\mbox{\our-GW}$ (red circles with solid line), $\mbox{\our-EGW}$ (pink circles with dashed line), NS  (blue squares with solid line), USVT (yellow triangles with solid line),  ICE (green diamonds with solid line), SAS (gray hollow squares with solid line). Top row: graphons 1-5; bottom row: graphons 5-10. }
    \label{fig:result_lambda_v1}
\end{figure}

Figure \ref{fig:result_lambda_v1} shows the average MSE of different methods as the density shift parameter $\lambda$ varies from $-0.5$ to $0.5$.  Both \textbf{$\our$-GW (red solid line)} and \textbf{$\our$-EGW (pink dashed line)} demonstrate substantial improvements over the baseline methods (NS, USVT, ICE, and SAS), which remain constant across all $\lambda$ values. Notably, as $\lambda$ shifts from negative to positive, $\our$ methods effectively adjust their alignment, capturing structural changes and minimizing MSE.  This result underscores the capability of $\our$-GW and $\our$-EGW to dynamically adapt to source-target discrepancies, achieving consistently lower MSE across all scenarios. Also, we notice that for dense graphons like Graphon 2 (density = 0.5) and Graphon 9 (density = 0.5), $\our$-GW and $\our$-EGW exhibit symmetric U-shaped MSE curves around zero, indicating balanced performance regardless of source density. In contrast, for sparse graphons like Graphon 6 (density = 0.13) and Graphon 10 (density = 0.19), the lowest MSE is achieved with a slightly denser source, highlighting the need for stronger initial signals when the target is sparse. Unique structural patterns in Graphon 5 (anti-diagonal) and Graphon 7 (oscillatory) cause non-monotonic fluctuations under density shift, reflecting sensitivity to structural misalignments.

\subsection{Additional Simulation for Transfer Between Different Graphons}

We conduct simulations across ten distinct graphon structures, indexed from 1 to 10. For each graphon, we generate a target domain with a fixed sample size of $n_t = 50$ and a source domain with $n_s = 500$, simulating realistic structural perturbations by adding uniform noise sampled from $[-0.01, 0.01]$. The evaluation includes USVT, ICE, and SAS, and our proposed transfer learning frameworks: $\our$-GW and $\our$-EGW ($\epsilon = 0.01$). For comprehensive analysis, we compute the Mean Squared Error (MSE) for each method and aggregate the results.

\begin{table}[H]
\centering
\caption{Comparison of different methods for various transfer scenarios. Best-performing methods are \textbf{bolded}.}
\label{tab:our_vs_baseline_1}
\scriptsize
\resizebox{\textwidth}{!}{%
\begin{tabular}{l|c|c|c|c|c|c}
\toprule
Scenario & $\our$-GW & $\our$-EGW & NS & USVT & ICE & SAS \\
\midrule
$1 \rightarrow 2$ & 1.4 $\pm$ 0.3 & \textbf{1.3 $\pm$ 0.3} & 2.1 $\pm$ 0.2 & 1.5 $\pm$ 0.3 & 2.1 $\pm$ 0.2 & 1.4 $\pm$ 0.4 \\
$1 \rightarrow 3$ & 1.6 $\pm$ 0.3 & 1.5 $\pm$ 0.3 & 2.3 $\pm$ 0.3 & \textbf{1.2 $\pm$ 0.2} & 2.2 $\pm$ 0.3 & 1.6 $\pm$ 0.3 \\
$1 \rightarrow 4$ & 1.3 $\pm$ 0.1 & 1.3 $\pm$ 0.2 & 2.1 $\pm$ 0.2 & \textbf{1.0 $\pm$ 0.1} & 2.3 $\pm$ 0.2 & 1.1 $\pm$ 0.2 \\
$1 \rightarrow 5$ & 1.6 $\pm$ 0.2 & \textbf{1.5 $\pm$ 0.2} & 1.7 $\pm$ 0.1 & 3.5 $\pm$ 1.5 & 1.7 $\pm$ 0.2 & 5.5 $\pm$ 1.0 \\
$1 \rightarrow 6$ & \textbf{1.0 $\pm$ 0.2} & 1.1 $\pm$ 0.2 & 2.1 $\pm$ 0.3 & 3.0 $\pm$ 0.9 & 1.7 $\pm$ 0.3 & 1.6 $\pm$ 0.4 \\
$1 \rightarrow 7$ & 1.1 $\pm$ 0.4 & \textbf{0.9 $\pm$ 0.4} & 2.3 $\pm$ 0.3 & 5.9 $\pm$ 2.7 & 2.2 $\pm$ 0.3 & 2.4 $\pm$ 0.6 \\
$1 \rightarrow 8$ & 1.5 $\pm$ 0.3 & \textbf{1.4 $\pm$ 0.3} & 2.0 $\pm$ 0.2 & 1.5 $\pm$ 0.2 & 2.1 $\pm$ 0.2 & 2.8 $\pm$ 0.7 \\
$1 \rightarrow 9$ & 1.8 $\pm$ 0.4 & \textbf{1.7 $\pm$ 0.3} & 2.0 $\pm$ 0.2 & 2.6 $\pm$ 0.1 & 1.7 $\pm$ 0.2 & 2.1 $\pm$ 0.1 \\
$1 \rightarrow 10$ & \textbf{2.2 $\pm$ 0.3} &  \textbf{2.2 $\pm$ 0.3} & 3.1 $\pm$ 0.6 & 5.3 $\pm$ 0.6 & 2.7 $\pm$ 0.3 & 2.6 $\pm$ 0.3 \\
2 $\rightarrow$ 1 & 1.9 $\pm$ 0.2 & \textbf{1.8 $\pm$ 0.2} & 2.5 $\pm$ 0.3 & 5.1 $\pm$ 4.6 & 2.7 $\pm$ 0.3 & 2.1 $\pm$ 0.3 \\
2 $\rightarrow$ 3 & 1.6 $\pm$ 0.3 & 1.6 $\pm$ 0.3 & 2.3 $\pm$ 0.3 & \textbf{1.2 $\pm$ 0.2}& 2.2 $\pm$ 0.3 & 1.6 $\pm$ 0.3 \\
2 $\rightarrow$ 4 & 1.3 $\pm$ 0.2 & 1.3 $\pm$ 0.2 & 2.1 $\pm$ 0.2 & \textbf{1.0 $\pm$ 0.1} & 2.3 $\pm$ 0.2 & 1.1 $\pm$ 0.2 \\
2 $\rightarrow$ 5 & 1.6 $\pm$ 0.2 & \textbf{1.5 $\pm$ 0.2} & 1.7 $\pm$ 0.1 & 3.5 $\pm$ 1.5 & 1.7 $\pm$ 0.2 & 5.5 $\pm$ 1.0 \\
2 $\rightarrow$ 6 & \textbf{1.0 $\pm$ 0.2} & \textbf{1.0 $\pm$ 0.2} & 2.1 $\pm$ 0.3 & 3.0 $\pm$ 0.9 & 1.7 $\pm$ 0.3 & 1.6 $\pm$ 0.4 \\
2 $\rightarrow$ 7 & \textbf{1.6 $\pm$ 0.2} & \textbf{1.6 $\pm$ 0.2} & 2.3 $\pm$ 0.3 & 5.9 $\pm$ 2.7 & 2.2 $\pm$ 0.3 & 2.4 $\pm$ 0.6 \\
2 $\rightarrow$ 8 & \textbf{1.5 $\pm$ 0.2} & \textbf{1.5 $\pm$ 0.2} & 2.0 $\pm$ 0.2 & \textbf{1.5 $\pm$ 0.2} & 2.1 $\pm$ 0.2 & 2.8 $\pm$ 0.7 \\
2 $\rightarrow$ 9 & 1.8 $\pm$ 0.5 & \textbf{1.6 $\pm$ 0.4} & 2.0 $\pm$ 0.2 & 2.6 $\pm$ 0.1 & 1.7 $\pm$ 0.2 & 2.1 $\pm$ 0.1 \\
2 $\rightarrow$ 10 & \textbf{2.2 $\pm$ 0.3} & \textbf{2.2 $\pm$ 0.3} & 3.1 $\pm$ 0.6 & 5.3 $\pm$ 0.6 & 2.7 $\pm$ 0.3 & 2.6 $\pm$ 0.3 \\
3 $\rightarrow$ 1 & \textbf{1.6 $\pm$ 0.5} & \textbf{ 1.7 $\pm$ 0.2 } & 2.5 $\pm$ 0.3 & 5.1 $\pm$ 4.6 & 2.7 $\pm$ 0.3 & 2.1 $\pm$ 0.3 \\
3 $\rightarrow$ 2 & 1.5 $\pm$ 0.3 & \textbf{1.4 $\pm$ 0.3} & 2.1 $\pm$ 0.2 & 1.5 $\pm$ 0.3 & 2.1 $\pm$ 0.2 & 1.4 $\pm$ 0.4 \\
3 $\rightarrow$ 4 & 1.4 $\pm$ 0.2 & 1.3 $\pm$ 0.2 & 2.1 $\pm$ 0.2 & \textbf{1.0 $\pm$ 0.1} & 2.3 $\pm$ 0.2 & 1.1 $\pm$ 0.2 \\
3 $\rightarrow$ 5 & \textbf{1.4 $\pm$ 0.2} & \textbf{1.4 $\pm$ 0.2} & 1.7 $\pm$ 0.1 & 3.5 $\pm$ 1.5 & 1.7 $\pm$ 0.2 & 5.5 $\pm$ 1.0 \\
3 $\rightarrow$ 6 & \textbf{0.9 $\pm$ 0.2} & 1.0 $\pm$ 0.2 & 2.1 $\pm$ 0.3 & 3.0 $\pm$ 0.9 & 1.7 $\pm$ 0.3 & 1.6 $\pm$ 0.4 \\
3 $\rightarrow$ 7 & \textbf{1.5 $\pm$ 0.2} & \textbf{1.5 $\pm$ 0.2} & 2.3 $\pm$ 0.3 & 5.9 $\pm$ 2.7 & 2.2 $\pm$ 0.3 & 2.4 $\pm$ 0.6 \\
3 $\rightarrow$ 8 & 1.5 $\pm$ 0.3 & \textbf{1.4 $\pm$ 0.3} & 2.0 $\pm$ 0.2 & 1.5 $\pm$ 0.2 & 2.1 $\pm$ 0.2 & 2.8 $\pm$ 0.7 \\
3 $\rightarrow$ 9 & 1.9 $\pm$ 0.3 & 1.8 $\pm$ 0.3 & 2.0 $\pm$ 0.2 & 2.6 $\pm$ 0.1 & \textbf{1.7 $\pm$ 0.2} & 2.1 $\pm$ 0.1 \\
3 $\rightarrow$ 10 & 2.2 $\pm$ 0.3 & \textbf{2.1 $\pm$ 0.3} & 3.1 $\pm$ 0.6 & 5.3 $\pm$ 0.6 & 2.7 $\pm$ 0.3 & 2.6 $\pm$ 0.3 \\
4 $\rightarrow$ 1 & \textbf{1.7 $\pm$ 0.2} & \textbf{1.7 $\pm$ 0.2} & 2.5 $\pm$ 0.3 & 5.1 $\pm$ 4.6 & 2.7 $\pm$ 0.3 & 2.1 $\pm$ 0.3 \\
4 $\rightarrow$ 2 & \textbf{1.3 $\pm$ 0.3} & \textbf{1.3 $\pm$ 0.3} & 2.1 $\pm$ 0.2 & 1.5 $\pm$ 0.3 & 2.1 $\pm$ 0.2 & 1.4 $\pm$ 0.4 \\
4 $\rightarrow$ 3 & 1.4 $\pm$ 0.3 & 1.3 $\pm$ 0.3 & 2.3 $\pm$ 0.3 & \textbf{1.2 $\pm$ 0.2}  & 2.2 $\pm$ 0.3 & 1.6 $\pm$ 0.3 \\
4 $\rightarrow$ 5 & \textbf{1.3 $\pm$ 0.2} & \textbf{1.3 $\pm$ 0.2} & 1.7 $\pm$ 0.1 & 3.5 $\pm$ 1.5 & 1.7 $\pm$ 0.2 & 5.5 $\pm$ 1.0 \\
4 $\rightarrow$ 6 & \textbf{1.0 $\pm$ 0.2} & 1.1 $\pm$ 0.2 & 2.1 $\pm$ 0.3 & 3.0 $\pm$ 0.9 & 1.7 $\pm$ 0.3 & 1.6 $\pm$ 0.4 \\
4 $\rightarrow$ 7 & \textbf{1.5 $\pm$ 0.2} & \textbf{1.5 $\pm$ 0.2} & 2.3 $\pm$ 0.3 & 5.9 $\pm$ 2.7 & 2.2 $\pm$ 0.3 & 2.4 $\pm$ 0.6 \\
4 $\rightarrow$ 8 & \textbf{1.4 $\pm$ 0.2} & \textbf{1.4 $\pm$ 0.2} & 2.0 $\pm$ 0.2 & 1.5 $\pm$ 0.2 & 2.1 $\pm$ 0.2 & 2.8 $\pm$ 0.7 \\
4 $\rightarrow$ 9 & 1.5 $\pm$ 0.3 & \textbf{1.4 $\pm$ 0.3} & 2.0 $\pm$ 0.2 & 2.6 $\pm$ 0.1 & 1.7 $\pm$ 0.2 & 2.1 $\pm$ 0.1 \\
4 $\rightarrow$ 10 & 2.2 $\pm$ 0.3 & \textbf{2.1 $\pm$ 0.3} & 3.1 $\pm$ 0.6 & 5.3 $\pm$ 0.6 & 2.7 $\pm$ 0.3 & 2.6 $\pm$ 0.3 \\
5 $\rightarrow$ 1 & \textbf{2.0 $\pm$ 0.3} & 2.1 $\pm$ 0.3 & 2.5 $\pm$ 0.3 & 5.1 $\pm$ 4.6 & 2.6 $\pm$ 0.2 & 2.1 $\pm$ 0.3 \\
5 $\rightarrow$ 2 & 1.7 $\pm$ 0.3 & 1.6 $\pm$ 0.3 & 2.1 $\pm$ 0.2 & 1.5 $\pm$ 0.3 & 2.1 $\pm$ 0.2 & \textbf{1.4 $\pm$ 0.4} \\
5 $\rightarrow$ 3 & 1.7 $\pm$ 0.3 & 1.7 $\pm$ 0.3 & 2.3 $\pm$ 0.3 & \textbf{1.2 $\pm$ 0.2} & 2.2 $\pm$ 0.3 & 1.6 $\pm$ 0.3 \\
5 $\rightarrow$ 4 & 1.4 $\pm$ 0.1 & 1.5 $\pm$ 0.2 & 2.1 $\pm$ 0.2 & \textbf{1.0 $\pm$ 0.1} & 2.3 $\pm$ 0.2 & 1.1 $\pm$ 0.2 \\
5 $\rightarrow$ 6 & \textbf{1.2 $\pm$ 0.2} & \textbf{1.2 $\pm$ 0.2} & 2.1 $\pm$ 0.3 & 3.0 $\pm$ 0.9 & 1.7 $\pm$ 0.3 & 1.6 $\pm$ 0.4 \\
5 $\rightarrow$ 7 & \textbf{1.8 $\pm$ 0.3} & \textbf{1.8 $\pm$ 0.3} & 2.3 $\pm$ 0.3 & 5.9 $\pm$ 2.7 & 2.2 $\pm$ 0.2 & 2.4 $\pm$ 0.6 \\
5 $\rightarrow$ 8 & \textbf{1.5 $\pm$ 0.2} & 1.7 $\pm$ 0.2 & 2.0 $\pm$ 0.2 & \textbf{1.5 $\pm$ 0.2} & 2.1 $\pm$ 0.2 & 2.8 $\pm$ 0.7 \\
5 $\rightarrow$ 9 & 2.1 $\pm$ 0.3 & \textbf{2.0 $\pm$ 0.3} & \textbf{2.0 $\pm$ 0.2} & 2.6 $\pm$ 0.1 & 1.8 $\pm$ 0.3 & 2.1 $\pm$ 0.1 \\
5 $\rightarrow$ 10 & 2.6 $\pm$ 0.3 & \textbf{2.5 $\pm$ 0.3 }& 3.1 $\pm$ 0.6 & 5.3 $\pm$ 0.6 & 2.8 $\pm$ 0.3 & 2.6 $\pm$ 0.3 \\
6 $\rightarrow$ 1 & \textbf{2.0 $\pm$ 0.3} & \textbf{2.0 $\pm$ 0.3} & 2.5 $\pm$ 0.3 & 5.1 $\pm$ 4.6 & 2.7 $\pm$ 0.3 & 2.1 $\pm$ 0.3 \\
6 $\rightarrow$ 2 & \textbf{1.3 $\pm$ 0.3} & \textbf{1.3 $\pm$ 0.3} & 2.1 $\pm$ 0.2 & 1.5 $\pm$ 0.3 & 2.1 $\pm$ 0.2 & 1.4 $\pm$ 0.4 \\
6 $\rightarrow$ 3 & 1.5 $\pm$ 0.3 & 1.5 $\pm$ 0.3 & 2.3 $\pm$ 0.3 & \textbf{1.2 $\pm$ 0.2} & 2.2 $\pm$ 0.3 & 1.6 $\pm$ 0.3 \\
6 $\rightarrow$ 4 & 1.2 $\pm$ 0.2 & 1.2 $\pm$ 0.2 & 2.1 $\pm$ 0.2 & \textbf{1.0 $\pm$ 0.1} & 2.3 $\pm$ 0.2 & 1.1 $\pm$ 0.2 \\
6 $\rightarrow$ 5 & 1.6 $\pm$ 0.2 & \textbf{1.5 $\pm$ 0.2} & 1.7 $\pm$ 0.1 & 3.5 $\pm$ 1.5 & 1.7 $\pm$ 0.2 & 5.5 $\pm$ 1.0 \\
\bottomrule
\end{tabular}%
}
\end{table}

\begin{table}[H]
\centering
\scriptsize
\resizebox{\textwidth}{!}{%
\begin{tabular}{l|c|c|c|c|c|c}
\toprule
Scenario & $\our$-GW & $\our$-EGW & NS & USVT & ICE & SAS \\
\midrule
6 $\rightarrow$ 7 & \textbf{1.8 $\pm$ 0.3} & \textbf{1.8 $\pm$ 0.3} & 2.3 $\pm$ 0.3 & 5.9 $\pm$ 2.7 & 2.2 $\pm$ 0.3 & 2.4 $\pm$ 0.6 \\
6 $\rightarrow$ 8 & 1.6 $\pm$ 0.2 & \textbf{1.5 $\pm$ 0.2} & 2.0 $\pm$ 0.2 & \textbf{1.5 $\pm$ 0.2} & 2.1 $\pm$ 0.2 & 2.8 $\pm$ 0.7 \\
6 $\rightarrow$ 9 & \textbf{1.5 $\pm$ 0.4} & 1.6 $\pm$ 0.4 & 2.0 $\pm$ 0.2 & 2.6 $\pm$ 0.1 & 1.7 $\pm$ 0.2 & 2.1 $\pm$ 0.1 \\
6 $\rightarrow$ 10 & \textbf{2.3 $\pm$ 0.3} & \textbf{2.3 $\pm$ 0.3} & 3.1 $\pm$ 0.6 & 5.3 $\pm$ 0.6 & 2.7 $\pm$ 0.3 & 2.6 $\pm$ 0.3 \\
7 $\rightarrow$ 1 & 2.0 $\pm$ 0.3 & \textbf{1.9 $\pm$ 0.3} & 2.5 $\pm$ 0.3 & 5.1 $\pm$ 4.6 & 2.7 $\pm$ 0.3 & 2.1 $\pm$ 0.3 \\
7 $\rightarrow$ 2 & \textbf{1.3 $\pm$ 0.3} & \textbf{1.3 $\pm$ 0.3} & 2.1 $\pm$ 0.2 & 1.5 $\pm$ 0.3 & 2.1 $\pm$ 0.2 & 1.4 $\pm$ 0.4 \\
7 $\rightarrow$ 3 & 1.6 $\pm$ 0.3 & 1.5 $\pm$ 0.3 & 2.3 $\pm$ 0.3 & \textbf{1.2 $\pm$ 0.2} & 2.2 $\pm$ 0.3 & 1.6 $\pm$ 0.3 \\
7 $\rightarrow$ 4 & 1.3 $\pm$ 0.2 & 1.2 $\pm$ 0.2 & 2.1 $\pm$ 0.2 & \textbf{1.0 $\pm$ 0.1} & 2.3 $\pm$ 0.2 & 1.1 $\pm$ 0.2 \\
7 $\rightarrow$ 5 & 1.6 $\pm$ 0.2 & \textbf{1.5 $\pm$ 0.2} & 1.7 $\pm$ 0.1 & 3.5 $\pm$ 1.5 & 1.7 $\pm$ 0.2 & 5.5 $\pm$ 1.0 \\
7 $\rightarrow$ 6 &  \textbf{0.9 $\pm$ 0.2} & 1.0 $\pm$ 0.3 & 2.1 $\pm$ 0.3 & 3.0 $\pm$ 0.9 & 1.7 $\pm$ 0.3 & 1.6 $\pm$ 0.4 \\
7 $\rightarrow$ 8 & 1.6 $\pm$ 0.2 & 1.6 $\pm$ 0.2 & 2.0 $\pm$ 0.2 & \textbf{1.5 $\pm$ 0.2} & 2.1 $\pm$ 0.2 & 2.8 $\pm$ 0.7 \\
7 $\rightarrow$ 9 & 1.7 $\pm$ 0.4 & 1.7 $\pm$ 0.4 & 2.0 $\pm$ 0.2 & 2.6 $\pm$ 0.1 & \textbf{1.7 $\pm$ 0.2} & 2.1 $\pm$ 0.1 \\
7 $\rightarrow$ 10 & 2.3 $\pm$ 0.4 & \textbf{2.2 $\pm$ 0.3} & 3.1 $\pm$ 0.6 & 5.3 $\pm$ 0.6 & 2.7 $\pm$ 0.3 & 2.6 $\pm$ 0.3 \\
8 $\rightarrow$ 1 & 1.8 $\pm$ 0.3 & \textbf{1.5 $\pm$ 0.3} & 2.5 $\pm$ 0.3 & 5.1 $\pm$ 4.6 & 2.7 $\pm$ 0.3 & 2.1 $\pm$ 0.3 \\
8 $\rightarrow$ 2 & \textbf{1.4 $\pm$ 0.3} & \textbf{1.4 $\pm$ 0.3} & 2.1 $\pm$ 0.2 & 1.5 $\pm$ 0.3 & 2.1 $\pm$ 0.2 & 1.4 $\pm$ 0.4 \\
8 $\rightarrow$ 3 & 1.7 $\pm$ 0.3 & 1.6 $\pm$ 0.3 & 2.3 $\pm$ 0.3 & \textbf{1.2 $\pm$ 0.2} & 2.2 $\pm$ 0.3 & 1.6 $\pm$ 0.3 \\
8 $\rightarrow$ 4 & 1.3 $\pm$ 0.2 & 1.3 $\pm$ 0.2 & 2.1 $\pm$ 0.2 & \textbf{1.0 $\pm$ 0.1} & 2.3 $\pm$ 0.2 & 1.1 $\pm$ 0.2 \\
8 $\rightarrow$ 5 & 1.6 $\pm$ 0.2 & \textbf{1.5 $\pm$ 0.2} & 1.7 $\pm$ 0.1 & 3.5 $\pm$ 1.5 & 1.7 $\pm$ 0.2 & 5.5 $\pm$ 1.0 \\
8 $\rightarrow$ 6 & \textbf{1.0 $\pm$ 0.2} & \textbf{1.0 $\pm$ 0.2} & 2.1 $\pm$ 0.3 & 3.0 $\pm$ 0.9 & 1.7 $\pm$ 0.3 & 1.6 $\pm$ 0.4 \\
8 $\rightarrow$ 7 &  \textbf{1.3 $\pm$ 0.3} & \textbf{1.4 $\pm$ 0.3} & 2.3 $\pm$ 0.3 & 5.9 $\pm$ 2.7 & 2.2 $\pm$ 0.3 & 2.4 $\pm$ 0.6 \\
8 $\rightarrow$ 9 & 1.9 $\pm$ 0.4 & 1.9 $\pm$ 0.3 & 2.0 $\pm$ 0.2 & 2.6 $\pm$ 0.1 & \textbf{1.7 $\pm$ 0.2} & 2.1 $\pm$ 0.1 \\
8 $\rightarrow$ 10 & \textbf{2.3 $\pm$ 0.3} & \textbf{2.3 $\pm$ 0.3} & 3.1 $\pm$ 0.6 & 5.3 $\pm$ 0.6 & 2.7 $\pm$ 0.3 & 2.6 $\pm$ 0.3 \\
9 $\rightarrow$ 1 & 2.4 $\pm$ 0.3 & 2.3 $\pm$ 0.2 & 2.5 $\pm$ 0.3 & 5.1 $\pm$ 4.6 & 2.7 $\pm$ 0.3 & \textbf{2.1 $\pm$ 0.3} \\
9 $\rightarrow$ 2 & 1.5 $\pm$ 0.3 & 1.5 $\pm$ 0.3 & 2.1 $\pm$ 0.2 & 1.5 $\pm$ 0.3 & 2.1 $\pm$ 0.2 & \textbf{1.4 $\pm$ 0.4} \\
9 $\rightarrow$ 3 & 1.9 $\pm$ 0.3 & 1.6 $\pm$ 0.3 & 2.3 $\pm$ 0.3 & \textbf{1.2 $\pm$ 0.2} & 2.2 $\pm$ 0.3 & 1.6 $\pm$ 0.3 \\
9 $\rightarrow$ 4 & 1.6 $\pm$ 0.2 & 1.6 $\pm$ 0.2 & 2.1 $\pm$ 0.2 & \textbf{1.0 $\pm$ 0.1} & 2.3 $\pm$ 0.2 & 1.1 $\pm$ 0.2 \\
9 $\rightarrow$ 5 & 1.6 $\pm$ 0.2 & \textbf{1.3 $\pm$ 0.2} & 1.7 $\pm$ 0.1 & 3.5 $\pm$ 1.5 & 1.7 $\pm$ 0.2 & 5.5 $\pm$ 1.0 \\
9 $\rightarrow$ 6 & \textbf{1.2 $\pm$ 0.2} & \textbf{1.2 $\pm$ 0.2} & 2.1 $\pm$ 0.3 & 3.0 $\pm$ 0.9 & 1.7 $\pm$ 0.3 & 1.6 $\pm$ 0.4 \\
9 $\rightarrow$ 7 & \textbf{1.9 $\pm$ 0.2} & 1.9 $\pm$ 0.3 & 2.3 $\pm$ 0.3 & 5.9 $\pm$ 2.7 & 2.2 $\pm$ 0.3 & 2.4 $\pm$ 0.6 \\
9 $\rightarrow$ 8 & 2.0 $\pm$ 0.2 & 1.9 $\pm$ 0.2 & 2.0 $\pm$ 0.2 & \textbf{1.5 $\pm$ 0.2} & 2.1 $\pm$ 0.2 & 2.8 $\pm$ 0.7 \\
9 $\rightarrow$ 10 & \textbf{2.2 $\pm$ 0.3} & 2.3 $\pm$ 0.3 & 3.1 $\pm$ 0.6 & 5.3 $\pm$ 0.6 & 2.7 $\pm$ 0.3 & 2.6 $\pm$ 0.3 \\
10 $\rightarrow$ 1 & 2.1 $\pm$ 0.3 & \textbf{2.0 $\pm$ 0.3} & 2.5 $\pm$ 0.3 & 5.1 $\pm$ 4.6 & 2.7 $\pm$ 0.3 & 2.1 $\pm$ 0.3 \\
10 $\rightarrow$ 2 & 1.3 $\pm$ 0.3 & \textbf{1.2 $\pm$ 0.3} & 2.1 $\pm$ 0.2 & 1.5 $\pm$ 0.3 & 2.1 $\pm$ 0.2 & 1.4 $\pm$ 0.4 \\
10 $\rightarrow$ 3 & 1.6 $\pm$ 0.3 & 1.5 $\pm$ 0.3 & 2.3 $\pm$ 0.3 & \textbf{1.2 $\pm$ 0.2}  & 2.2 $\pm$ 0.3 & 1.6 $\pm$ 0.3 \\
10 $\rightarrow$ 4 & 1.2 $\pm$ 0.2 & 1.1 $\pm$ 0.2 & 2.1 $\pm$ 0.2 & \textbf{1.0 $\pm$ 0.1} & 2.3 $\pm$ 0.2 & 1.1 $\pm$ 0.2 \\
10 $\rightarrow$ 5 & 1.6 $\pm$ 0.2 & \textbf{1.5 $\pm$ 0.2} & 1.7 $\pm$ 0.1 & 3.5 $\pm$ 1.5 & 1.7 $\pm$ 0.2 & 5.5 $\pm$ 1.0 \\
10 $\rightarrow$ 6 & \textbf{1.3 $\pm$ 0.4} & 1.4 $\pm$ 0.3 & 2.1 $\pm$ 0.3 & 3.0 $\pm$ 0.9 & 1.7 $\pm$ 0.3 & 1.6 $\pm$ 0.4 \\
10 $\rightarrow$ 7 & 1.9 $\pm$ 0.3 & \textbf{1.8 $\pm$ 0.3} & 2.3 $\pm$ 0.3 & 5.9 $\pm$ 2.7 & 2.2 $\pm$ 0.3 & 2.4 $\pm$ 0.6 \\
10 $\rightarrow$ 8 & 1.6 $\pm$ 0.2 & \textbf{1.5 $\pm$ 0.2} & 2.0 $\pm$ 0.2 & \textbf{1.5 $\pm$ 0.2} & 2.1 $\pm$ 0.2 & 2.8 $\pm$ 0.7 \\
10 $\rightarrow$ 9 & \textbf{1.5 $\pm$ 0.3} & 1.5 $\pm$ 0.4 & 2.0 $\pm$ 0.2 & 2.6 $\pm$ 0.1 & 1.7 $\pm$ 0.2 & 2.1 $\pm$ 0.1 \\
\bottomrule
\end{tabular}%
}
\end{table}

\subsection{Ablation Study}

Figure \ref{fig:ablation_study} presents the average estimation errors (MSE) of four configurations: \textbf{$\our$} (red diamond with solid line), \textbf{$\our$-NonDebias} (purple circle with dotted line), \textbf{$\our$-NonSmooth} (orange triangle with dashed line), and \textbf{$\our$-Adj} (yellow square with dash-dotted line). The x-axis represents the source sample size, and the y-axis shows the average MSE over 50 runs.

The four configurations are derived as follows: \textbf{$\our$}: This is the full version, including all three steps from Algorithm~\ref{alg}. \textbf{$\our$-NonDebias}: This variant removes the debiasing step (Step 3), directly using the transferred estimator $\hat{\mathbf{P}}_t^{trans2}$ as the final output. \textbf{$\our$-NonSmooth}: This variant omits the neighborhood smoothing steps in the initial estimation (Step 1). \textbf{$\our$-Adj}: This variant replaces the neighborhood smoothing (\textbf{Step 1}) entirely with raw adjacency matrices $A_s$ and $A_t$. In this setting, the optimal transport step (\textbf{Step 2}) operates directly on the unprocessed adjacency representations.

From the results, we observe that the complete \textbf{$\our$} consistently outperforms its ablated variants across most graphon structures. The absence of smoothing (\textbf{$\our$-NonSmooth}) introduces sharp oscillations in MSE, indicating the sensitivity to noise. \textbf{$\our$-NonDebias} struggles to correct for transfer-induced discrepancies, especially when source-target alignment is imperfect.  \textbf{$\our$-Adj} shows consistently higher MSE, highlighting that raw adjacency matrices lack smoothness for effective GW-based alignment. This underscores the necessity of the smoothing step in $\our$ to achieve consistent and robust estimations. Therefore, the debiasing and smoothing mechanisms in $\our$ are not just additive; they are essential for robustness.

\begin{figure}[H]
    \centering
    \begin{subfigure}[t]{\linewidth}
        \centering
        \includegraphics[width=\linewidth]{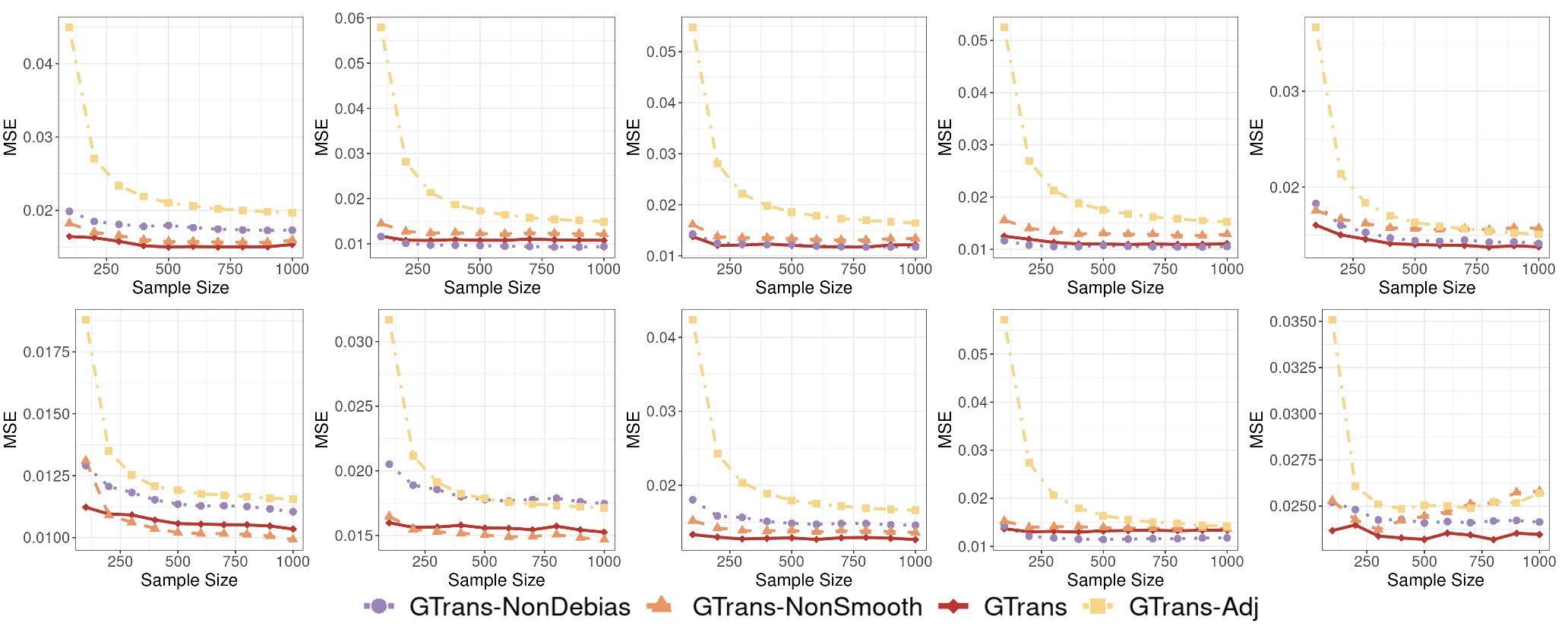}
        \caption{Ablation Study with Increasing Source Sample Size.}
        \label{fig:ablation_study_N}
    \end{subfigure}
    
    \vspace{0.5cm} 
    
    \begin{subfigure}[t]{\linewidth}
        \centering
        \includegraphics[width=\linewidth]{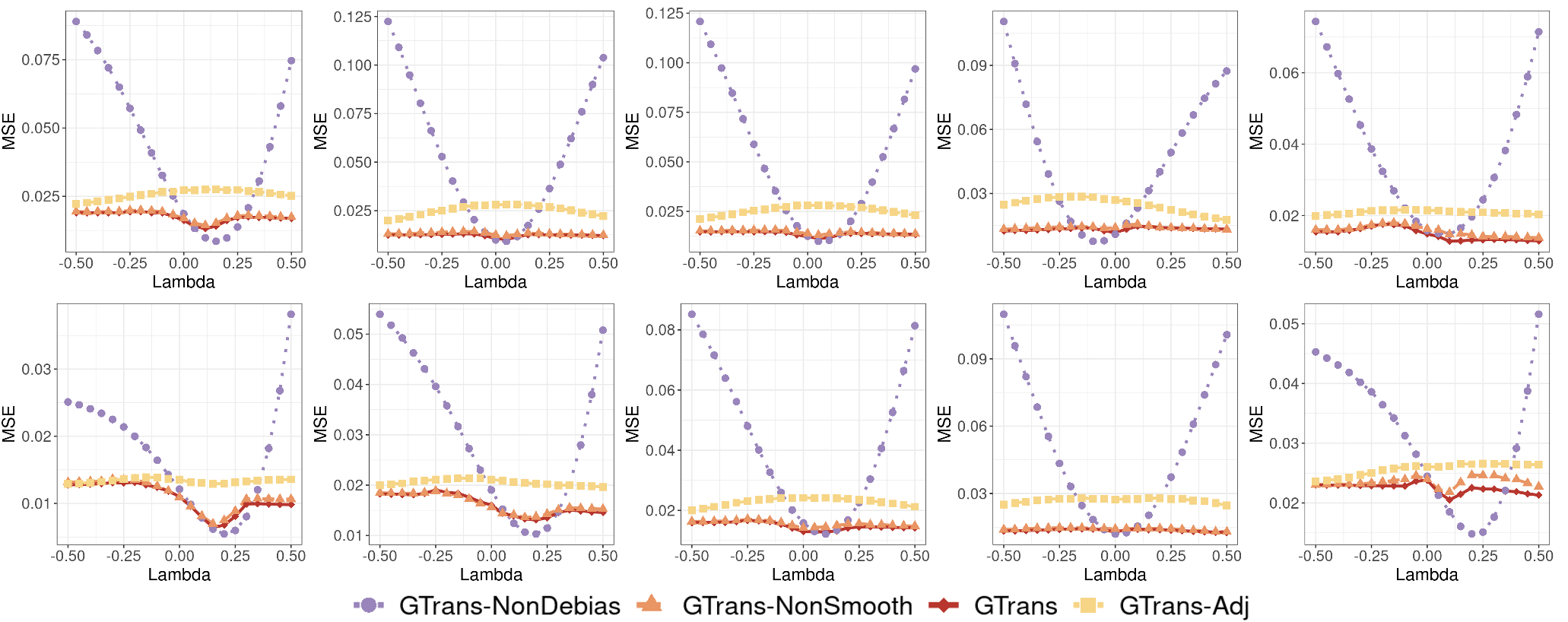}
        \caption{Ablation Study with Varying $\lambda$.}
        \label{fig:ablation_study_Lambda}
    \end{subfigure}
    
    \caption{Ablation study of different estimation methods: (a) illustrates the impact of increasing source sample size on estimation accuracy, and (b) demonstrates the effect of varying density shift ($\lambda$) on the performance of $\our$, $\our$-NonDebias, $\our$-NonSmooth, and $\our$-Adj.}
    \label{fig:ablation_study}
\end{figure}

\subsection{Hyperparameter Selection}

\subsubsection{Debiasing Threshold $\delta$ Selection}

\paragraph{Threshold $\delta$.}  We evaluate the stability of the selected threshold $\delta^*$ under matched source and target distributions. Fixing the target size at $n_t = 50$, we vary the source size in $\{100, 200, 400, 800\}$ and generate both graphs from the same graphon ($\texttt{graphon\_id} \in \{1,\dots,10\}$). Source perturbations are introduced via a structural noise parameter $\lambda \in [-0.5, 0.5]$. Thresholds $\delta \in \{0.01, 0.02, \dots, 0.50\}$ are evaluated using the mean squared error (MSE) between the estimated and true target graphons. We evaluate a fixed threshold $\delta = 0.15$ across various source perturbations. If it aligns with the side of the optimal jump point determined by the GW distance $d$, it is selected; otherwise, the best threshold on that side is chosen. Empirically, $\delta = 0.15$ remains robust, while under EGW with $\epsilon = 0.01$, the optimal choice converges to $\delta = 0.18$.

 \begin{figure}[H]
    \centering
    \begin{subfigure}[b]{\textwidth}
        \centering
        \includegraphics[width=\linewidth]{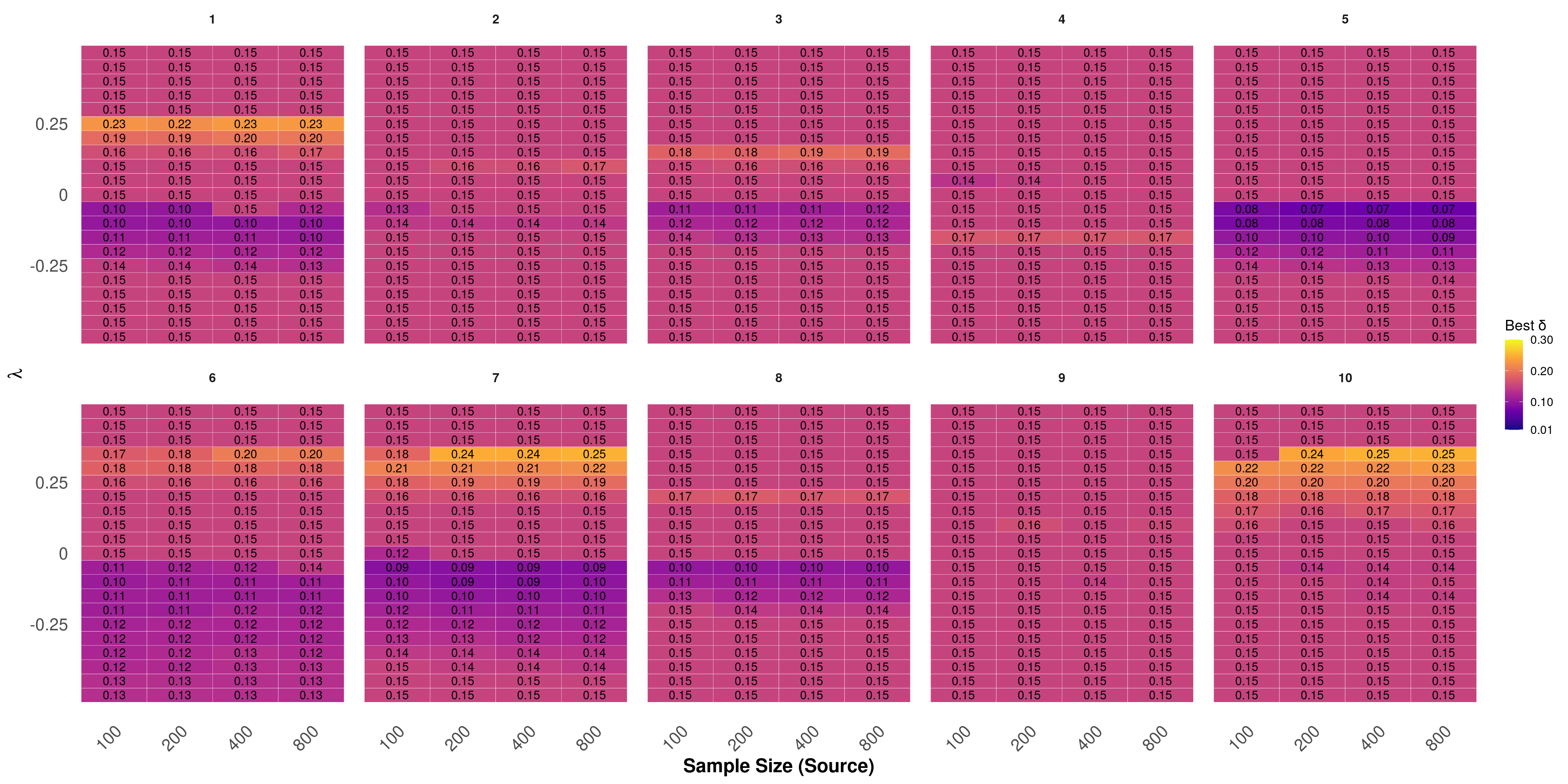}
        \caption{Optimal threshold $\delta$ across Graphon IDs for \textbf{$\our$-GW}.}
        \label{fig:best_delta_graphons_GW}
    \end{subfigure}

    \begin{subfigure}[b]{\textwidth}
        \centering
        \includegraphics[width=\linewidth]{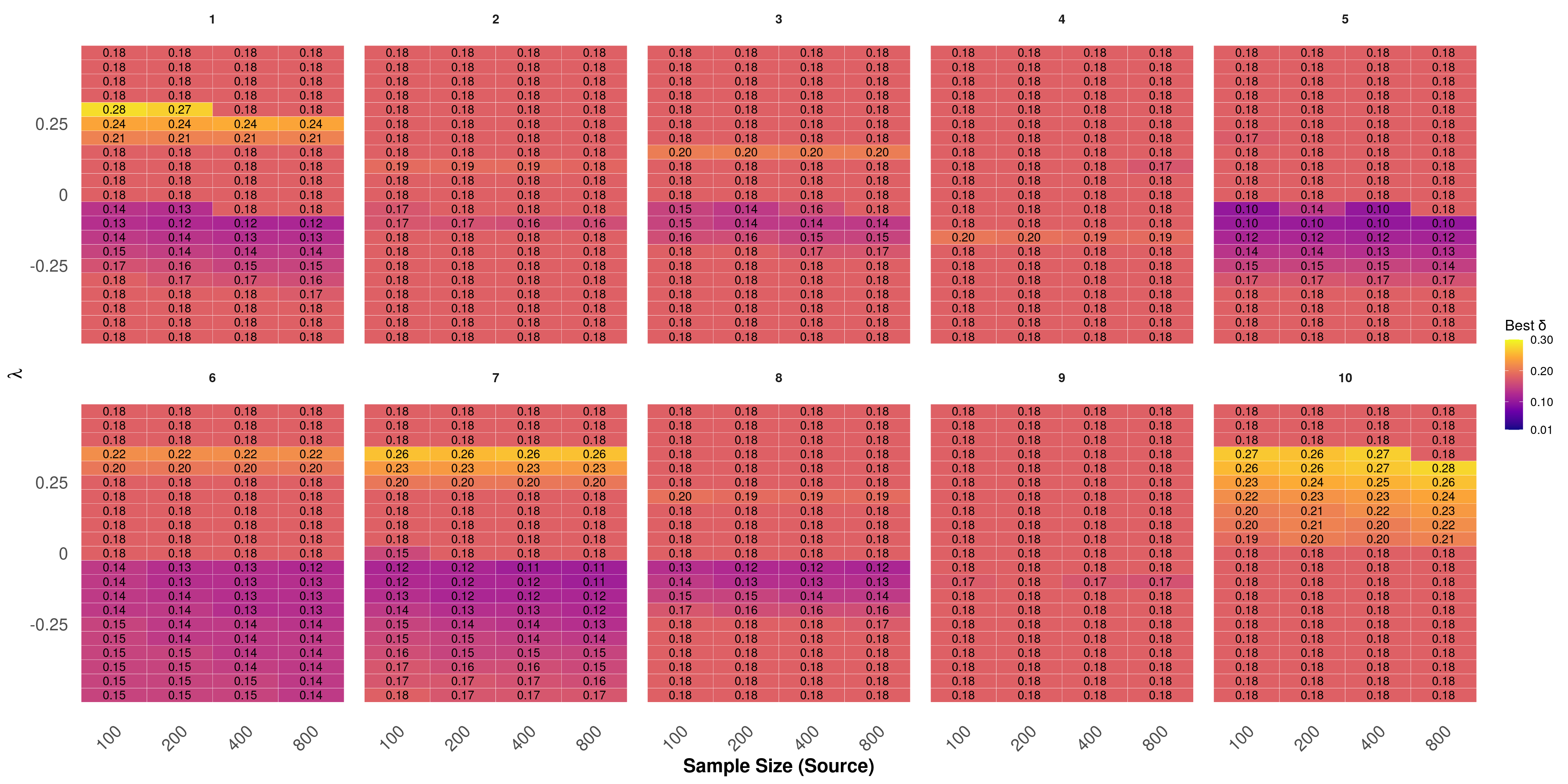}
        \caption{Optimal threshold $\delta$ across Graphon IDs for \textbf{$\our$-EGW}.}
        \label{fig:best_delta_graphons_EGW}
    \end{subfigure}

   \caption{Heatmaps of the optimal threshold $\delta$ across different Graphon IDs. \textbf{$\our$-GW} (top) shows that $\delta = 0.15$ is consistently optimal or near-optimal, while \textbf{$\our$-EGW ($\epsilon = 0.01$} (bottom) favors $\delta = 0.18$ in most scenarios, with slight increases under high perturbation.}
    \label{fig:best_delta_graphons_combined}
\end{figure}

Figure~\ref{fig:best_delta_graphons_combined} shows that the optimal threshold $\delta$ depends on the graphon's inherent density and the perturbation level $\lambda$. For dense graphons, $\delta$ remains stable, reflecting robustness to structural shifts. In contrast, sparse graphons exhibit a sharp increase in $\delta$ under positive $\lambda$, as perturbations enhance connectivity and reduce noise sensitivity. 
The asymmetry arises from a bias–variance trade-off. Denser sources ($\lambda > 0$) yield high-quality but mismatched transfer, favoring delayed debiasing with larger $\delta^\ast$; sparser sources ($\lambda < 0$) require earlier correction, hence smaller $\delta^\ast$. Besides, GW distance increases with denser sources due to structural mismatch, and decreases with sparser ones despite degraded transfer—supporting a larger $\delta$ for dense, and smaller for sparse sources.

\subsubsection{Regularization Parameter $\epsilon$ Selection}

\paragraph{Threshold $\epsilon$.} To determine the optimal regularization parameter $\epsilon$ in EGW, we perform $K$-fold cross-validation over a candidate set $\epsilon_{\text{list}} = \{0.001, 0.005, 0.01, 0.05, 0.1\}$. This process evaluates the MSE between the estimated and true target graphon. We observe that $\epsilon = 0.01$ is consistently selected as the best-performing choice across different settings. To further illustrate this, figure~\ref{fig:best_epsilon_boxplot} presents boxplots of the average MSE for each $\epsilon$ under varying source sizes ${100, 200, 300, \dots, 1000}$ with a fixed target size of $n_t = 50$. Each boxplot shows the MSE distribution across ten graphon types. While some graphons like Graphon 9 benefit from larger $\epsilon$ (e.g., 0.1) due to oscillatory patterns, $\epsilon = 0.01$ consistently achieves the lowest error, demonstrating its robustness across diverse structures.

\begin{figure}[H]
    \centering
    \includegraphics[width=\textwidth]{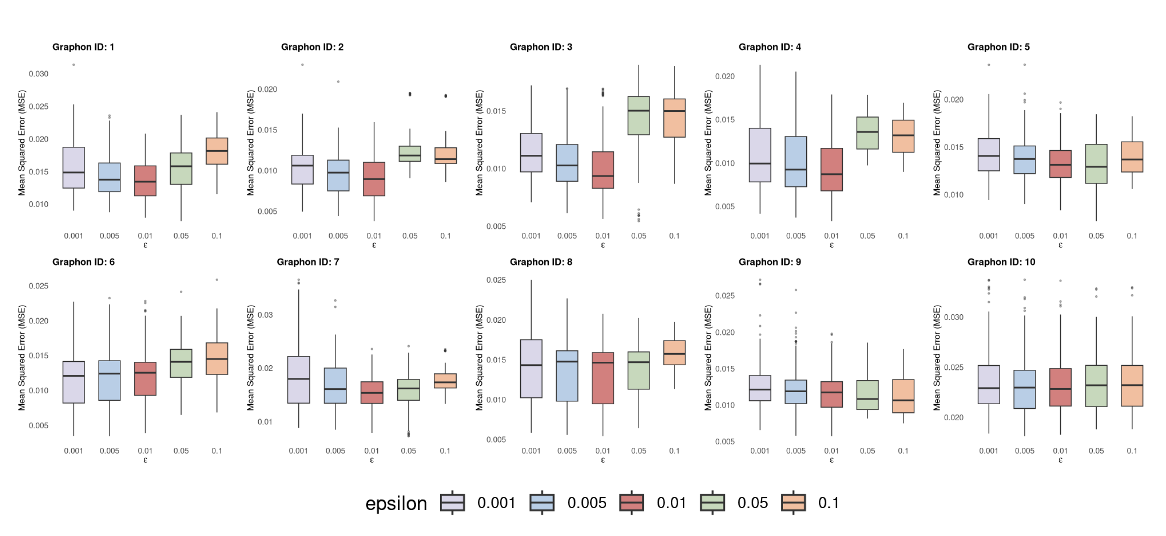}
     \caption{Boxplot comparison of different regularization parameters $\epsilon$ for $\our$-EGW across varying source sample sizes. The results show that $\epsilon = 0.01$ (red) consistently maintains lower variance and median MSE in most graphons, indicating its robustness. }
    \label{fig:best_epsilon_boxplot}
\end{figure}

\section{Additional Real Data Results}

\subsection{Additional results for graph augmentation task}

\label{sec:model_construct}

In Transfer Learning, selecting the optimal source dataset and identifying the best class correspondence are crucial for effective knowledge transfer, as the structural characteristics of source labels significantly impact performance. To determine optimal class-to-class transfer pairings, we first estimate graphons separately for each source and target class using neighborhood smoothing~\citep{zhang2017estimating}. Subsequently, we compute the pairwise Gromov–Wasserstein (GW) distances as well as the Entropic Gromov–Wasserstein (EGW) distances with a commonly chosen regularization parameter $\epsilon = 0.01$ for each target-source graphon pair, identifying the best-matched class from the source domain for each target class based on structural similarity.  We summarize the key structural statistics of all datasets used in our experiments in Table~\ref{tab:dataset_stats}. Table \ref{tab:gw_pairwise_flipped} summarizes the GW distances calculated between each pair of source and target labels across the considered datasets, while Table \ref{tab:egw_pairwise_flipped} presents the corresponding EGW distances, reflecting the regularized alignment between graphon pairs.

For the IMDB-Binary dataset, when transferring from Reddit-Binary, the best-matching source label for target label 1 is label 0 (GW = 0.3276, EGW = 0.3152), and for target label 2, it is also source label 0 (GW = 0.4103, EGW = 0.3987). When COLLAB is used as the source, target labels 1 and 2 best match source label 1  (GW = 0.1910, EGW = 0.1784, and GW = 0.2012, EGW = 0.1895, respectively). For IMDB-Multi, transferring from Reddit-Binary, all three target labels (1, 2, and 3) have their lowest GW distances with source label 0 (GW = 0.4437, 0.4260, and 0.5172, respectively), while the EGW distances are slightly improved (EGW = 0.4321, 0.4155, and 0.5053). When transferring from COLLAB, the best-matching source labels are more varied: target labels 1, 2, and 3 correspond best with source label 1 (GW = 0.1751, 0.2084, and 0.1552, respectively; EGW = 0.1690, 0.1968, and 0.1443). Finally, in the bioinformatics setting of PROTEINS-Full transferring from D\&D, target labels 1 and 2 both best align structurally with source label 1 (GW = 0.0616, EGW = 0.0592, and GW = 0.0652, EGW = 0.0624, respectively).

These results suggest clear structural correspondences between target and source labels, enabling effective and informed transfer learning across domains.

\begin{table}[H]
\centering
\label{tab:dataset_stats}
\caption{Statistics of the datasets used in our experiments.}
\setlength{\tabcolsep}{12pt}
\scriptsize
\begin{tabular}{l|c|c|c|c|c|c}
\toprule
\textbf{Property} & \textsc{REDDIT-B} & \textsc{IMDB-B} & \textsc{IMDB-M} & \textsc{COLLAB} & \textsc{PROTEINS-Full} & \textsc{D\&D} \\
\midrule
\#Graphs         & 2{,}000 & 1{,}000 & 1{,}500 & 5{,}000 & 1{,}113 & 1{,}178 \\
\#Classes        & 2 & 2 & 3 & 3 & 2 & 2 \\
Avg. \#Nodes     & 429.63 & 19.77 & 13.00 & 74.49 & 25.22 & 284.32 \\
Avg. \#Edges     & 497.75 & 96.53 & 65.94 & 2457.78 & 226.41 & 715.66 \\
\bottomrule
\end{tabular}
\end{table}

\begin{table}[H]
\centering
\caption{Gromov-Wasserstein distance between each source and target label pair across datasets. Bold values denote the best-matching source label for each target label.}
\label{tab:gw_pairwise_flipped}
\scriptsize
\resizebox{\textwidth}{!}{%
\begin{tabular}{l|cc|cc|ccc|ccc|cc}
\toprule
\multirow{2}{*}{\textbf{Source → Target}} & \multicolumn{2}{c|}{\textsc{IMDB-B}} & \multicolumn{2}{c|}{\textsc{IMDB-B}} & \multicolumn{3}{c|}{\textsc{IMDB-M}} & \multicolumn{3}{c|}{\textsc{IMDB-M}} & \multicolumn{2}{c}{\textsc{PROTEINS-Full}} \\
& \multicolumn{2}{c|}{from \textsc{Reddit-B}} & \multicolumn{2}{c|}{from \textsc{COLLAB}} & \multicolumn{3}{c|}{from \textsc{Reddit-B}} & \multicolumn{3}{c|}{from \textsc{COLLAB}} & \multicolumn{2}{c}{from \textsc{D\&D}} \\
\midrule
Target label               & 1     & 2     & 1     & 2     & 1     & 2     & 3     & 1     & 2     & 3     & 1     & 2     \\
\midrule
Source label 0            & \textbf{0.3276} & \textbf{0.4103} & 0.4710 & 0.3996 & \textbf{0.4437} & \textbf{0.4260} & \textbf{0.5172} & 0.3521 & 0.3844 & 0.3014 & 0.0631 & 0.0622 \\
Source label 1            & 0.3300 & 0.4143 & \textbf{0.1910} & \textbf{0.2012} & 0.4542 & 0.4604 & 0.5326 & \textbf{0.1751} & \textbf{0.2084} & \textbf{0.1552} & \textbf{0.0616} & \textbf{0.0652} \\
Source label 2            & --    & --    & 0.5475 & 0.4702 & --    & --    & --    & 0.4092 & 0.4155 & 0.3573 & -- & -- \\
\bottomrule
\end{tabular}%
}
\end{table}

\begin{table}[H]
\centering
\caption{Entropic- Gromov-Wasserstein distance with $\epsilon = 0.01$ between each source and target label pair across datasets. Bold values denote the best-matching source label for each target label.}
\label{tab:egw_pairwise_flipped}
\scriptsize
\resizebox{\textwidth}{!}{%
\begin{tabular}{l|cc|cc|ccc|ccc|cc}
\toprule
\multirow{2}{*}{\textbf{Source → Target}} & \multicolumn{2}{c|}{\textsc{IMDB-B}} & \multicolumn{2}{c|}{\textsc{IMDB-B}} & \multicolumn{3}{c|}{\textsc{IMDB-M}} & \multicolumn{3}{c|}{\textsc{IMDB-M}} & \multicolumn{2}{c}{\textsc{PROTEINS-Full}} \\
& \multicolumn{2}{c|}{from \textsc{Reddit-B}} & \multicolumn{2}{c|}{from \textsc{COLLAB}} & \multicolumn{3}{c|}{from \textsc{Reddit-B}} & \multicolumn{3}{c|}{from \textsc{COLLAB}} & \multicolumn{2}{c}{from \textsc{D\&D}} \\
\midrule
Target label               & 0     & 1     & 0     & 1     & 0     & 1     & 2     & 0     & 1     & 2     & 0     & 1     \\
\midrule
Source label 0            & \textbf{0.3276} & \textbf{0.4104} & 0.4711 & 0.3996 & \textbf{0.4438} & \textbf{0.4260} & \textbf{0.5172} & 0.3523 & 0.3845 & 0.3014 & \textbf{0.0631} & 0.0584 \\
Source label 1            & 0.3300 & 0.4142 & \textbf{0.1910} & \textbf{0.1306} & 0.4549 & 0.4360 & 0.5280 & \textbf{0.1750} & \textbf{0.2085} & \textbf{0.1552} & 0.0652 & \textbf{0.0497} \\
Source label 2            & --    & --    & 0.5479 & 0.4773 & --    & --    & --    & 0.4092 & 0.4155 & 0.3573 & -- & -- \\
\bottomrule
\end{tabular}%
}
\end{table}

We also include two transfer baselines. (1) \textbf{Pooled Estimation} – pooling source \& target graphs and estimating a graphon using the G-Mixup procedure with neighborhood smoothing (NS), and (2) \textbf{Pooled-then-Transfer} – using the pooled estimator as the initial source estimate $\hat{P}^{\text{ini}}_s$ in our transfer framework, followed by the GTRANS procedure (denoted PGTRANS-GW \& PGTRANS-EGW). From Table \ref{tab:transfer_comparison_pooled}, both pooled baselines underperform our method, suggesting that naive joint estimation is suboptimal when source and target are misaligned, while our alignment and debiasing steps enable more accurate transfer.


\begin{table}[H]
\centering
\caption{Graph classification accuracy (\%) comparison between GTRANS and pooled transfer estimation methods (mean $\pm$ std).}
\label{tab:transfer_comparison_pooled}
\setlength{\tabcolsep}{5pt}
\resizebox{\textwidth}{!}{
\small
\begin{tabular}{l|l|c|c|c|c|c}
\toprule
\textbf{Source} & \textbf{Target} & \textbf{GTRANS-GW} & \textbf{GTRANS-EGW} & \textbf{Pooled NS} & \textbf{PGTRANS-GW} & \textbf{PGTRANS-EGW} \\
\midrule
Reddit-B & \textsc{IMDB-B} & 76.30 $\pm$ 2.35 & \textbf{76.80 $\pm$ 1.52} & 74.30 $\pm$ 2.52 & 74.65 $\pm$ 2.64 & 75.15 $\pm$ 3.03 \\
COLLAB   & \textsc{IMDB-B} & 76.25 $\pm$ 2.06 & \textbf{77.50 $\pm$ 2.13} & 74.15 $\pm$ 2.26 & 73.90 $\pm$ 2.34 & 74.75 $\pm$ 1.89 \\
Reddit-B & \textsc{IMDB-M} & 49.10 $\pm$ 1.33 & \textbf{51.27 $\pm$ 1.98} & 46.50 $\pm$ 3.13 & 48.10 $\pm$ 3.30 & 48.67 $\pm$ 3.04 \\
COLLAB   & \textsc{IMDB-M} & \textbf{50.47 $\pm$ 1.42} & 50.23 $\pm$ 0.92 & 47.70 $\pm$ 2.86 & 49.60 $\pm$ 2.85 & 48.97 $\pm$ 1.79 \\
D\&D     & \textsc{Proteins} & \textbf{69.33 $\pm$ 2.55} & 68.52 $\pm$ 1.59 & 66.59 $\pm$ 2.19 & 67.80 $\pm$ 2.73 & 67.53 $\pm$ 2.37 \\
\bottomrule
\end{tabular}
}
\end{table}

\subsection{Application to Link Prediction }

\label{sec:linkpred}

Existing literature applied graphon estimation to the link prediction task \citep{zhang2017estimating, qin_iterative_2021}, where the goal is to predict missing or future connections between nodes in a network.

\subsubsection{Transfer from a Single Source Network}

\textbf{Datasets. } We evaluate our method on five real-world social networks spanning diverse domains: dolphins, karate, football, firm, and wiki-vote. These datasets cover different interaction types, including animal social behavior (dolphins), human relationships and organizational affiliations (karate, firm), and communication or membership networks (football, wiki-vote). 
Table~\ref{tab:graph-stats-link} summarizes their key statistics. The number of nodes ranges from 33 (firm) to 2,375 (wiki-vote), and the number of edges from 78 (karate) to 16,717 (wiki-vote). The average degree varies accordingly, from 4.59 (karate) to 14.07 (wiki-vote). Network density also differs widely, with firm being relatively dense (0.1723) while wiki-vote is much sparser (0.0074). 
We select wiki-vote as the source network because it is the largest. Link prediction is then performed on each of the remaining datasets treated as targets, with results averaged over 50 random seeds.

\begin{table}[H]
\centering
\caption{Basic statistics of the real-world datasets used in graphon estimation.}
\label{tab:graph-stats-link}
\setlength{\tabcolsep}{6pt}
\begin{tabular}{lrrrrr}
\toprule
\textbf{Dataset} & \textbf{\#Nodes} & \textbf{\#Edges} & \textbf{Avg.\ Deg.} & \textbf{Density} \\
\midrule
dolphins   & 62    & 159    & 5.13  & 0.0841  \\
karate     & 34    & 78     & 4.59  & 0.1390  \\
football   & 115   & 613    & 10.66 & 0.0935  \\
firm       & 33    & 91     & 5.52  & 0.1723  \\
wiki-vote & 889   & 2914   & 6.56  & 0.0074  \\
\bottomrule
\end{tabular}
\end{table}

\paragraph{Experimental Setup.} We simulate a realistic link prediction task using a masking-based evaluation strategy following \cite{zhang2017estimating}. Specifically, we randomly mask a subset of edges in the upper triangular portion of the target adjacency matrix to form a test set. Let \(M \in \{0,1\}^{n \times n}\) be a masking matrix with \(M_{ij} \sim \text{Bernoulli}(1 - p)\), where \(p\) is the test ratio. The observed matrix is \(\bA^{\text{mask}}_{ij} = M_{ij} \bA_{t,ij}\), meaning each edge is observed with probability \(1 - p\). We set \(p = 0.1\) unless otherwise specified. Both \textbf{$\our$-GW} and \textbf{$\our$-EGW} are applied to \(\bA^{\text{mask}}\) to estimate \(\hat{\bP}_t\), and each experiment is repeated 50 times with different seeds to report averaged performance.

\textbf{Evaluation.}
To evaluate the performance of link prediction, we  computed the area under the receiver operating characteristic curve (AUC). Evaluation is based on the masked (unobserved) entries where $M_{ij} = 0$, using the original adjacency $\bA_t$ as ground truth. For a threshold $t > 0$, the false positive rate (FPR) and true positive rate (TPR) are defined as:
$$
r_{\text{FP}}(t) = \frac{\sum_{ij} \textbf{1} \left( \hat{\bP}_{t,ij}  > t,\ A^{\text{true}}_{ij} = 0,\ M_{ij} = 0 \right)}{\sum_{ij} \textbf{1} \left( \bA_{t,ij} = 0,\ M_{ij} = 0 \right)}
$$
$$
r_{\text{TP}}(t) = \frac{\sum_{ij} \textbf{1} \left( \hat{\bP}_{t,ij} > t,\ \bA_{t,ij} = 1,\ M_{ij} = 0 \right)}{\sum_{ij} \textbf{1} \left( \bA_{t,ij} = 1,\ M_{ij} = 0 \right)}
$$
These quantities are used to construct the ROC curve by varying the threshold $t$, and the AUC is computed as the area under this curve.

\paragraph{Results.} 

We evaluate transfer performance using Wiki-Vote as the source network ($n_s = 889$). Each of the remaining datasets (dolphins, firm, football, karate) is treated as the target graph. Results are averaged over 50 random seeds. Table~\ref{tab:auc-results} reports the link prediction AUC (mean $\pm$ standard deviation, multiplied by 100). Our method consistently matches or outperforms baselines across datasets.  Overall, these results highlight the effectiveness and robustness of transfer-based graphon estimation, particularly in smaller or noisier networks where leveraging external structure is beneficial.

\begin{table}[H]
\centering
\small
\caption{Link prediction AUC scores (mean $\pm$ std, $\times 100$). Best result per dataset is \textbf{bolded}.}
\label{tab:auc-results}
\begin{tabular}{lcccccc}
\toprule
\textbf{Dataset} & GTRANS-GW & GTRANS-EGW & NS & USVT & SAS & ICE \\
\midrule
Dolphins & 75.96$\pm$8.53\phantom{0} & \textbf{76.26$\pm$8.54}\phantom{0} & 70.60$\pm$9.01\phantom{0} & 72.36$\pm$10.16 & 50.66$\pm$6.35\phantom{0} & 72.77$\pm$9.59\phantom{0} \\
Firm     & \textbf{71.31$\pm$12.18} & 71.26$\pm$12.06 & 66.32$\pm$12.27 & 65.56$\pm$12.25 & 54.90$\pm$7.59\phantom{0} & 67.60$\pm$12.24 \\
Football & 86.64$\pm$3.66\phantom{0} & 86.74$\pm$3.72\phantom{0} & \textbf{86.75$\pm$3.49\phantom{0}} & 85.32$\pm$3.56\phantom{0} & 44.56$\pm$7.38\phantom{0} & 81.83$\pm$4.60\phantom{0} \\
Karate   & 82.47$\pm$10.36 & \textbf{82.53$\pm$10.46} & 76.74$\pm$12.01 & 71.86$\pm$15.20 & 63.88$\pm$11.15 & 77.20$\pm$10.82 \\
\bottomrule
\end{tabular}
\end{table}


\subsubsection{Intra- vs. Inter-Dataset Transfer}

\textbf{Datasets. } 
We evaluate intra- and inter-dataset transfer on IMDB-BINARY. For each target dataset, the smallest graph ($>20$ edges) is used as the target. As sources, we consider (i) the largest graph from the same dataset, (ii) the structurally closest graph within the same dataset (smallest GW distance), and (iii) the largest graphs from Reddit-BINARY or COLLAB.  

\textbf{Experimental Setup. } 
Graphons are estimated using GTRANS-GW  and compared against NS, ICE, SAS, and USVT.  

\textbf{Results. } 
Table~\ref{tab:intra_inter_transfer} reports AUC scores. Transfer from the most structurally similar IMDB-B graph achieves the best performance (AUC 0.96), surpassing both the largest IMDB-B source and all inter-dataset sources. This highlights that transfer is most effective when the source and target share high structural similarity (small GW distance).  

\vspace{-6pt}

\begin{table}[H]
\centering
\small  %
\caption{Link prediction AUC results on target graphs with various transfer sources. 
For brevity, standard deviations are omitted.}
\label{tab:intra_inter_transfer}
\setlength{\tabcolsep}{5pt}
\begin{tabular}{l|l|c|c|c|c|c|c|c|c}
\toprule
\textbf{Source} & \textbf{Target} & $n_s$ & $n_t$ & GW & GTRANS-GW & NS & ICE & SAS & USVT \\
\midrule
IMDB-B (largest)     & IMDB-B & 136 & 12 & 0.39 & 0.94 & 0.91 & 0.79 & 0.59 & 0.75 \\
IMDB-B (GW smaller)  & IMDB-B &  84 & 12 & 0.31 & \textbf{0.96} & 0.91 & 0.79 & 0.59 & 0.75 \\
Reddit-B             & IMDB-B &  41 & 12 & 0.40 & 0.94 & 0.91 & 0.79 & 0.59 & 0.75 \\
COLLAB               & IMDB-B & 191 & 12 & 0.43 & 0.94 & 0.91 & 0.79 & 0.59 & 0.75 \\
IMDB-M               & IMDB-B &  89 & 12 & 0.40 & 0.95 & 0.91 & 0.79 & 0.59 & 0.75 \\
\bottomrule
\end{tabular}
\end{table}


\section{Additional Details}

\subsection{Graphon functions}
\label{sec:ap_graphon_fun}

We implement 10 distinct graphon structures (see Table~\ref{tab:graphon-functions}) ranging from simple bilinear forms to highly structured oscillatory and piecewise functions.
\begin{table}[H]
\centering
\caption{Graphon functions implemented.}
\label{tab:graphon-functions}
\begin{tabular}{c|l}
\toprule
\textbf{ID} & \textbf{Graphon Function} \\
\midrule
1  & $\exp(-x^{0.7} - y^{0.7})$ \\
2  & $\exp(-\max(x, y)^{0.75})$ \\
3  & $\exp\left(-0.5 \left[\min(x, y) + \sqrt{x} + \sqrt{y}\right]\right)$ \\
4  & $\frac{1}{1 + \exp(-[\max(x, y)^2 + \min(x, y)^4])}$ \\
5  & $|x - y|$ \\
6  & $\frac{x y}{2}$ \\
7  & $\frac{x^2 + y^2}{3} \cos\left(\frac{1}{x^2 + y^2}\right) + 0.15$ \\
8  & $\frac{x + y}{3} \cos\left(\frac{1}{x + y}\right) + 0.15$ \\
9  & $\frac{\sin(10\pi(x + y - 5))}{5} + 0.5$ \\
10 & $\frac{1}{4} \min\left(\exp\left(\sin\left(\frac{6}{(1-x)^2 + y^2}\right)\right), \exp\left(\sin\left(\frac{6}{x^2 + (1-y)^2}\right)\right)\right)$ \\
\bottomrule
\end{tabular}
\end{table}

\subsection{Neighborhood Smoothing Details }
\label{sec:ns_details}

For effective neighborhood smoothing, we must identify a neighborhood $\mathcal{N}_i$ for each node $i$ that contains only nodes with approximately homogeneous distribution.
Here, the neighborhood $\mathcal{N}_i$ in this context refers to nodes that are \emph{close in the underlying latent space} (i.e., have similar latent positions), not nodes that are connected in the observed graph. 
Mathematically,   node $i'$ belongs to node $i$'s neighborhood if their graphon values are sufficiently close:
$\|f(u_i,\cdot) -f(u_{\ii},\cdot)\|_2 \leq \eta$,
where $\|\cdot\|_2$ denotes the $L_2$ norm with inner product
$<f,g>=\int_0^1 f(x)g(x)dx$, and $\eta$ is a tolerance parameter. 
Empirically, this can be estimated by $\sum_{j=1}^n\|\hat{p}_{ij}-\hat{p}_{\ii j}\|_2^2$. 
 As proved in \cite{zhang2017estimating}, under mild conditions, this quantity is bounded by:
$\Delta_{ij}^2 =  \max_{k \neq i, \ii}\frac{1}{n}|\sum_{j=1}^n (\bA_{ij} - \bA_{\ii j})
\bA_{kj}|.$
This bound enables efficient neighborhood definition:
   $ \neib_i = \{ \ii \neq i: \Delta_{i\ii} \leq  \tau \},$
where 
$\tau$ is a predefined threshold.  
Once neighborhoods $\neib_i$ and $\neib_j$ are identified,  nodes within each neighborhood are treated as replicates. 
Thus,
$\hat{\bP}_{ij}$ can be estimated by  neighbourhood smoothing, i.e., 
    $\hat{\bP}_{ij} = \frac{1}{2}(\frac{\sum_{\ii \in \neib_i} \bA_{\ii j }}{|\neib_i|} + 
    \frac{\sum_{\jj \in \neib_j} \bA_{i \jj }}{|\neib_j|}),$ where $|\neib_i|$ represents the number of nodes in the neighborhood $\neib_i$. 
   This averages two empirical proportions: edges between node $j$ and nodes in $\mathcal{N}_i$, and edges between node $i$ and nodes in $\mathcal{N}_j$.

\subsection{Algorithm}

\subsubsection{Algorithm: $\our$}
\begin{algorithm}
\caption{$\our$: Transfer Learning for Graphon Estimation}
\label{alg}
\begin{algorithmic}[1]
\REQUIRE Source adjacency matrix $\mathbf{A}_s$, target adjacency matrix $\mathbf{A}_t$, a threshold $\epsilon$.
\ENSURE Target graphon estimator $\hat{\mathbf{P}}_t$
\STATE \textbf{Step 1: Initial Graphon Estimation}
\STATE Apply neighborhood smoothing to obtain two initial estimators for both the source and target graph: $\hat{\mathbf{P}}_s^{ini} \leftarrow NS(A_s)$, $\hat{\mathbf{P}}_t^{ini} \leftarrow NS(A_t)$.
\STATE \textbf{Step 2: Transferring Step}
\STATE Compute the optimal transport plan $\hat{\pi}$ and the optimal transportation distance $d$ between $\hat{\mathbf{P}}_s^{ini}$ and $\hat{\mathbf{P}}_t^{ini}$. 
\STATE  Apply column normalization to $\hat{\pi}$, obtaining $\tilde{\pi}$
\STATE Transfer source graphon estimator to target domain: $\hat{\mathbf{P}}_t^{trans} \leftarrow \tilde{\pi}^{T} \hat{\mathbf{P}}_s^{ini}\tilde{\pi}$.
\STATE Refine the transferred estimator  by applying neighborhood smoothing  to $\hat{\mathbf{P}}_t^{trans}$: 
$\hat{\mathbf{P}}_t^{trans2} \leftarrow NS(\hat{\mathbf{P}}_t^{trans})$.
\IF{$d > \delta$,}
  \STATE \textbf{Step 3: Debiasing Step:}
    \STATE Compute residual matrix: $\mathbf{R}_t \leftarrow \hat{\mathbf{P}}^{ini}_t - \hat{\mathbf{P}}_t^{trans2}$
    \STATE Apply neighborhood smoothing to residual, obtaining estimator $\hat{\mathbf{P}}_t^{res}$.
    \STATE Combine transferred estimator with smoothed residual: $\hat{\mathbf{P}}_t \leftarrow \hat{\mathbf{P}}_t^{trans2} + \hat{\mathbf{P}}_t^{res}$
\ELSE
    \STATE $\hat{\mathbf{P}}_t \leftarrow \hat{\mathbf{P}}_t^{trans2}$
\ENDIF
\RETURN $\hat{\mathbf{P}}_t$
\end{algorithmic}
\end{algorithm}

\subsubsection{Algorithm: Network Cross-Validation by Edge Sampling}

\begin{algorithm}[H]
\caption{Network Cross-Validation by Edge Sampling for Threshold Selection}
\label{alg:cv-threshold-selection}
\begin{algorithmic}[1]
\REQUIRE Adjacency matrices $\mathbf{A}_s, \mathbf{A}_t$, true probability matrix $\mathbf{P}_t$, threshold candidates $\{\epsilon_1, \dots, \epsilon_L\}$, number of folds $K$, loss function $\mathcal{L}(\cdot, \cdot)$
\ENSURE Selected threshold $\hat{\epsilon}$, final loss, estimated graphon $\hat{\mathbf{P}}_t$

\STATE \textbf{Step 1: Edge Sampling} \\
Randomly partition the edges of $\mathbf{A}_t$ into $K$ folds, ensuring that each fold is disjoint and collectively covers the entire set of edges.

\FOR{$k = 1$ to $K$}
    \STATE Mask edges in the $k$-th fold to form the incomplete adjacency matrix $\mathbf{A}_t^{(k)}$.
    \STATE Perform matrix completion as suggested by \cite{li_network_2020}, leveraging low-rank and smoothness assumptions to recover the completed adjacency matrix.
    \STATE Apply $\our$ with each threshold $\epsilon_l$ and evaluate the loss  $\mathcal{L}^{(k)}(\epsilon_l)$ on the held-out set $\Omega^{(k)}_{\mathrm{c}}$
\ENDFOR

\STATE \textbf{Step 2: Select Optimal Threshold} \\
\STATE Average the loss across all folds:
$\bar{\mathcal{L}}(\epsilon_l) = \frac{1}{K} \sum_{k=1}^K \mathcal{L}^{(k)}(\epsilon_l)
$
\STATE Select the threshold that minimizes the average loss:
$
\hat{\epsilon} = \arg\min_{\epsilon_l} \bar{\mathcal{L}}(\epsilon_l)
$
\STATE \textbf{Return} $\hat{\epsilon}$.

\end{algorithmic}
\end{algorithm}

\subsection{Graph Augmentation Model Training Setup.} 

We follow a modified version of the training configuration from \cite{han2022g}. Specifically, we train a GIN model for 200 epochs using the Adam optimizer with a fixed learning rate of 0.01. The mini-batch size is set to 128, and the hidden dimension is 64. Validation loss is monitored throughout training, and test accuracy is reported at the epoch with the best validation performance.


\end{document}